\algnewcommand\algorithmicinput{\textbf{Input:}}
\algnewcommand\INPUT{\item[\algorithmicinput]}
\algnewcommand\algorithmicoutput{\textbf{Output:}}
\algnewcommand\OUTPUT{\item[\algorithmicoutput]}
\newtheorem{assumption}{Assumption}[section]
\renewcommand{\eqref}[1]{Eq.~(\ref{eq:#1})}
\newcommand{\figref}[1]{Figure \ref{fig:#1}}
\newcommand{\secref}[1]{Section \ref{sec:#1}}
\newcommand{\thmref}[1]{Theorem \ref{thm:#1}}
\newcommand{\lemref}[1]{Lemma \ref{lem:#1}}
\newcommand{\defref}[1]{Def.~\ref{def:#1}}
\newcommand{\appref}[1]{Appendix \ref{app:#1}}
\newcommand{\myalgref}[1]{Alg. \ref{alg:#1}}
\newcommand{\exmref}[1]{Example \ref{exm:#1}}
\newcommand{\nats}{\mathbb{N}}
\newcommand{\one}{\mathbb{I}}
\DeclareMathOperator*{\argmin}{argmin}
\newcommand{\st}{\text{ s.t. }}
\newcommand{\false}{\texttt{false}}
\newcommand{\true}{\texttt{true}}
\newcommand{\cA}{\mathcal{A}}
\newcommand{\cF}{\mathcal{F}}
\newcommand{\cH}{\mathcal{H}}
\newcommand{\cL}{\mathcal{L}}
\newcommand{\cM}{\mathcal{M}}
\newcommand{\cS}{\mathcal{S}}
\newcommand{\cT}{\mathcal{T}}
\newcommand{\cV}{\mathcal{V}}
\newcommand{\cX}{\mathcal{X}}
\newcommand{\cY}{\mathcal{Y}}
\newcommand{\Tdim}{\ensuremath{\mathrm{DFFdim}}}
\newcommand{\Ldim}{\ensuremath{\mathrm{Ldim}}}
\newcommand{\dfft}{DFFT}
\newcommand{\node}[1]{\langle #1 \rangle}
\newcommand{\hist}{H}
\newcommand{\histnum}[1]{{\hist^{#1}}}
\newcommand{\histnumi}[1]{\hist^{#1}_i}
\newcommand{\tree}{\texttt{Tr}}
\newcommand{\treenum}[1]{{\tree_{#1}}}
\newcommand{\otdmap}{\ensuremath{\mathrm{OtD}}}
\newcommand{\dtomap}{\ensuremath{\mathrm{DtO}}}
\newcommand{\myedge}[1]{(#1)}
\newcommand{\soadff}{\texttt{SOA-DFF}}
\newcommand{\asoadff}{\texttt{A-SOA-DFF}}
\newcommand{\truthvalues}{\{ \texttt{true},\texttt{false} \}}
\newcommand{\gaa}{\texttt{GAA}}
\newcommand{\hx}{\hat{x}}
\newcommand{\hy}{\hat{y}}
\newcommand{\X}{\mathcal{X}}
\newcommand{\bF}{\mathbb{F}}
\newcommand{\thist}{\hist_\bullet}
\title[Discriminative Feature Feedback with General Teacher Classes]{Discriminative Feature Feedback with General Teacher Classes}
\begin{document}
\maketitle

\begin{abstract}
  We study the theoretical properties of the interactive learning protocol \emph{Discriminative Feature Feedback} (DFF) \citep{DasguptaDeRoSa18}. The DFF learning protocol uses feedback in the form of \emph{discriminative feature explanations}. We provide the first systematic study of DFF in a general framework that is comparable to that of classical protocols such as supervised learning and online learning. We study the optimal mistake bound of DFF in the realizable and the non-realizable settings, and obtain novel structural results, as well as insights into the differences between Online Learning and settings with richer feedback such as DFF. We characterize the mistake bound in the realizable setting using a new notion of dimension. In the non-realizable setting, we provide a mistake upper bound and show that it cannot be improved in general. Our results show that unlike Online Learning, in DFF the realizable dimension is insufficient to characterize the optimal non-realizable mistake bound or the existence of no-regret algorithms.
\end{abstract}

\begin{keywords}
discriminative feature feedback, interactive learning, mistake bound, teacher class
\end{keywords}

\section{Introduction}

In this work, we study the theoretical properties of the interactive learning protocol \emph{Discriminative Feature Feedback} (DFF) \citep{DasguptaDeRoSa18}. Classical learning protocols, such as Supervised Learning \citep{VC71}, Online Learning \citep{Littlestone88} and Active Learning \citep{MccallumNi98}, involve interaction between the learner and the environment that is based only on examples and labels. However, learning in real-world environments can involve many other forms of interactions that provide rich feedback and can speed up learning. The theoretical study of protocols with rich feedback can shed light on their properties.

The DFF learning protocol uses feedback in the form of \emph{discriminative feature explanations}. 
In this setting, the learner provides an example as an explanation for each of its predictions, and the teacher provides the correct label and a feature explanation whenever the learner's prediction is incorrect. As an illustrative example, suppose the learner classifies a given image of an animal as a zebra, and provides as explanation an image of a previously seen zebra. Then, the teacher responds that the animal is actually a horse, and explains that unlike the example provided by the learner, the animal in the new image does not have stripes. Thus, the provided feature discriminates between the two examples and explains why they have different labels. This type of feedback has been successfully used in practical applications \citep[e.g.,][]{BWBSWPB10, ZCK15, LiangZoYu20}.

The theory of DFF has been studied so far \citep{DasguptaDeRoSa18,DasguptaSa20,Sabato23} under a specific component model that assumes that the domain of examples is covered by a small number of subsets of examples that have the same label, and that each pair of subsets with different labels can be discriminated by a single feature. Mistake upper bounds and lower bounds have been obtained for this model. However, DFF has not been studied so far in more generality. 

In this work, we provide the first systematic study of DFF in a general framework that allows any class of teachers. Here, a teacher defines both the true labeling function and the features that would be provided for each possible pair of a new example and a previous example that is provided as an explanation. Studying classes of teachers provides DFF with a framework comparable to that of classical protocols such as Supervised Learning and Online Learning. We study the optimal mistake bound of DFF in the realizable and the non-realizable settings, and obtain novel structural results, as well as insights into the difference between Online Learning and settings with richer feedback such as DFF.

\paragraph{Contributions} We define the notion of a general teacher class for DFF, and present a new relevant dimension of a teacher class, called \Tdim\ (\secref{realizable}). We show that this dimension characterizes the optimal mistake bound of a teacher class in the realizable setting (\thmref{tdim}), using a new Standard Optimal Algorithm for DFF. We demonstrate the use of the \Tdim\ by analyzing a less restrictive version of the component model of \cite{DasguptaDeRoSa18}. We then study the relationship between Online Learning and DFF (\secref{online}), by providing a two-way mapping between the two settings that preserves important properties (\thmref{otd-dto}, \thmref{ldimequaltdim}) and allows directly comparing the mistake bounds of pairs of mapped problems. We use this to show a strong separation between Online Learning and DFF: We show a teacher class with a \Tdim\ of $1$, while the Littlestone dimension of its Online Learning counterpart is infinite (\thmref{separation}).

Lastly, we study the \emph{non-realizable setting}. We provide a mistake upper bound (\thmref{upper}) via a simple standard algorithm that employs an optimal algorithm for the realizable setting as a subroutine. This algorithm is quite general, and provides a mistake upper bound for a wide range of natural interactive protocols (\thmref{informal}). We further show that this upper bound cannot be improved for DFF with general teacher classes (\thmref{lowerbound}).
To prove this lower bound, we use the idea of a \emph{secret-sharing scheme}, a well-known tool used in cryptography \citep{DBLP:journals/cacm/Shamir79}. We construct a teacher class using a set of such schemes that depend on the provided explanation.
From this lower bound, we conclude that there are no general no-regret algorithms for DFF problems with a finite DFF dimension. Nonetheless, we show that no-regret algorithms do exist for some DFF teacher classes, and conclude that unlike Online Learning, the optimal non-realizable mistake bound cannot be fully characterized using the realizable dimension. This raises an intriguing open question regarding the relationship between the properties of an interactive learning problem and its tolerance to teacher mistakes. 

Our analysis and results for the DFF setting provide a glimpse into the intricate landscape of learning using rich interactive learning protocols, which we hope will inspire further research. 

\section{Related work}

Interactive learning protocols are widely used in practical applications \citep[see, e.g., ][]{MosqueiraEtAl23}. For instance, \cite{teso2019explanatory} use  machine-generated explanations and corrections by users. Feature feedback was proposed as an aid for learning as early as \cite{CD90}. Similar ideas have been applied in various applications \citep[e.g.,][]{RMJ05,DMM08,S11a,MSCPY18}. Learning with explanations has also been studied for neural networks \citep{schramowski2020making} and Large Language Models \citep{lampinen-etal-2022-language}.

The formal DFF protocol was first defined in \citet{DasguptaDeRoSa18}. They further defined the component model and showed an algorithm and a mistake bound for this model in the realizable setting. Subsequently, \citet{DasguptaSa20} introduced a non-realizable version of DFF, in which the teacher might not always adhere to the protocol. They presented a robust algorithm for the component model, along with a mistake upper bound that depends on the number of rounds in which the teacher deviates from the protocol. An improved algorithm for the non-realizable component model was presented in \citet{Sabato23}.
Other types of feature feedback have been theoretically studied in \citet{PD17,VisotskyAtCh19}. More generally, \cite{HannekeKaMoVe22} study an interactive setting in which the learner can make arbitrary binary-valued queries, and \cite{YadavMoCh24} study auditing with explanations.

We note that the term ``teacher'' is also used in the Machine Teaching literature \citep[see, e.g.,][]{zilles2008teaching,doliwa2014recursive}. However, our use of this term is unrelated to the Machine Teaching paradigm. 

\section{Preliminaries}

For an integer $n$, denote $[n] = \{1,\ldots,n\}$.  
We assume a domain of examples $\cX$ and a finite domain of labels $\cY$. In addition, we consider Boolean \emph{features}, which are defined as functions $\phi:\cX \rightarrow \truthvalues$. For convenience, we sometimes use $1$ to mean $\true$ and $0$ to mean $\false$. Thus, we may say that $\phi(x)$ holds or that $\phi(x) = 1$. Denote the negation of $\phi$ by $\neg \phi \equiv 1-\phi$. We denote the set of available Boolean features for a given learning problem by $\Phi$. We generally assume that $\Phi$ is closed under negation.  
In various contexts, we use $\bot$ to represent a null element. We assume $\bot \notin \cX, \cY, \Phi$. We use `$\cdot$' to represent an arbitrary value. For instance, $(x,\cdot)$ represents a pair with $x$ as the first element and an arbitrary second element.

The Discriminative Feature Feedback (DFF) protocol \citep{DasguptaDeRoSa18} is defined as follows. At every round $t$,

\begin{itemize}
\item A new instance $x_t$ arrives.
\item The learning algorithm provides a predicted label $\hat{y}_t$, and an instance $\hat{x}_t$ that was previously observed with that label. This instance serves as the explanation for the predicted  label: ``$x_t$ is predicted to have label $\hat{y}_t$ because $\hat{x}_t$ was labeled $\hat{y}_t$''.
\item If the prediction is correct, no additional feedback is obtained from the teacher.
\item If the prediction is incorrect, the teacher provides the correct label of $x_t$, denoted $y_t$, and a feature $\phi \in \Phi$ that explains why $x_t$ does not have the same label as $\hat{x}_t$. $\phi$ is satisfied by $x_t$ but not by $\hat{x}_t$. 
\end{itemize}
It is further assumed that at least one fully labeled example is provided to the learner before the first round, thus providing at least one possible response for the first prediction round. We call the fully labeled examples given to the learner in advance the \emph{history}, usually denoted $\hist \subseteq \cX \times \cY$. 

\cite{DasguptaDeRoSa18} studied a specific \emph{component model} for teachers,
which assumes that the true teacher is consistent with some unknown cover of the domain. Each subset in the cover is called a \emph{component} and all the examples in a given subset have the same label. It is assumed that the teacher always provides a ``discriminative'' feature feedback $\phi_t$: a feature that is satisfied by all the examples in the component of $x$ and is not satisfied by all the examples in the component of $\hat{x}$. 
In this work, we study a more general version of DFF, in which we allow any \emph{teacher class}, an analog to the hypothesis classes of standard supervised learning analysis. The teacher class defines a given DFF problem. The component model is one such teacher class.

We first provide a general definition of a \emph{teacher}. In the realizable setting, the true teacher determines the feedback that would be provided to the learner in each round. Thus, the teacher determines both the labeling function and the feature feedback function in any possible interaction with the learner.

\begin{definition}[Teacher]
  A \emph{teacher} $T$ over $\cX, \cY, \Phi$ is a pair $(\ell, \psi)$, where $\ell: \cX \rightarrow \cY$ is a labeling function and $\psi: \cX \times \cX \rightarrow \Phi \cup \{\bot\}$ is a feature feedback function, such that for all $x, \hat{x} \in \cX$, if $\ell(x) \neq \ell(\hat{x})$ then $\phi := \psi(x,\hat{x}) \in \Phi$. In addition, $\phi$ satisfies $x$ and does not satisfy $\hat{x}$. 
\end{definition}

A \emph{teacher class} $\cT$ over $\cX,\cY, \Phi$ is then defined as a set of teachers over $\cX,\cY, \Phi$. We usually assume fixed $\cX, \cY, \Phi$, and omit the expression ``over $\cX, \cY, \Phi$'' when clear from context.

The number of mistakes made by a DFF algorithm $\cA$ in a specific run is the number of times its label prediction was incorrect. Given a teacher class $\cT$ and a history $\hist$, the worst-case mistake bound of $\cA$ over all possible input sequences, assuming feedback that is consistent with some teacher in $\cT$, is denoted $M(\cA, \cT, \hist)$. 

Under this framework, the component model of \cite{DasguptaDeRoSa18} is a teacher class $\cT_m$ for $m \in \nats$ that includes all teachers that are consistent with a component model that includes $m$ components. The analyses in \cite{DasguptaDeRoSa18,DasguptaSa20, Sabato23} provide bounds on $M(\cA, \cT_m, \hist)$ that depend on $m$ for specific algorithms $\cA$. In this work, we study the optimal mistake bound for general teacher classes.

\section{The optimal mistake bound for realizable DFF}\label{sec:realizable}

The VC-dimension \citep{VC71} maps each hypothesis class to an integer that characterizes the sample complexity of the optimal algorithm in the PAC setting. The Littlestone dimension \citep{Littlestone88} provides the optimal mistake bound achievable in the Online Learning setting, using a shattered tree construction as a witness. We define the Discriminative Feature Feedback Dimension (\Tdim) for a given teacher class, using a different tree construction, \emph{DFF tree} (\dfft), that aligns with the DFF protocol. We show that the dimension witnessed by a shattered \dfft\ characterizes the optimal mistake bound for DFF learning. A main difference between a Littlestone tree and the \dfft\ that we define is that in a Littlestone tree, the responses of the algorithm do no need to be directly encoded, since they do not affect the teacher's responses. This is not the case in DFF, since a different explanation by the algorithm may lead to a different feature response by the teacher. In particular, this means that unlike a Littlestone tree, in a \dfft\ multiple paths from the root to a leaf can be consistent with the same teacher.  

A \dfft\ (see \figref{shattered_dfft_tdim_3} for an illustration, and \figref{relaxed-component-shattered-dfft} below for a concrete example of a shattered DFFT induced by a specific teacher class.) is a rooted tree that represents options for interaction between the environment and the learner.
In a \dfft, each node
represents an action by the environment, which includes the feedback from the
teacher on the last prediction of the learner, as well as the next example to be presented. The root node specifies only the first example to be presented, without any feedback.
Each node is of the form $\node{y,\phi,x}$, where
$y \in \cY\cup \{\bot\}$, $\phi \in \Phi \cup\{\bot\}$ and
$x \in \cX \cup \{\bot\}$. Here, $y$ and $\phi$ represent the teacher feedback
for the latest prediction of the algorithm, and $x$ represents the next example. We use $y=\bot$ if this is the first round and there was no previous
prediction. We use $\phi = \bot$ if this is the first round, or if the last
prediction of the algorithm was correct. We use $x = \bot$ if this is the last round so no example comes next.

Each edge between a parent node and a child node is labeled by the prediction and the explanation example that the algorithm selected as a response to the example in the parent node. Edge labels are thus pairs $(\hat{x}, \hat{y}) \in \cX \times \cY$, where $\hat{y}$ is the prediction for the example in the parent node and $\hat{x}$ is the algorithm's explanation for this prediction.

  \begin{figure}[ht]
    \centering

    \resizebox{0.9\textwidth}{!}{
\begin{tikzpicture}[
  node distance=1.2cm and 2cm,
  every node/.style={draw, rounded corners, align=center},
  ->, >=Stealth
]

\node (x1) {$\node{\bot, \bot, x_1}$};

\node (x2) [below=of x1] {$\node{y_1, \phi_1, x_2}$};

\node (x3_0) [below left=of x2] {$\node{y^{(1)}_2, \phi^{(1)}_2, x^{(1)}_3}$};
\node (x3_1) [below right=of x2] {$\node{y^{(2)}_2, \phi^{(2)}_2, x^{(2)}_3}$};

\node (leaf1_0) [below left=of x3_0, xshift=1.5cm] {$\node{y^{(1)}_3, \phi^{(1)}_3, \bot}$};
\node (leaf2_0) [below=of x3_0] {$\node{y^{(2)}_3, \phi^{(2)}_3, \bot}$};
\node (leaf3_0) [below right=of x3_0, xshift=-1.5cm] {$\node{y^{(3)}_3, \phi^{(3)}_3, \bot}$};

\node (leaf1_1) [below left=of x3_1, xshift=1.5cm] {$\node{y^{(4)}_3, \phi^{(4)}_3, \bot}$};
\node (leaf2_1) [below=of x3_1] {$\node{y^{(5)}_3, \phi^{(5)}_3, \bot}$};
\node (leaf3_1) [below right=of x3_1, xshift=-1.5cm] {$\node{y^{(6)}_3, \phi^{(6)}_3, \bot}$};

\draw (x1) -- node[midway, right, draw=none] {$\myedge{x_0,y_0}$} (x2);

\draw (x2) -- node[midway, right, xshift=15pt, draw=none] {$\myedge{x_0,y_0}$} (x3_0);
\draw (x2) -- node[midway, right, xshift=10pt, draw=none] {$\myedge{x_1,y_1}$} (x3_1);

\draw (x3_0) to[out=210, in=60] node[midway, right,xshift=2.5pt, yshift = -5pt, draw=none] {$\myedge{x_0,y_0}$} (leaf1_0);
\draw (x3_0) -- node[midway, right,xshift=-2pt, draw=none] {$\myedge{x_1,y_1}$} (leaf2_0);

\draw (x3_0) to[out=-30, in=120] node[midway, right,xshift=2.5pt, yshift = 4pt, draw=none] {$\myedge{x_2,y^{(1)}_2}$} (leaf3_0);

\draw (x3_1) to[out=210, in=60] node[midway, right,xshift=2.5pt, yshift = -4pt, draw=none]{$\myedge{x_0,y_0}$} (leaf1_1);
\draw (x3_1) -- node[midway, right,xshift=-2pt, draw=none] {$\myedge{x_1,y_1}$} (leaf2_1);
\draw (x3_1)  to[out=-30, in=120] node[midway, right,xshift=7pt, yshift = 2pt, draw=none] {$\myedge{x_2,y^{(2)}_2}$} (leaf3_1);

\end{tikzpicture}
}
\caption{A shattered \dfft\ with history $H = \{(x_0, y_0)\}$ and height $3$.}
    \label{fig:shattered_dfft_tdim_3}

\end{figure}

\begin{definition}[DFF tree]\label{def:dfft} A \emph{DFF tree} (\dfft) over $\cX, \cY, \Phi$ is a rooted tree in which the following hold.
  \begin{enumerate}
    \item All nodes are of the form $\node{y,\phi,x}$ for $y \in \cY\cup \{\bot\}$, $\phi \in \Phi \cup\{\bot\}$ and
      $x \in \cX \cup \{\bot\}$.
    \item In the root node, $y = \phi = \bot$.
    \item A node has $x = \bot$ if and only if the node is a leaf.
    \item Each edge is labeled by a pair $(\hat{x}, \hat{y}) \in \cX \times \cY$.
          \item For every non-root node $\node{y,\phi, x}$ with an incoming edge $(\hx, \hy)$, if $y \neq \hy$ then $\phi \neq \bot$.
    
  \end{enumerate}
\end{definition}

A \dfft\ is not necessarily consistent with a given teacher class or history. 
A \emph{shattered \dfft}, which we define below, will allow us to define the DFF dimension of a given teacher class with a given history.
Denote parent and child nodes $v_1,v_2$ that are connected by an edge labeled $e$ by $v_1 \stackrel{e}{\longrightarrow} v_2$. 
We say that a path from the root to a node in a given \dfft\ is \emph{consistent} with a given teacher $T = (\ell,\psi)$ , if for every pair of parent and child nodes in the path of the form $\node{\cdot, \cdot, x} \stackrel{(\hx, \hy)}{\relbar\joinrel\relbar\joinrel\longrightarrow} \node{y, \phi, \cdot}$, we have $\ell(x) = y$ and if $y \neq \hat{y}$, $\psi(x,\hx) = \phi$. We call every such pair $(x,y)$ on a given path a \emph{labeled example} in the path. 

Since a DFF algorithm can only provide examples that were previously observed, the definition of a shattered \dfft\ takes into account the provided history, which is a non-empty set $\hist\subseteq \cX \times \cY$ of labeled examples.
We say that a teacher $T = (\ell, \cdot)$ is consistent with a history $\hist$ if for all $(x, y) \in \hist, \ell(x) = y$. We say that a teacher class $\cT$ is consistent with a history $\hist$ (and vice versa) if there is at least one teacher in $\cT$ which is consistent with $\hist$. Denote by $\cT_H$ the set of teachers that are consistent with $\hist$ in $\cT$. We can now define the notion of a shattered \dfft; See illustration in \figref{shattered_dfft_tdim_3}.

  \begin{definition}[Shattered \dfft]\label{def:sdfft}
    Let $\cT$ be a class of teachers over $\cX, \cY, \Phi$ and let $\hist \subseteq \cX \times \cY$ be a (non-empty) history that is consistent with $\cT$. 
    A \dfft\ over $\cX, \cY, \Phi$ is shattered by $\cT$ and $\hist$ if the following conditions hold.
    
    \begin{enumerate}

    \item \label{sdfft:label} For any node $\node{y, \phi, x}$ with an incoming edge $(\hx, \hy)$,      $y \neq \hy$. 
      \item \label{sdfft:edges} The outgoing edges of each non-leaf node $v$ are labeled by exactly all the pairs $(x,y)$ that satisfy at least one of the following conditions:
    \begin{itemize}
    \item $(x,y) \in \hist$, or
    \item $(x,y)$ is a labeled example in the path from the root of the tree to $v$.
    \end{itemize}
    
        \item \label{sdfft:consistent} Every path from the root to a leaf in the tree is consistent with at least one teacher from $\cT_\hist$.
        \item \label{sdfft:complete} The tree is complete; that is, all paths from the root to a leaf are of the same length.
    \end{enumerate}
  \end{definition}

The height of the tree is the number of edges in any path from the root to a leaf, which is also the number of examples in nodes along this path. We can now define the DFF dimension, \Tdim.

\begin{definition}[\Tdim]
  Let $\cT$ be a teacher class and let $\hist \subseteq \cX\times \cY$ be a (non-empty) history that is consistent with $\cT$. 
   $\Tdim(\cT, \hist)$ is the maximal integer $d$ such that there exists a \dfft\ of height $d$ that is shattered by $\cT$ and $\hist$.
\end{definition}

The following theorem shows that the \Tdim\ of a teacher class $\cT$ with a given history $\hist$ characterizes the optimal mistake bound for an algorithm for these $\cT$ and $\hist$. 
\begin{algorithm}[b]
\caption{Standard Optimal Algorithm for Discriminative Feature Feedback (\soadff)}
\begin{algorithmic}[1]
\Procedure{\soadff}{$\cT, \hist$}
    \State $V^1 \gets \cT$, $\histnum{1} \gets \hist$

    \For{$t=1,2,\dots$}
        \State Receive $x_t$
        \State $(\hat{x}_t, \hat{y}_t) \gets \argmin_{(\hat{x}, \hat{y}) \in \histnum{t}}\max_{y\neq \hat{y}, \phi \in \Phi} \Tdim(V^t|_{(x_t,\hat{x}, y, \phi)}, \histnum{t} \cup \{(x_t, y)\})$.
        \State Predict $\hat{y}_t$ and provide $\hat{x}_t$ as an explanation.
        \State Receive feedback $y_t$ and $\phi_t$.
        \State $V^{t+1} \gets V^t|_{(x_t,\hat{x}_t, y_t, \phi_t)}$, $\histnum{t+1} \gets \histnum{t} \cup \{(x_t, y_t)\}$
    \EndFor
\EndProcedure
\end{algorithmic}
\label{alg:soadff}
\end{algorithm}

  \begin{theorem}\label{thm:tdim}
    Let $\cT$ be a teacher class and let $\hist \subseteq \cX\times \cY$ be a (non-empty) history that is consistent with $\cT$. Then
    $\min_{\cA} M(\cA, \cT, \hist) = \Tdim(\cT, \hist)$, where the minimum is taken over all deterministic DFF algorithms. In particular, this minimum is attained by the DFF algorithm $\soadff$ given in \myalgref{soadff}.
  \end{theorem}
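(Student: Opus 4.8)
The plan is to establish the two inequalities $\min_{\cA} M(\cA,\cT,\hist) \ge \Tdim(\cT,\hist)$ and $\min_{\cA} M(\cA,\cT,\hist) \le \Tdim(\cT,\hist)$ separately, where in the first (if $\Tdim(\cT,\hist)=\infty$) one runs the argument on shattered \dfft s of every finite height. For the lower bound I would fix an arbitrary deterministic algorithm $\cA$ and a \dfft\ $\mathcal{S}$ of height $d:=\Tdim(\cT,\hist)$ that is shattered by $\cT$ and $\hist$, and let the adversary present the examples labeling a root-to-leaf path of $\mathcal{S}$ that is chosen online according to $\cA$'s responses. The key point is the invariant that, just before round $t$, the labeled examples $\cA$ has observed are exactly $\hist$ together with the labeled examples on the path traversed so far; hence by condition~\ref{sdfft:edges} of \defref{sdfft} the protocol-abiding response $(\hx_t,\hy_t)$ of $\cA$ is the label of one of the current node's outgoing edges, so the adversary can follow it. By condition~\ref{sdfft:label} the reached child carries a label $y_t\neq\hy_t$, so $\cA$ erred; by condition~\ref{sdfft:consistent} the whole root-to-leaf path---hence all teacher feedback emitted along it---is realized by a single teacher $T\in\cT_\hist$, so the adversary's feedback is legitimate. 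Completeness (condition~\ref{sdfft:complete}) then forces exactly $d$ mistakes, giving $M(\cA,\cT,\hist)\ge d$ for every $\cA$.

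For the upper bound I would exhibit a Standard Optimal Algorithm for DFF that maintains, after each round, the set $\cT^{(t)}$ of teachers in $\cT_\hist$ consistent with all feedback seen so far (a correct prediction reveals the true label too) and the ``effective history'' $\hist^{(t)}$, which is $\hist$ augmented with every labeled example learned so far; it starts from $\cT^{(1)}=\cT_\hist$, $\hist^{(1)}=\hist$. The crucial structural lemma is: for every consistent non-empty $(\cT',\hist')$ with $d:=\Tdim(\cT',\hist')\ge 1$ and every example $x^{*}$, there is a response $(\hx,\hy)\in\hist'$ such that every teacher feedback $(y,\phi)$ that may follow an incorrect prediction of $(\hx,\hy)$ on $x^{*}$ leaves a successor state---namely the teachers of $\cT'$ consistent with $\ell(x^{*})=y$ and $\psi(x^{*},\hx)=\phi$, and the history $\hist'\cup\{(x^{*},y)\}$---with $\Tdim\le d-1$; and when $d=0$ there is a response in $\hist'$ that is necessarily correct on $x^{*}$ (immediate, since $\Tdim(\cT',\hist')=0$ rules out a one-level shattered \dfft\ rooted at $x^{*}$, which exactly says that some history pair always predicts $x^{*}$ correctly). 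The algorithm plays the response furnished by this lemma applied to $(\cT^{(t)},\hist^{(t)},x_t)$.

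The lemma is proved by contrapositive: from a hypothetical bad feedback for \emph{every} response one builds a \dfft\ of height $d+1$ shattered by $(\cT',\hist')$, by taking a fresh root $\node{\bot,\bot,x^{*}}$ and, for each $(\hx,\hy)\in\hist'$, attaching via the edge labeled $(\hx,\hy)$ the height-$d$ shattered \dfft\ of the corresponding bad successor state (which has $\Tdim$ exactly $d$ by the monotonicity facts below) after relabeling its root node from $\node{\bot,\bot,x'}$ to $\node{y,\phi,x'}$. Two easy supporting facts close the argument. First, $\Tdim$ does not increase when the teacher class shrinks (condition~\ref{sdfft:consistent} transfers) nor when the history grows (from a shattered \dfft\ for the larger history, delete every subtree hanging off an edge whose label is forced only by the extra history and is not a path-labeled example; the pruned tree stays complete, stays consistent, and satisfies conditions \ref{sdfft:label}--\ref{sdfft:edges} for the smaller history). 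Consequently a correct prediction never increases $\Tdim(\cT^{(t)},\hist^{(t)})$ and, by the lemma, each mistake strictly decreases it; since $\Tdim(\cT^{(1)},\hist^{(1)})=\Tdim(\cT,\hist)$ and $\Tdim$ is always at least $0$, the algorithm makes at most $\Tdim(\cT,\hist)$ mistakes, which with the lower bound yields equality.

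I expect the main obstacle to be the contrapositive construction inside the structural lemma (and the analogous pruning surgery in the history-monotonicity fact): one must verify that gluing the subtrees under the new root truly yields a \emph{shattered} \dfft, which hinges on condition~\ref{sdfft:edges}, whose mandated set of outgoing edges at each node is dictated by the labeled examples on the path from the \emph{global} root, not from the subtree's own root. The reconciliation is that the successor state after response $(\hx,\hy)$ with feedback $(y,\phi)$ uses effective history $\hist'\cup\{(x^{*},y)\}$, which is precisely $\hist'$ plus the single path-labeled example created by the new root edge, so the edge sets demanded inside the subtree coincide with those demanded by the big tree at every node; one also checks condition~\ref{sdfft:label} and the \dfft\ structural conditions at the relabeled roots, and that every root-to-leaf path of the glued tree is consistent with a teacher of the corresponding successor class. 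The lower bound and the remaining bookkeeping---the observed-examples invariant and the $\Tdim=0$ base case---are comparatively routine.
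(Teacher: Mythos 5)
Your proposal is correct and follows essentially the same route as the paper: the lower bound by letting an adversary walk a root-to-leaf path of a shattered \dfft\ guided by the algorithm's responses (using properties \ref{sdfft:label}--\ref{sdfft:complete} exactly as the paper's \lemref{at-least}), and the upper bound via a standard-optimal algorithm whose potential $\Tdim(\cT^{(t)},\hist^{(t)})$ strictly decreases on each mistake, justified by the same monotonicity-by-pruning lemma (\lemref{tdim-hist-and-teachers}) and the same gluing construction of a height-$(d+1)$ shattered tree (the paper phrases your structural lemma as a proof by contradiction inside \lemref{soa-mistakes}, with the $d=0$ case handled implicitly via the convention $\Tdim(\emptyset,\hist)=-1$). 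The differences are purely organizational, so no further comparison is needed.
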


\soadff\ (\myalgref{soadff}), which attains the optimal realizable mistake bound, is a Standard Optimal Algorithm for DFF.
      Given a teacher class $\cT$, we denote its restriction based on a single round of DFF interaction by $\cT|_{(x,\hat{x}, y, \phi)} = \{T \in \cT \mid T = (\ell, \psi) \wedge \ell(x) = y \wedge \psi(x, \hat{x}) = \phi\}.$
Given a teacher class and a history, the algorithm selects in each round, out of its available responses $(\hx, \hy)$, the response that in the worst case, would reduce the DFF dimension the most.  The future DFF dimension is calculated with respect to the future history, which is updated based on the possible teacher responses. For convenience, we define $\Tdim(\emptyset, \hist) = -1$ for any history $\hist$. We prove that every time a mistake is made, the \Tdim\ of the remaining teachers is reduced by at least $1$. \thmref{tdim} is proved in \appref{tdimproof}.

    We note that the \Tdim\ also characterizes the optimal mistake bound only for deterministic algorithms. This is similar to the Littlestone dimension for Online Learning. In \cite{FilmusHaMeMo23}, a version of the Littlestone dimension for randomized algorithm is proposed. We defer a similar endeavor for DFF to future work.

\paragraph{Example: The \Tdim\ of a relaxed component model} To demonstrate the usefulness of the \Tdim, we analyze a relaxed version of the component model that was studied in \cite{DasguptaDeRoSa18}. 
In our setting, each component is defined via a conjunction of features that hold for all examples in the component, as well as a common label. Unlike the component model of  \cite{DasguptaDeRoSa18}, the relaxed version does not require the components to cover the domain. Examples that are not covered are assumed to share a ``default'' label. In addition, the relaxed model does not require the existence of a discriminative feature between pairs of components. In \appref{compex}, we give a formal definition of the model and show that if there are $R$ components with at most $M$ features in each conjunction, then the DFF dimension is at most $RM$, and this is tight for a natural maximal case. This result provides an optimal mistake bound for this problem. 

In the next section, we discuss the relationship between DFF and Online Learning.

  \section{Converting between DFF and Online Learning}\label{sec:online}

  What is the relationship between Online Learning and DFF? Intuitively, DFF is Online Learning with the potential for a reduced mistake bound due to feature feedback. We now formalize this correspondence and use it to derive a separation between the two settings. 
  First, we define a mapping from Online Learning problems to DFF problems and vice versa. An Online Learning problem can be converted to an equivalent DFF problem by providing unhelpful feature feedback. In the other direction, a DFF problem can be converted to an Online Learning problem, but the result may have a higher mistake bound. The relationship in both directions is shown by comparing the Littlestone dimension of the Online Learning problem to the DFF dimension of the DFF problem. We then show an example in which the Online Learning version has an infinite Littlestone dimension, while its DFF counterpart has a DFF dimension of $1$.

\newcommand{\first}{\cX}

We define the mapping $\otdmap$, which maps an Online Learning problem given by a hypothesis class $\cF \subseteq \cY^\cX$ to a DFF problem given by a teacher class $\cT$ and history $\hist$. The mapping defines explanations that do not divulge any additional information over the provided label. We further provide the converse mapping $\dtomap$, and show that this recovers the Online Learning problem back from a mapped DFF problem. Moreover, any DFF algorithm for the DFF problem $\otdmap(\cF)$ can be converted to an equivalent Online Learning algorithm that obtains the same number of mistakes, by simply ignoring the explanations of the teacher and refraining from providing explanation examples. We show that $\otdmap$ preserves the optimal mistake bound of the problem by comparing the respective dimensions.  For a set of labeled examples $S \subseteq \cY^\cX$, denote $S_\first := \{ x \mid \exists y \in \cY \st (x,y) \in S\}$. We two mappings are defined as follows.
  
  \begin{definition}[Mapping from Online Learning to DFF]\label{def:o2dmap}
    Given a hypothesis class $\cF \subseteq \cY^\cX$ defining an Online Learning problem, for any $y \in \cY$ let $\star_y$ denote a special example that is not in $\cX$. Let $\hist := \{(\star_y, y) \mid y \in \cY\}$, and define $\cX' := \cX \cup \hist_\first$. 
    Let the set of features over $\cX'$ be $\Phi := \{ \one[x] \mid x \in \cX' \}$, where $\one[x](x') = 1 \iff x = x'$. 

    For any $f \in \cF$, define $f': \cX' \rightarrow \cY$ by $f' := \{(x,f(x)) \mid x \in \cX\} \cup \{(\star_y, y) \mid y \in \cY\}$. 
    Let $\psi_f:\cX' \times \cX' \rightarrow \Phi \cup \{ \bot\}$ be a feature feedback function defined as $\psi_f(x,x') = \one[x]$. Let $T_f := (f', \psi_f)$ be a teacher over $\cX', \cY, \Phi$.     
    Define the mapping of $\cF$ from Online Learning to DFF by $\otdmap(\cF) := (\cT_\cF, \hist)$, where $\cT_\cF = \{T_f \mid f \in \cF\}$.
\end{definition}

The converse mapping, $\dtomap$, is defined next.

\begin{definition}[Mapping from DFF to Online Learning]\label{def:d2omap}
  Given a DFF problem defined by a teacher class $\cT$ over $\cX, \cY, \Phi$ and a history $\hist \subseteq \cX \times \cY$, let $\cX' = \cX \setminus \hist_\first$. For a teacher $T = (\ell, \psi) \in \cT$, let $f_T:= \ell |_{\cX'}$. The mapping of the DFF problem $(\cT, \hist)$ to Online Learning is $\dtomap(\cT, \hist) = \cF$, where $\cF = \{ f_T \mid T \in \cT\}$. 

\end{definition}

First, we show that mapping from Online Learning to DFF and back recovers the original online learning problem, thus proving that no information is lost in the mapping. The proof is provided in \appref{online}.
\begin{theorem}\label{thm:otd-dto} For any hypothesis class $\cF \in \cY^\cX$, we have $\dtomap(\otdmap(\cF)) = \cF$.
\end{theorem}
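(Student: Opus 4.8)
The plan is to unfold both mappings and verify that the composition is the identity by a direct bookkeeping argument; the statement has no real depth, so the proof should be short.

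First I would record the data produced by \defref{o2dmap}: writing $\otdmap(\cF) = (\cT_\cF, \hist)$, we have $\hist = \{(\star_y, y) \mid y \in \cY\}$, the extended domain $\cX' = \cX \cup \hist_\first$ with $\hist_\first = \{\star_y \mid y \in \cY\}$, and $\cT_\cF = \{T_f \mid f \in \cF\}$ where $T_f = (f', \psi_f)$, $f'|_{\cX} = f$, and $f'(\star_y) = y$ for all $y$. Before applying $\dtomap$, I would check that $\otdmap(\cF)$ is a legitimate DFF instance, so that $\dtomap$ can be applied to it: each $T_f$ is a valid teacher, since whenever $f'(x) \neq f'(x')$ we must have $x \neq x'$, so $\psi_f(x,x') = \one[x] \in \Phi$ is satisfied by $x$ and not by $x'$; and $\hist$ is consistent with every $T_f$ because $f'(\star_y) = y$. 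I would also note explicitly that, since $\star_y \notin \cX$ for all $y$, the set $\hist_\first$ is disjoint from $\cX$.

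Next I would apply \defref{d2omap} to the instance $(\cT_\cF, \hist)$, where now the ambient domain is $\cX'$. The restricted domain is $\cX' \setminus \hist_\first = (\cX \cup \hist_\first) \setminus \hist_\first = \cX$ by disjointness. For each teacher $T \in \cT_\cF$, i.e. $T = T_f = (f', \psi_f)$ for some $f \in \cF$, the associated hypothesis is $f_T = f'|_{\cX' \setminus \hist_\first} = f'|_{\cX} = f$. Taking the union over all teachers in $\cT_\cF$ then gives $\dtomap(\cT_\cF, \hist) = \{f_T \mid T \in \cT_\cF\} = \{f \mid f \in \cF\} = \cF$, which is exactly the claim; note that injectivity of the map $f \mapsto T_f$ is not needed, since only the equality of image sets matters.

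I do not expect any genuine obstacle. The one place that needs a moment of care is tracking which set plays the role of ``$\cX$'' in each definition --- the original domain in $\otdmap$, the enlarged domain $\cX'$ when $\dtomap$ is applied --- and it is precisely the disjointness $\hist_\first \cap \cX = \emptyset$ (which follows from $\star_y \notin \cX$) that makes $\cX' \setminus \hist_\first$ collapse back to $\cX$ and closes the argument.
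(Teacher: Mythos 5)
Your proposal is correct and follows essentially the same route as the paper's proof: both unfold the two definitions, use the disjointness $\cX \cap \hist_\first = \emptyset$ to see that the restricted domain is exactly $\cX$, and conclude that each $f_{T_f} = f'|_\cX = f$, giving equality of the two hypothesis classes (the paper phrases this as two inclusions, you as a direct set identity). Your added check that $\otdmap(\cF)$ is a well-formed DFF instance is a harmless extra observation, not a different argument.
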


The next theorem shows that the Littlestone dimension of an Online Learning problem is equal to the DFF dimension of its mapping to a DFF problem. In other words, the optimal mistake bound of the problem is preserved by the conversion. 
Denote the Littlestone dimension of an Online Learning problem $\cF$ by $\Ldim(\cF)$. Note that since our problems can have more than two labels, we use the multiclass Littlestone dimension, which is also witnessed by a binary tree \citep{DanielySaBeSh15}. The theorem is proved in \appref{online}. We give here a partial sketch.

\begin{theorem}\label{thm:ldimequaltdim}
  For any hypothesis class $\cF \in \cY^\cX$, $\Tdim(\otdmap(\cF)) = \Ldim(\cF)$.
\end{theorem}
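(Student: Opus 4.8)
The plan is to route the equality through the optimal mistake bound: \thmref{tdim} identifies it with $\Tdim$ on the DFF side, and the Littlestone theorem in its multiclass form \citep{Littlestone88,DanielySaBeSh15} identifies it with $\Ldim$ on the online side. Write $(\cT_\cF,\hist) := \otdmap(\cF)$; recall that $\hist = \{(\star_y,y)\mid y\in\cY\}$, the domain is $\cX$ together with the auxiliary examples $\star_y$, and every teacher $T_f \in \cT_\cF$ has feature feedback function with $\psi_f(x,x') = \one[x]$ for all $x,x'$, independently of $f$. By \thmref{tdim}, $\Tdim(\cT_\cF,\hist) = \min_\cA M(\cA,\cT_\cF,\hist)$, and $\Ldim(\cF) = \min_\cA M(\cA,\cF)$, both minima over deterministic algorithms; so it suffices to prove the two optimal mistake bounds are equal, which I would do by giving mistake-preserving conversions in both directions.

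From DFF to online: given a deterministic DFF algorithm $\cB$ for $(\cT_\cF,\hist)$, run it as an online learner for $\cF$ — feed it each instance $x_t\in\cX$, predict whatever label $\cB$ predicts (discarding $\cB$'s explanation example), and whenever this is wrong, feed $\cB$ the pair $(y_t,\one[x_t])$, where $y_t$ is the label revealed by the online environment. The key point is that this pair is exactly the feedback that the teacher $T_f$ would give, for whichever $f\in\cF$ realizes the online sequence, and this does not depend on which explanation example $\cB$ supplied, since $\psi_f$ is constant in its arguments. Hence the induced online learner makes precisely the mistakes $\cB$ makes against $T_f$, so its mistake bound on $\cF$ is at most $M(\cB,\cT_\cF,\hist)$; taking $\cB$ optimal gives $\Ldim(\cF)\le\Tdim(\cT_\cF,\hist)$.

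From online to DFF: given an optimal deterministic online learner $\cA$ for $\cF$, define a DFF algorithm for $(\cT_\cF,\hist)$ as follows. On an auxiliary instance $\star_y$, predict $y$ with the explanation $(\star_y,y)\in\hist$; since every teacher labels $\star_y$ by $y$, this round is correct and $\cA$ is not consulted. On an instance $x_t\in\cX$, predict $\cA$'s prediction $\hat y_t$ with the explanation $(\star_{\hat y_t},\hat y_t)\in\hist$, and on a mistake feed the revealed label $y_t$ back to $\cA$. The auxiliary rounds incur no mistakes and leave $\cA$'s state untouched, so the number of mistakes equals $\cA$'s mistake count on the subsequence of $\cX$-instances, which is realizable by the same $f\in\cF$ that realizes the DFF run, hence at most $\Ldim(\cF)$. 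Therefore $\Tdim(\cT_\cF,\hist)\le\Ldim(\cF)$, and combining the two inequalities gives $\Tdim(\otdmap(\cF))=\Ldim(\cF)$.

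The delicate part — and where a full write-up must spend its effort — is the bookkeeping around the auxiliary examples and the legality of the synthesized feedback: one must check that the DFF learner can always produce a legal explanation (it can, since $\hist$ supplies one labeled example per label), that presenting $\star_y$ can never force a mistake, that on every mistake the predicted instance differs from the supplied explanation example so that $\one[x_t]$ is a legal discriminative feature, and that the online subsequence induced on $\cX$ is realizable by exactly the hypothesis $f$ that realizes the DFF run. A more self-contained alternative avoids \thmref{tdim}: one replicates each binary split of a shattered Littlestone tree for $\cF$ across the (larger) set of legal \dfft\ edge labels to obtain a shattered \dfft\ of the same height, and conversely shows that in any shattered \dfft\ for $(\cT_\cF,\hist)$ no node can carry an auxiliary example $\star_y$ (the mandatory history edge $(\star_y,y)$ would have no legal target node) and that every internal node exposes at least two distinct child labels, which recovers the Littlestone branching.
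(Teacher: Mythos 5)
Your proposal is correct, but it takes a genuinely different route from the paper. The paper proves the equality purely combinatorially, by converting witnesses between the two settings: \lemref{ldim-to-tdim} turns a shattered Littlestone tree for $\cF$ into a shattered \dfft\ of the same height (giving $\Tdim(\otdmap(\cF)) \geq \Ldim(\cF)$), and \lemref{tdim-to-ldim} turns a shattered \dfft\ into a shattered Littlestone tree, losing at most one level per label missing from the history (giving $\Ldim(\dtomap(\cT,\hist)) \geq \Tdim(\cT,\hist) - N$); since the history produced by $\otdmap$ contains every label, $N=0$, and \thmref{otd-dto} closes the loop. You instead route through the optimal mistake bounds, invoking \thmref{tdim} on the DFF side and the multiclass Littlestone/SOA characterization \citep{DanielySaBeSh15} on the online side, and then give black-box simulations in both directions, exploiting exactly the two structural features of $\otdmap$: the feedback $\psi_f(x,\cdot) = \one[x]$ is explanation-independent (so a DFF learner can be run against an online adversary with synthesized feedback), and the history supplies one anchor example $\star_y$ per label (so an online learner always has a legal explanation and auxiliary rounds are free). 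Both arguments are sound; the paper's version is self-contained at the level of tree constructions and its \lemref{tdim-to-ldim} is strictly more general, applying to arbitrary DFF problems with a quantified loss $N$ (which the paper reuses conceptually when discussing non-realizable separations), whereas your version is shorter and makes the informational equivalence of the two protocols explicit, at the cost of leaning on \thmref{tdim} and the external identity between $\Ldim$ and the optimal deterministic mistake bound. Interestingly, the paper mentions your ignore-the-explanations conversion informally just before the theorem but does not use it as the proof. Two small bookkeeping points you should add in a full write-up: in the online-to-DFF direction the simulated online learner must also receive the true label on \emph{correct} DFF rounds (harmless, since a correct prediction reveals the label), and in the DFF-to-online direction one should note that on a mistake round the predicted explanation $\hat{x}_t$ necessarily differs from $x_t$ under a realizable online adversary, so $\one[x_t]$ is indeed the feedback $T_f$ would give.
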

\begin{proof}[Sketch for the lower bound]
  Given a shattered Littlestone tree for $\cF$ with height $\Ldim(\cF)$, we construct $\tree'$, a shattered \dfft\ of the same height, by inductively mapping from nodes in $\tree$ to nodes in $\tree'$. For a node $v$ in $\tree'$, $N(v)$ denotes that the node in $\tree$ that it was mapped from. 
  Let $y_1,y_2$ be the two outgoing edges from $N(v)$ in $\tree$, and let $x_1,x_2$ be the respective examples in the target nodes $u_1,u_2$ of these edges in $\tree$ (see \figref{ldimtdim}). In $\tree'$, we set the target of each outgoing edge $(\hat{x},\hat{y})$ to a node $\node{y_1, \one[x], x_1}$ if $\hat{y} \neq y_1$, and to $\node{y_2, \one[x], x_2}$ if $\hat{y} = y_1$. We prove that this inductive construction results in a shattered \dfft.
  \end{proof}
 \begin{figure}[h]
   \begin{center}
     \resizebox{0.9\textwidth}{!}{
       \tikzstyle{vecArrow} = [thick, decoration={markings,mark=at position
   1 with {\arrow[semithick]{open triangle 60}}},
   double distance=1.4pt, shorten >= 5.5pt,
   preaction = {decorate},
   postaction = {draw,line width=1.4pt, white,shorten >= 4.5pt}]
   
\begin{tikzpicture}[>=stealth, node distance=2cm]
  \node (v) [draw, circle] {$x$};
  \node[below=2pt of v] {$N(v)$};
  \node (v1) [below left=of v, draw, circle] {$x_1$};
  \node[below=2pt of v1] {$u_1 = N(v')$};
  \node (v2) [below right=of v, draw, circle] {$x_2$};
\node[below=2pt of v2] {$u_2 = N(v'')$};
  \draw[->] (v) -- (v1) node[midway, left] {$y_1$};
  \draw[->] (v) -- (v2) node[midway, right, xshift=2pt] {$y_2$};

  \coordinate (arrowStart) at ($(v)!0.5!(v1) + (3.2,01)$);
  \draw[vecArrow] (arrowStart) -- ++(2,0);
  \node (va) [draw, ellipse, inner sep = 2pt, right=19em of v] {$\node{\cdot, \cdot, x}$};
  \draw[->, dashed] (va) -- ++(1,-1.7);
  \draw[->, dashed] (va) -- ++(-1,-1.7);
  \node [below=1cm of va] {$\ldots\ldots$};
  \node[below=2pt of va] {$v$};
  \node (va1) [below left=of va, draw, ellipse, inner sep=2pt] {$\node{y_1, \one[x], x_1}$};
  \node[below=2pt of va1] {};
  \node (va2) [below right=of va, draw, ellipse, inner sep=2pt] {$\node{y_2, \one[x], x_2}$};
  \draw[->] (va) -- (va1) node[midway, left, xshift = -3pt] {$(\hat{x},\hat{y})$, $\hat{y} \neq y_1$};
  \draw[->] (va) -- (va2) node[midway, right, xshift=2pt] {$(\hat{x},y_1)$};
  \node[below=2pt of va1] {$v'$};
  \node[below=2pt of va2] {$v''$};
\end{tikzpicture}
}
\caption{Illustrating the construction of $\tree'$ in the proof of \thmref{ldimequaltdim}}
\label{fig:ldimtdim}
\end{center}
\end{figure}

Using the mappings defined above, we can compare the optimal mistake bound of a DFF problem with the one of its Online Learning equivalent, in which no explanations are provided. A strong separation between the two settings is demonstrated by the following example, in which the \Tdim\ is 1, while the Littlestone Dimension is infinite. Moreover, this example has an infinite Littlestone tree, implying that an adversary can force the algorithm to make a mistake in every round in an infinite sequence of rounds \citep[see][]{bousquet2021theory}. 
In \cite{DasguptaDeRoSa18}, the power of DFF compared to standard Online Learning was discussed in the context of computational complexity and CNF formulas. The example and theorem below provide a clear statistical separation.

We use the following \emph{natural feature construction}: Let $\cX= \{0,1\}^\nats$, and define the Boolean function $f_n$ by $f_n(x) := x(n)$. Set $\Phi := \cup_{n \in \nats} \{ f_n, \neg f_n\}$. F`or vectors in $\{0,1\}^\nats$, we use an overline to indicate infinite repetition, so that $\bar{0},\bar{1}$ are the all-zero and all-one vectors, and $\overline{01}$ is a vector of alternating coordinate values. 

\begin{example}\label{exm:1-tdim-high-ldim}
  Let $\cY = \{0,1\}$ and let $\cX,\Phi$ be the natural feature construction defined above. For any $n \in \nats$, define a teacher $T_n = (f_n, \psi_n)$, where
  \[
    \psi_n(x,x') :=
    \begin{cases}
      f_n  & f_n(x) = 1, f_n(x') = 0\\
      \neg f_n  & f_n(x) = 0, f_n(x') = 1\\
      \bot & \text{ otherwise }
    \end{cases}
  \]
  Let the teacher class be $\cT = \{ T_n \mid n \in \nats\}$. 
  Define $\hist = \{(\bar{0}, 0), (\bar{1}, 1)\}$.
\end{example}

\begin{theorem}\label{thm:separation}
  For $\cT, \hist$ as defined in \exmref{1-tdim-high-ldim}, Then $\Tdim(\cT,\hist) = 1$ while $\Ldim(\dtomap(\cT,\hist)) = \infty$. Moreover, there exists an infinite Littlestone tree for $\dtomap(\cT,\hist)$. 
\end{theorem}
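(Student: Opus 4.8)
The plan is to prove the two halves separately, noting first that every $T_n$ is consistent with $\hist=\{(\bar 0,0),(\bar 1,1)\}$ (since $f_n(\bar 0)=0$ and $f_n(\bar 1)=1$), so $\cT_\hist=\cT$. For the lower bound $\Tdim(\cT,\hist)\ge 1$ I would exhibit a height-$1$ shattered \dfft: its root must be $\node{\bot,\bot,x_1}$ with exactly the two outgoing edges $(\bar 0,0)$ and $(\bar 1,1)$ (the only available labeled examples are those of $\hist$); choosing $x_1:=\overline{01}$, route $(\bar 0,0)$ to the leaf $\node{1,f_2,\bot}$ and $(\bar 1,1)$ to $\node{0,\neg f_1,\bot}$. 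Checking \defref{sdfft} is then routine: the leaf labels differ from the incoming-edge labels, and the two root-to-leaf paths are consistent with $T_2$ and $T_1$ respectively, using $f_2(\overline{01})=1$, $f_1(\overline{01})=0$, $\psi_2(\overline{01},\bar 0)=f_2$, $\psi_1(\overline{01},\bar 1)=\neg f_1$.

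The main work is the upper bound: I would derive a contradiction from the existence of a shattered \dfft\ $\tree$ of height at least $2$. Its root $\node{\bot,\bot,x_1}$ has exactly the children reached via $(\bar 0,0)$ and $(\bar 1,1)$, and both are internal (by completeness, since leaves lie at depth $\ge 2$). Let $v_2=\node{y_1,\phi_1,x_2}$ be the child via $(\bar 0,0)$; the different-labels property forces $y_1=1$, hence $\phi_1\in\Phi$. The crux is that any teacher $T_n$ consistent with the one-edge path to $v_2$ must have $f_n(x_1)=1$ and $\psi_n(x_1,\bar 0)=\phi_1$, and since $f_n(x_1)=1$, $f_n(\bar 0)=0$, the definition of $\psi_n$ gives $\psi_n(x_1,\bar 0)=f_n$; thus $\phi_1=f_n$, so the index $n$ is determined by $\phi_1$ alone. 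By the shattering condition every root-to-leaf path through $v_2$ is consistent with a teacher, and that teacher must be this same $T_n$. But $v_2$ is internal, so $(\bar 0,0)$ and $(\bar 1,1)$ are among its outgoing edges: its child along $(\bar 0,0)$ carries label $1$, which (taking any completion to a leaf) forces $f_n(x_2)=1$, while its child along $(\bar 1,1)$ carries label $0$, forcing $f_n(x_2)=0$ --- a contradiction. (The symmetric case, starting from the root's child along $(\bar 1,1)$, runs with $\phi_1=\neg f_m$.) I expect this collision --- the single feature explanation forcing the teacher's identity, after which the two labels then forced on $x_2$ are incompatible --- to be the only real obstacle; the rest is bookkeeping against \defref{dfft} and \defref{sdfft}.

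Finally, by \defref{d2omap} we have $\dtomap(\cT,\hist)=\cF:=\{f_n|_{\cX'}\mid n\in\nats\}$ with $\cX'=\{0,1\}^\nats\setminus\{\bar 0,\bar 1\}$, and I would build an explicit infinite complete binary Littlestone tree for $\cF$ by a coding argument. Fix an injection $c$ from nonempty finite binary strings into $\nats$, and label the tree node indexed by a finite string $\sigma$ by the point $z_\sigma\in\{0,1\}^\nats$ defined by $z_\sigma(c(\tau))=\tau_{|\sigma|+1}$ whenever $\sigma$ is a proper prefix of $\tau$, with alternating $0/1$ values on all remaining coordinates. Then each $z_\sigma$ has both a $0$-coordinate and a $1$-coordinate (at $c(\sigma 0)$ and $c(\sigma 1)$), so $z_\sigma\in\cX'$; and for any node at depth $d$ indexed by $\rho$, the labeled examples $(z_{\rho_{1:k}},\rho_{k+1})$ for $k=0,\dots,d-1$ along the path to it are all consistent with $f_{c(\rho)}\in\cF$, since each $\rho_{1:k}$ is a proper prefix of $\rho$ and so $z_{\rho_{1:k}}(c(\rho))=\rho_{k+1}$. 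Hence every finite path of this tree is realized in $\cF$: it is an infinite Littlestone tree, its depth-$d$ prefixes are shattered for every $d$ (giving $\Ldim(\dtomap(\cT,\hist))=\infty$), and by \citet{bousquet2021theory} an adversary can force a mistake in every round.
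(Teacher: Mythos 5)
Your proposal is correct, and it reaches the same two conclusions by a somewhat different route. For $\Tdim(\cT,\hist)=1$, the paper argues through the mistake-bound characterization (\thmref{tdim}): it exhibits the algorithm that predicts $0$ with explanation $\bar 0$ until its first mistake, at which point the feature feedback \emph{is} the true labeling function, and notes that no algorithm achieves $0$ mistakes. You instead work directly with \defref{sdfft}: a height-$1$ shattered \dfft\ gives the lower bound, and for the upper bound you show that in any shattered tree of height $\geq 2$, the depth-$1$ node reached via $(\bar 0,0)$ carries a feature $\phi_1$ that must equal $f_n$ for the (unique) consistent teacher, after which property \ref{sdfft:edges} forces both $(\bar 0,0)$ and $(\bar 1,1)$ as outgoing edges and hence contradictory values $f_n(x_2)=1$ and $f_n(x_2)=0$. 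The underlying phenomenon is identical (the single explanation reveals $n$), but your argument is self-contained at the level of the dimension definition, while the paper's is shorter because it reuses the \soadff\ machinery; note also that your ``symmetric case'' is not needed, since completeness already makes the $(\bar 0,0)$-child internal, and that you implicitly (and correctly) use that distinct $n$ yield distinct features $f_n$ on $\{0,1\}^\nats$. For the Littlestone half, both proofs build an explicit infinite complete binary tree by keeping, for each node, infinitely many coordinates on which all labeled examples along the path agree with their labels; the paper maintains the agreement set $A_v$ and places $\overline{01}$ on it, whereas you fix an injection from binary strings to coordinates and encode the continuation bits there. These are essentially the same diagonalization, and your version verifies membership in $\cX'=\cX\setminus\{\bar 0,\bar 1\}$ explicitly, which is a detail the paper leaves implicit.
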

\begin{proof}
  Let $\cX' = \cX \setminus \{\bar{0},\bar{1}\}$. Let $\cF := \dtomap(\cT, \hist)$. Then $\cF = \{ f_n|_{\cX'} \mid n \in \nats\}$. To see that $\Ldim(\cF) = \infty$, consider a complete binary Littlestone tree constructed as follows: Set the root node to $x = \overline{01}$. The rest of the tree is constructed inductively, as follows. For a given node $v$, let $(x_1,y_1),\ldots,(x_t,y_t)$ be the sequence of labeled examples in the path from the root to $v$. Let $A_v := \{ n \mid \forall i \in [t], x_i(n) = y_i\}$. Set the example in node $v$ to some $x$ such that $x(A_v) = \overline{01}$. It can be seen by induction that for each $v$, $|A_v| = \infty$. Therefore, for any path from the root to a leaf $v$, any $f_n$ for $n \in A_v$ is consistent with the labeled examples on the path. It follows that there exists a Littlestone tree of any finite size for $\cF$, as well as a Littlestone tree of infinite depth.

  To show that $\Tdim(\cT,\hist) = 1$, it suffices to provide a deterministic algorithm that makes at most a single mistake, since clearly no algorithm for this problem has a mistake upper bound of $0$. Consider the algorithm that always predicts the label $0$ with the explanation $\bar{0}$, until it makes the first mistake on some example. The true label of the example is $1$, and the feature feedback provided by the teacher for this example is the true labeling function. Thereafter, the algorithm predicts using the true labeling function and does not make any more mistakes. This completes the proof.
  \end{proof}

  \section{DFF in the non-realizable setting}\label{sec:agnostic}
  The previous sections discuss DFF in the realizable setting, in which the feedback provided to the algorithm is assumed to be consistent with one of the teachers in the teacher class. Previous works \citep{DasguptaSa20, Sabato23} studied a non-realizable setting for the original component model of \cite{DasguptaDeRoSa18}, and provided mistake upper bounds. In the non-realizable setting, it is assumed that the feedback provided to the algorithm is consistent with one of the teachers in the teacher class, except for at most $k \in \nats$ rounds, in which the feedback might have a different label and/or a different feature feedback than prescribed by the teacher. This behavior is called an \emph{exception} from the protocol. Rounds with an exception are called \emph{exception rounds}. We call the setting with up to $k$ exception rounds the \emph{$k$-non-realizable} setting.
  Here, we provide new bounds for general teacher classes for this setting.

  In Online Learning, the $k$-non-realizable setting is equivalent to assuming that at most $k$ of the labels provided by the environment are different from those of the true labeling function. For an Online Learning problem with a hypothesis class $\cF$, the optimal mistake bound assuming at most $k$ label errors is $k + \Theta\left(\sqrt{k\cdot \Ldim(\cF)} + \Ldim(\cF)\right)$, where the upper bound is obtained by a randomized algorithm (\citealt{FilmusHaMeMo23}, following \citealt{BenDavidPaSh09,AlonBeDaMoNaYo21}). 
  
  The apparent similarity of DFF to Online Learning, as discussed in
  \secref{online}, may lead to the expectation that the dependence on $k$ in
  mistake bounds for DFF would be similar to that of Online Learning. However,
  we show that this is not the case. We obtain a mistake upper bound of \mbox{$O(k\cdot\Tdim(\cT,\hist))$}, using a simple
  deterministic algorithm for the $k$-non-realizable setting. We then show
  that this upper bound cannot be improved for general DFF problems, even with a randomized algorithm, assuming a mild version of adversarial adaptivity that holds also for the Online Learning bound mentioned above. We conclude that unlike Online Learning, a no-regret algorithm for general
  DFF problems with a finite dimension does not exist in this setting. Moreover, again unlike Online Learning, the dependence of the optimal mistake bound on $k$ is highly dependent on the DFF problem, raising a new open question regarding the factors that control this dependence.

  The mistake upper bound for the $k$-realizable setting is obtained using the simple Agnostic Standard Optimal Algorithm \asoadff\ listed in \myalgref{asoadff}. This algorithm runs the \soadff\ algorithm that is meant for the realizable setting, and restarts it whenever it makes more mistakes than the realizable mistake upper bound. Denote by $M^L_k(\cA, \cT, \hist)$ the worst-case number of mistakes made by a DFF algorithm $\cA$ for $\cT,\hist$ on a sequence of $L$ rounds in the $k$-non-realizable setting.

\begin{algorithm}[ht]
\caption{Agnostic Standard Optimal Algorithm for DFF (\asoadff)}
\begin{algorithmic}[1]
  \Procedure{\asoadff}{$\cT, \hist$}
  \State Initialize $\soadff(\cT,\hist)$ and set $M \leftarrow 0$.
    \For{$t=1,2,\dots$}
    \State Receive $x_t$ and provide it to $\soadff$; Get output $(\hat{x}_t, \hat{y}_t)$ from $\soadff$.
    \State Predict $\hat{y}_t$ and provide $\hat{x}_t$ as an explanation.
    \State Receive feedback $y_t, \phi_t$ and provide it to $\soadff$.
    \State If $\hat{y}_t \neq y_t$, $M \leftarrow M+ 1$.
    \State If $M = \Tdim(\cT, \hist) + 1$, re-initialize $\soadff(\cT,\hist)$ and set $M \leftarrow 0$
    \EndFor
\EndProcedure
\end{algorithmic}
\label{alg:asoadff}
\end{algorithm}

  \begin{theorem}\label{thm:upper}
    For any teacher class $\cT$ and history $\hist$, and for any $L \in \nats$,
     if $\Tdim(\cT,\hist) = d$ then 
    \[
      M^L_k(\asoadff,\cT,\hist) \leq (k + 1) d + k.
      \]
\end{theorem}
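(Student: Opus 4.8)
The plan is to analyze the behavior of \asoadff\ by partitioning the run of $L$ rounds into \emph{epochs}, where an epoch is a maximal stretch of rounds during which the internal \soadff\ instance is not re-initialized. A new epoch begins either at round $1$ or immediately after a re-initialization, and an epoch ends (triggering a re-initialization) exactly when the internal mistake counter $M$ reaches $d+1$; the final epoch may end because the sequence of $L$ rounds is exhausted, without the counter reaching $d+1$. Within each epoch, \soadff\ is being fed a fresh copy of $(\cT, \hist)$ and sees a contiguous sub-sequence of the input.

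First I would bound the number of epochs. By \lemref{soa-mistakes}, in any stretch of rounds whose feedback is fully consistent with some teacher in $\cT_\hist$, \soadff\ makes at most $d$ mistakes. Hence an epoch can only end by re-initialization (i.e., by the counter hitting $d+1$) if that epoch contained at least one exception round: if it contained none, \soadff\ would make at most $d$ mistakes on it and the counter would never reach $d+1$. Since there are at most $k$ exception rounds in total, at most $k$ epochs can terminate by re-initialization, so the run consists of at most $k+1$ epochs in all (the $+1$ accounting for the last, possibly unterminated, epoch).

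Next I would bound the mistakes within each epoch. Each of the at most $k$ epochs that ends by re-initialization contributes exactly $d+1$ mistakes by the re-initialization condition in \myalgref{asoadff}. That already gives $k(d+1) = kd + k$ mistakes from those epochs. The remaining (at most one) epoch — the last one, which was not re-initialized — has its counter strictly below $d+1$ when the run ends, so it contributes at most $d$ mistakes. Actually a cleaner accounting: there is at most one epoch that contains zero exception rounds and is not re-initialized, and it contributes at most $d$ mistakes; every other epoch contains at least one exception round, and since there are at most $k$ such epochs each contributing at most $d+1$ mistakes (in fact exactly $d+1$ if it was re-initialized, and at most $d$ if it is the terminal epoch), the total is at most $k(d+1) + d = (k+1)d + k$, as claimed.

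I expect the main subtlety — though not a deep obstacle — to be the bookkeeping around the terminal epoch and making sure the argument ``at most $k$ epochs can be re-initialized'' is airtight: one must verify that an epoch with no exception round genuinely cannot be re-initialized, which relies precisely on the fact that within such an epoch the feedback is consistent with \emph{some} teacher in $\cT_\hist$ (the same history $\hist$ is re-supplied on each re-initialization, and a sub-sequence of a realizable sequence is realizable), so \lemref{soa-mistakes} applies verbatim. Everything else is elementary counting, so no heavy machinery is needed beyond \lemref{soa-mistakes}.
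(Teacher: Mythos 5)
Your proof is correct and follows essentially the same argument as the paper: both partition the run into segments between re-initializations, use the realizable bound $d$ of \soadff\ (\lemref{soa-mistakes}) to conclude that every segment ending in re-initialization must contain an exception round, and count at most $k$ such segments plus one final segment with at most $d$ mistakes, giving $(k+1)d+k$.
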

\begin{proof}
Call each sub-sequence of the run between initializations of $\soadff$ a ``run segment''. In each run segment, $\soadff$ makes $d+1$ mistakes. Let $N$ be the total number of run segments. Then the total number of prediction mistakes made by the algorithm is at most $N(d+1) + d$. Since the realizable mistake bound of \soadff\ is $M(\soadff,\cT,\hist) = d$, all teachers in $\cT_\hist$ are inconsistent with all of the $N$ run segments. Therefore, any such teacher is inconsistent with at least $N$ rounds in the whole run. If there are at most $k$ exceptions during the run, it follows that $N \leq k$. Thus, $M^L_k(\asoadff,\cT,\hist) \leq k(d+1) + d = (k+1)d+k$.
  \end{proof}

  We note that the upper bound in fact holds for a wide range of interactive prediction protocols, of which DFF is but one example. We provide a formal statement and proof of the following result in \appref{interactive}.
  \begin{theorem}[Informally]\label{thm:informal}
    For any ``natural'' interactive protocol with a realizable mistake upper bound of $d$, there is an algorithm that obtains a mistake upper bound of at most $(k+1)d+k$ for any run with at most $k$ exception rounds.    
  \end{theorem}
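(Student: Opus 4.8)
The plan is to isolate from the proof of \thmref{upper} exactly the structural features it relied on, take those as the defining properties of a ``natural'' interactive protocol, and then re-run the same restart-based argument in this abstract setting. Concretely, I would formalize a natural interactive protocol by four ingredients: (i) a round structure in which the environment first reveals an input, the learner then commits to an action, and the learner's action is scored as a \emph{mistake} or not; (ii) a realizable class $\cE$ of environments together with a predicate ``the transcript of a contiguous block of rounds is \emph{consistent} with $E\in\cE$ given fixed side information $\sigma$'' (the analogue of the DFF history), where the predicate depends only on the feedback inside the block and on $\sigma$; (iii) a realizable mistake bound: an algorithm $\cA^*$ that on any run whose transcript is consistent throughout with some $E\in\cE$ given $\sigma$ makes at most $d$ mistakes; and (iv) restartability: the learner is allowed, at the start of any round, to discard everything seen so far and continue as though the run were beginning afresh with side information $\sigma$. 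The $k$-non-realizable setting is, as in \secref{agnostic}, a run whose transcript is consistent with one fixed environment $T\in\cE$ except in at most $k$ \emph{exception rounds} --- rounds in which the per-round consistency predicate for $T$ fails.

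The algorithm is the obvious lift of \asoadff: run $\cA^*$, count mistakes, and each time the count hits $d+1$, restart $\cA^*$ via (iv) and reset the count. The analysis then copies the proof of \thmref{upper}. Partition the run into maximal ``segments'' separated by restarts; every segment but possibly the last is \emph{closed}, i.e., $\cA^*$ made $d+1$ mistakes on it. The crux is that every closed segment contains at least one exception round: by restartability the transcript seen by $\cA^*$ on a closed segment is literally that of a stand-alone run with side information $\sigma$, so if the segment's transcript were consistent with some $E\in\cE$, property (iii) would cap $\cA^*$'s mistakes on it at $d$, contradicting the $d+1$. Hence the segment is inconsistent with \emph{every} environment in $\cE$, in particular with $T$; by the locality built into (ii), inconsistency with $T$ pinpoints a round inside the segment where $T$'s consistency predicate fails, which is an exception round. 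The closed segments are pairwise disjoint and there are at most $k$ exception rounds overall, so there are at most $k$ closed segments. Each contributes $d+1$ mistakes and the final open segment contributes at most $d$, giving the total bound $k(d+1)+d=(k+1)d+k$.

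The counting is verbatim from \thmref{upper} and will not be the hard part; the real work is choosing the abstraction in (i)--(iv) so that it genuinely subsumes DFF, standard Online Learning, and active learning while retaining precisely the two implications the proof needs --- that a block on which the optimal realizable algorithm overshoots its realizable bound cannot be realizable, and that non-realizability of a block relative to the fixed $T$ localizes to a single round. I expect the delicate point to be the consistency/locality condition (ii): it must be weak enough to accommodate protocols in which one round's feedback depends on the learner's whole past (as in DFF, where the explanation example the learner provides changes the feature the teacher returns), yet strong enough that ``the whole run has at most $k$ exceptions'' forces ``each offending block has at least one exception''. Phrasing consistency as a per-round transcript predicate $\mathrm{cons}_E(\cdot,\sigma)$, and defining an exception round as one where $\mathrm{cons}_T(\cdot,\sigma)$ fails, makes this localization immediate and keeps the remainder of the argument a transcription of the proof of \thmref{upper}.
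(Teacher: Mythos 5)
Your proposal matches the paper's own treatment (Theorem \ref{thm:gaa} in Appendix \ref{app:interactive}): the paper likewise abstracts the protocol via per-round teacher-consistency of feedback, a mistake-identifiability assumption, and a restartability (suffix-inclusion) assumption, and then runs the same restart-and-count argument, with each closed segment forced to contain an exception round exactly as you argue. So your approach is essentially identical to the paper's, including the final $k(d+1)+d=(k+1)d+k$ accounting.
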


We further show that the upper bound of  \thmref{upper} is tight, by proving a nearly matching lower bound. The lower bound holds also for randomized algorithms with a mildly adaptive adversary. In general online prediction problems, an oblivious adversary fixes the sequence of loss functions in advance, while an adaptive adversary decides on the loss function in each round based on the algorithm's actions in the previous rounds \citep{AroraDeTe12}. However, in Online Learning, both of these adversaries obtain the same optimal mistake bound of $k + \Theta\left(\sqrt{k\cdot \Ldim(\cF)} + \Ldim(\cF)\right)$, using the same randomized Weighted Majority algorithm of \cite{LittlestoneWa94}. In the lower bound below, we consider a DFF version of the adaptive adversary, which fixes the label and the feature feedback function ahead of  each round based on the algorithm's actions in previous rounds. The effect of other types of adversaries on the optimal mistake bound of DFF is an intriguing question that we leave for future work. Below, $M_k^L(\cdot,\cdot,\cdot)$ denotes the \emph{expected} mistake bound of a randomized algorithm, under a worst-case adaptive adversary.

\begin{theorem}\label{thm:lowerbound}
  Let $\cX = \nats$, $\cY = \{0,1\}$, $\Phi = \{0,1\}^\cX$,
  $\thist = \{(2, 0), (1,1)\}$. Let $d \geq 2$. There is a teacher class $\cT^d$ such that $\Tdim(\cT^d,\thist) \leq d$ and 
for any $k \geq 1$ and $L \geq 4(k+1)d$, for any algorithm $\cA$ (including randomized algorithms) 
  \[
    M^L_k(\cA, \cT^d, \thist) \geq (k+1)d - k - 2.
  \]

\end{theorem}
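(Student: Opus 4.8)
The plan is to build the teacher class $\cT^d$ from a family of $d$-threshold secret-sharing schemes whose shares are revealed by the feature feedback and whose identity depends on the explanation the learner provides, and then to exhibit an adversary that spends each of its $k$ exceptions to ``refresh'' the active secret, forcing roughly $d$ mistakes between consecutive refreshes. Concretely, I would fix a large finite field, encode field elements in the bits of examples (so that the natural feature construction of \secref{compex} can serve as $\Phi$), and let a teacher be described by a secret together with the corresponding Shamir shares; the labeling function is arranged so that predicting a designated ``query'' example correctly is equivalent to knowing the secret, while the feature feedback $\psi(x,\hat x)$ returns the share indexed by $\hat x$. The reason for using a secret-sharing scheme rather than simply revealing bits of the secret one at a time is twofold: it makes the learner's accumulated knowledge worthless until it has collected a full threshold of $d$ shares, so no partial progress is possible and no clever ``redundant'' querying helps; and it lets the adversary corrupt a single round in a way the learner cannot detect, forcing the learner to reconstruct a wrong secret. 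The teacher class $\cT^d$ consists of all teachers obtained this way, and one must check that it is a legal teacher class, i.e.\ that whenever $\ell(x)\neq\ell(\hat x)$ the returned $\psi(x,\hat x)$ is a genuine feature satisfied by $x$ and not by $\hat x$.

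For the bound $\Tdim(\cT^d,\thist)\le d$, by \thmref{tdim} it suffices to give a deterministic DFF algorithm making at most $d$ mistakes in the realizable setting: the algorithm records every share it has received, and after $d$ mistakes it holds $d$ shares of the (unique, realizable) secret, which by the reconstruction property of the scheme determines the secret and hence all future labels. The construction must be arranged so that a single teacher cannot, on its own, trigger more than one ``phase'' in the realizable setting — that is, the refresh mechanism is available only through an exception — which is where the bulk of the design effort goes.

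For the lower bound I would have the adversary run $k+1$ phases of at most $4d$ rounds each (using $L\ge 4(k+1)d$). Within a phase, as long as the learner has seen fewer than $d$ shares of that phase's scheme, the secrecy guarantee of the scheme implies that, conditioned on the transcript, the secret is still uniformly distributed; consequently, arguing against uniformly random secrets (an oblivious strategy, whose value lower-bounds that of the worst adaptive adversary, which in turn lower-bounds $M^L_k$ for any randomized algorithm), the learner errs in expectation on the next query, and this can be repeated $d$ times. At the boundary between two phases the adversary spends one exception to hand the learner a corrupted threshold share, so that the secret reconstructed for the phase just ended is wrong and useless for the next phase, whose scheme is fresh; since only that one corrupted round of each early phase is inconsistent with the teacher ultimately declared ``true'' (the one matching the final phase exactly), the total number of exceptions is $k$. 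Summing the $\approx d$ forced mistakes over the $k+1$ phases and discounting the $k$ corrupted rounds together with the two labels already present in $\thist$ yields the claimed bound $(k+1)d-k-2$.

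The main obstacle is precisely this reconciliation: the teacher class must simultaneously satisfy $\Tdim(\cT^d,\thist)\le d$, so that in the realizable setting a single teacher is pinned down after $d$ mistakes, and yet permit the adversary to reset the learner's uncertainty $k$ times at a cost of one exception per reset. The all-or-nothing secrecy of secret sharing and the dependence of the feedback on the learner's explanation are the tools that make this possible, but making it airtight requires (i) checking that a sub-threshold set of shares, even aggregated across phases, leaks nothing about any unreached secret, so that the learner genuinely restarts each phase; (ii) showing that for every transcript the adversary can produce there is a teacher in $\cT^d$ consistent with it up to $k$ rounds; and (iii) pushing the probabilistic argument through uniformly over randomized algorithms, for which the clean route is the chain (worst adaptive adversary) $\ge$ (worst oblivious adversary) $\ge$ (expected mistakes against a uniformly random teacher).
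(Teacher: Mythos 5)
Your high-level ingredients match the paper's: a Shamir $(d,\cdot)$-threshold scheme whose shares are released through the feature feedback and indexed by the learner's explanation, a realizable algorithm that reuses one explanation, collects $d$ shares and reconstructs the labeling function (this is essentially \lemref{kupper} and the construction of \secref{construction}). However, your lower-bound mechanism has a genuine gap. You propose that the adversary spend one exception per phase boundary to ``hand the learner a corrupted threshold share'' so that ``the next phase, whose scheme is fresh,'' restarts the learner. But the scheme in any construction of this type is keyed to the learner's explanation, and the learner chooses its explanations; an oblivious phase schedule cannot force the learner onto a fresh scheme. If the learner simply keeps reusing one explanation $\hat x$ whose label-$1$ feedback was genuine, then all the shares it collects are evaluations of a single degree-$(d-1)$ polynomial with at most $k$ of them corrupted, and roughly $d+2k$ shares suffice for unique (Reed--Solomon-style) decoding of the true secret; the learner then makes only $O(d+k)$ mistakes, far below $(k+1)d-k-2$ once $d$ is large. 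In short, paying one exception per corrupted \emph{share} is both too expensive and too weak.

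The paper's adversary avoids this by spending each exception not on a share but \emph{retroactively on the round in which the explanation example itself was labeled $1$}: declaring that round an exception means the true teacher has $f^*(\hat x)=0$, and by construction the polynomial attached to such an explanation is an arbitrary ``fake'' one, so all $d$ shares obtained through $\hat x$ become simultaneously uninformative while remaining perfectly consistent with the true teacher --- there is nothing to error-correct, since the shares form a valid codeword of the wrong secret. This designation necessarily depends on which explanation the learner reused and when it accumulated $d$ shares (\lemref{kd}), and the per-round error probability of $1/2$ is obtained by adaptively choosing the label the learner is less likely to predict, with consistency of the whole transcript (your item (ii)) established in \lemref{dobervationsneeded}. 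Consequently the argument is inherently tied to the mildly adaptive adversary in the theorem statement; your proposed chain through an oblivious adversary with a uniformly random secret cannot implement the refresh and would not yield the bound. (A minor accounting point: the paper proves $(k+1)d-1$ mistakes for the class with $\Tdim\le d+1$ and obtains $(k+1)d-k-2$ by substituting $d-1$ for $d$, not by discounting the history labels.)
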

Taken together with \thmref{upper}, we conclude that the optimal mistake bound for a general teacher class with a DFF dimension of $d$ in the $k$-non-realizable setting is $(k+1)d \pm \Theta(k)$. 
In addition, this implies that no-regret algorithms for the general DFF problem do not exist for $d \geq 3$: for $k = L/(4d)-1$, we obtain a regret bound of $M_k^L(\cA,\cT^d,\thist) -k \geq (k+1)d-2k-2 = \Omega(L)$.

While \thmref{lowerbound} shows that the mistake upper bound of \thmref{upper} is tight, it does not show a lower bound that holds for all DFF problems. We observe that unlike Online learning, in DFF the dimension that characterizes the realizable mistake bound is insufficient to characterize the mistake bound in the non-realizable setting. To demonstrate this, consider an Online Learning problem $\cF \subseteq \cY^\cX$ with $\Ldim(\cF) = d$ and its DFF counterpart $(\cT, \hist) := \otdmap(\cF)$. By \thmref{ldimequaltdim}, $\Tdim(\cT,\hist) = d$. The optimal mistake bound for the DFF problem is the same as that of is Online Learning counterpart, $k + \Theta(\sqrt{kd} + d) \ll \Theta(kd)$, implying also the existence of a no-regret algorithm. As a different example, consider the component model of \citet{DasguptaDeRoSa18}. They show that the realizable mistake bound for this model with $m$ components is $O(m^2)$, and this is shown to be tight in \cite{DasguptaSa20}. It follows that the \Tdim\ of this problem is $d = \Theta(m^2)$. On the other hand, there exists an algorithm for this problem in the $k$-non-realizable setting with a mistake bound of $O(m^2 + mk) = O(d + \sqrt{d}k)$ \citep{Sabato23}, giving yet a different dependence on $k$. However, no matching lower bound has been shown.   
These examples raise an interesting open question: which properties of a DFF problem determine the dependence of its optimal $k$-non-realizable mistake bound on $k$.

The proof of \thmref{lowerbound}, provided in \appref{prooflowerbound}, employs a teacher class that satisfies the following: If the algorithm provides the same $1$-labeled example as an explanation on $d$ different rounds, for examples whose true label is $0$, then it obtains $d$ feature feedbacks, that together perfectly identify the true labeling function. However, it is impossible to obtain \emph{any} information on the true labeling function (beyond the information provided by the label feedback) with fewer than $d$ feature feedbacks that were provided for the same explanation. Moreover, it is impossible to obtain such information by combining feature feedbacks provided for different explanations.
In addition, the environment can provide feedback in a way that makes it impossible to distinguish between exception and non-exception rounds.
In this problem, the environment can cause the algorithm to repeatedly use examples provided in exception rounds as explanations. These examples are actually labeled $0$, but are provided to the algorithm with label $1$ in an exception round, thus the provided feature feedback when they are used as explanations is uninformative. However, the algorithm cannot identify this before $d$ mistakes  have already been made for each of these $k$ explanations. 

A core property of the construction above is the ability to perfectly describe the true function using $d$ items of information, while providing no information if there are only $d-1$ items. We achieve this via secret sharing \citep{DBLP:journals/cacm/Shamir79}, an idea commonly used in cryptography. A secret-sharing scheme allows representing a secret using a number of parts, such that only if a sufficient number of the parts is known, the secret is revealed, and no information about the secret is revealed from fewer parts.

More specifically, we employ a \emph{$(d,n)$-threshold secret sharing scheme} \citep{DBLP:journals/cacm/Shamir79}. This is a mapping from a secret $s$ to a set of $n$ partial secrets $\bar{s}$ such that having access to any subset of $\bar{s}$ of size at least $d$ allows one to perfectly reconstruct $s$, and having access to only a subset of $\bar{s}$ of size $d-1$ or less does not reveal \emph{any} information about the secret $s$. This scheme is based on Langrange's interpolation theorem \citep{Lagrange1795}. The latter states that any polynomial $P$ of degree $d-1$ can be interpolated when having access to $d$ distinct evaluations $((x_1, P(x_1)), \dots, (x_d, P(x_d))))$ of the polynomial. Moreover, for any $d$ distinct evaluations, there exists a polynomial $P$ that satisfies them.
The full scheme is specified in \appref{prooflowerbound}.
We use this to construct the class $\cT^d$ for \thmref{lowerbound}, in which each possible explanation provides parts of a different secret-sharing scheme. The construction and the full proof of \thmref{lowerbound} are provided in \appref{prooflowerbound}.

    \section{Conclusion}

    In this work, we study Discriminative Feature Feedback in a general framework based on teacher classes, and show new results characterizing its mistake bound in the realizable setting. We further showed separation between DFF and Online Learning in the realizable case. Lastly, we showed that unlike Online Learning, in DFF the optimal mistake bounds in the non-realizable setting exhibit a rich behavior that depends on the specific teacher class. This presents an intriguing open question regarding the interplay between rich feedback and robustness to deviations from the protocol, as well as the challenge of characterizing this behavior based on properties of the learning problem.
        
\acks{We thank our colleagues and funding agencies;
We acknowledge the support of the Natural Sciences and Engineering Research Council of Canada (NSERC) [funding reference number RGPIN-2024-05907]. Resources used in preparing this research were provided, in part, by the Province of Ontario, the Government of Canada through CIFAR, and companies sponsoring the Vector Institute; see \url{https://vectorinstitute.ai/partnerships/current-partners/}. This work was supported by the Israeli Council for Higher Education (CHE) via the Data Science Research Center at Ben-Gurion University.
}    

\bibliography{references,mybib,shared}

\appendix

\section{Proof of \thmref{tdim}:  optimal realizable mistake bound for DFF}\label{app:tdimproof}

  \thmref{tdim} is proved via the following two lemmas.

  \begin{lemma} \label{lem:soa-mistakes}
    Let $\cT,\cH$ as in \thmref{tdim}. 
    There exists a DFF algorithm $\soadff$ such that\\ \mbox{$M(\soadff, \cT, \hist) \leq \Tdim(\cT, \cH)$}. 
  \end{lemma}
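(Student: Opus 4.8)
The plan is to take $\soadff$ to be the DFF analogue of Littlestone's Standard Optimal Algorithm and to use the \Tdim\ of the current version space as a potential. Concretely, before round $t$ the algorithm maintains the \emph{version space} $\cT_t\subseteq\cT$ of all teachers in $\cT$ consistent with $\hist$ and with all information observed in rounds $s<t$ (the labels and features disclosed after mistakes, and the fact that the other predictions were correct), together with the \emph{effective history} $\hist_t\supseteq\hist$ consisting of $\hist$ and every labeled example $(x_s,y_s)$ whose label became known in rounds $s<t$; set $\cT_1=\cT$, $\hist_1=\hist$. Since the true teacher always lies in $\cT_t$, the set $\hist_t$ is non-empty and consistent with $\cT_t$, so $\Tdim(\cT_t,\hist_t)$ is well defined. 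On input $x_t$, for each candidate response $(\hx,\hy)\in\hist_t$ put
\[
  g(\hx,\hy) \;=\; \max\bigl\{\, \Tdim\bigl(\cT_t^{\,y,\phi,\hx},\ \hist_t\cup\{(x_t,y)\}\bigr) \;:\; y\neq\hy,\ \cT_t^{\,y,\phi,\hx}\neq\emptyset \,\bigr\},
\]
where $\cT_t^{\,y,\phi,\hx}:=\{\,T=(\ell_T,\psi_T)\in\cT_t : \ell_T(x_t)=y,\ \psi_T(x_t,\hx)=\phi\,\}$ and $g(\hx,\hy):=-\infty$ if no such $(y,\phi)$ exists; $\soadff$ predicts a response attaining $\min_{(\hx,\hy)\in\hist_t}g(\hx,\hy)$. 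With the potential $\Phi_t:=\Tdim(\cT_t,\hist_t)$, I will establish (a) $\Phi_1=\Tdim(\cT,\hist)$; (b) a correct round has $\Phi_{t+1}\le\Phi_t$; (c) $\soadff$ never errs when $\Phi_t=0$, and if $\Phi_t\ge1$ and $\soadff$ errs then $\Phi_{t+1}\le\Phi_t-1$; and (d) $\Phi_t\ge0$ always, because the single-node tree $\node{\bot,\bot,\bot}$ is a shattered \dfft\ of height $0$. Together (a)--(d) imply $\soadff$ makes at most $\Tdim(\cT,\hist)$ mistakes.

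Claim (a) is immediate. For (b), a correct prediction replaces $(\cT_t,\hist_t)$ by $\bigl(\{T\in\cT_t:\ell_T(x_t)=\hy_t\},\ \hist_t\cup\{(x_t,\hy_t)\}\bigr)$, so it suffices that \Tdim\ does not increase when the teacher class is restricted or the history is enlarged by labeled examples. The first is immediate: a \dfft\ shattered by a subclass with a fixed history is also shattered by the superclass. For the second, given a \dfft\ shattered by $(\cT,\hist_2)$ with $\emptyset\neq\hist_1\subseteq\hist_2$, delete at every node the outgoing edges (with their subtrees) whose labels fall outside $\hist_1$ together with the labeled examples appearing on the path to that node; since $\hist_1\neq\emptyset$, every internal node keeps at least its $\hist_1$-labeled edges, so no internal node becomes a leaf, the pruned tree stays complete of the same height, and the remaining paths are still consistent with teachers in $\cT_{\hist_2}\subseteq\cT_{\hist_1}$, so it is shattered by $(\cT,\hist_1)$.

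For (c): if $\Phi_t=0$ then, for the arrived $x_t$, there must exist $(\hx,\hy)\in\hist_t$ with $\ell_T(x_t)=\hy$ for every $T\in\cT_t$ — otherwise a height-$1$ \dfft\ rooted at $x_t$, with edges exactly $\hist_t$ and leaves recording a differing label and a discriminating feature, would be shattered by $(\cT_t,\hist_t)$; for that response $g(\hx,\hy)=-\infty$, so $\soadff$ picks a guaranteed-correct response. When $\soadff$ errs with feedback $(y_t,\phi_t)$, this pair is among the $(y,\phi)$ in the maximum defining $g(\hx_t,\hy_t)$, hence $\Phi_{t+1}\le g(\hx_t,\hy_t)=\min_{(\hx,\hy)\in\hist_t}g(\hx,\hy)$, and it remains to prove the \emph{progress lemma}: if $\Phi_t=d\ge1$ then $\min_{(\hx,\hy)\in\hist_t}g(\hx,\hy)\le d-1$. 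I argue the contrapositive: suppose $g(\hx,\hy)\ge d$ for every $(\hx,\hy)\in\hist_t$, so for each such pair there is a realizable feedback $(y^{\hx,\hy},\phi^{\hx,\hy})$ with $y^{\hx,\hy}\neq\hy$ whose resulting state admits a shattered \dfft\ $\cS_{\hx,\hy}$ of height $d$. Build a tree of height $d+1$ with root $\node{\bot,\bot,x_t}$ and, for each $(\hx,\hy)\in\hist_t$, an edge labeled $(\hx,\hy)$ into a copy of $\cS_{\hx,\hy}$ whose root label $\node{\bot,\bot,\cdot}$ is rewritten to $\node{y^{\hx,\hy},\phi^{\hx,\hy},\cdot}$. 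Then check \defref{sdfft}: the labeling condition holds since each glued subtree-root carries $y^{\hx,\hy}\neq\hy$; completeness holds since every copy has height $d$; consistency holds because a root-to-leaf path enters a single copy $\cS_{\hx,\hy}$, is consistent there with a teacher $T\in\cT_t^{\,y^{\hx,\hy},\phi^{\hx,\hy},\hx}$, and this $T$ also validates the first edge $(\hx,\hy)$; and the edge condition holds because the only labeled example added along the path from the new root into a copy is $(x_t,y^{\hx,\hy})$, which already lies in the history $\hist_t\cup\{(x_t,y^{\hx,\hy})\}$ relative to which $\cS_{\hx,\hy}$ is shattered, so the outgoing-edge set demanded at each node relative to $\hist_t$ coincides with the one $\cS_{\hx,\hy}$ supplies. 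This contradicts $\Phi_t=d$.

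I expect the main obstacle to be exactly this last verification in the progress lemma: reconciling the edge sets of the glued subtrees, which are defined relative to the enlarged histories $\hist_t\cup\{(x_t,y^{\hx,\hy})\}$, with the edge sets required relative to $\hist_t$ once the extra labeled example $(x_t,y^{\hx,\hy})$ is forced onto every path through that subtree. The pruning step in (b) is also a little delicate — one must confirm that completeness and the common height survive — but this hinges only on $\hist\neq\emptyset$.
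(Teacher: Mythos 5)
Your proposal is correct and follows essentially the same route as the paper: the same min--max SOA-style algorithm over responses in the current history, the same monotonicity lemma for \Tdim\ under shrinking the teacher class and growing the history (proved by the same edge-pruning argument), and the same key step of gluing the height-$d$ shattered subtrees under a new root $\node{\bot,\bot,x_t}$ with relabeled subtree roots, including the same verification that the only new labeled example $(x_t,y)$ on the path reconciles the edge sets. The only differences are presentational: you argue the progress step by contraposition and treat the $\Phi_t=0$ case explicitly, whereas the paper argues by contradiction using the convention $\Tdim(\emptyset,\hist)=-1$ (and, under its equality assumption, gets subtrees of height exactly $d$ without the implicit truncation your version relies on).
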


  \begin{lemma} \label{lem:at-least}
    Let $\cT,\cH$ as in \thmref{tdim}. 
    For any deterministic DFF algorithm $\cA$, $M(\cA, \cT, \hist) \geq \Tdim(\cT, \hist)$.
  \end{lemma}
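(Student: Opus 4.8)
\textbf{Proof proposal for \lemref{at-least}.} The plan is a standard online-adversary argument, adapted to the DFF protocol: fix a deterministic DFF algorithm $\cA$, let $d := \Tdim(\cT, \hist)$, and let $\tree$ be a \dfft\ of height $d$ shattered by $\cT$ and $\hist$ (which exists by definition of $\Tdim$). Because $\cA$ is deterministic, its prediction and explanation at any point are completely determined by $\hist$ and the examples and teacher feedback seen so far. I will use this to simulate $\cA$ and select a single root-to-leaf path of $\tree$ along which $\cA$ is forced to err in every one of the $d$ rounds; feeding $\cA$ the example sequence on that path together with the feedback of an appropriate teacher in $\cT_\hist$ then witnesses $M(\cA, \cT, \hist) \geq d$.

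The walk down $\tree$ is the core of the argument. Starting at the root $\node{\bot, \bot, x_1}$, present $x_1$ to $\cA$ (on history $\hist$). Inductively, suppose we have reached a non-leaf node $v$ at depth $t-1$ with example $x_t$, that $\cA$ has made a mistake in each of rounds $1, \dots, t-1$, and that the set of labeled examples $\cA$ has so far observed is exactly $\hist$ together with the labeled examples $(x_1, y_1), \dots, (x_{t-1}, y_{t-1})$ on the path from the root to $v$. Present $x_t$; let $(\hx_t, \hy_t)$ be $\cA$'s response, where $\hx_t$ was previously observed with label $\hy_t$, so $(\hx_t, \hy_t)$ lies in the set of observed labeled examples. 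By \defref{sdfft}(\ref{sdfft:edges}) this set equals the set of edge labels of $v$, so $(\hx_t, \hy_t)$ labels some outgoing edge of $v$; follow it to the child $\node{y_t, \phi_t, x_{t+1}}$. By \defref{sdfft}(\ref{sdfft:label}) we have $y_t \neq \hy_t$, so $\cA$ erred in round $t$; hand $\cA$ the feedback $(y_t, \phi_t)$, which restores the invariant. Since $\hist \neq \emptyset$, $\cA$ always has a legal response, and by \defref{sdfft}(\ref{sdfft:complete}) the walk reaches a leaf after exactly $d$ edges, having forced a mistake in each of the $d$ rounds.

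It remains to check that the resulting interaction is legitimate, i.e., that all feedback is consistent with a single teacher in $\cT$. By \defref{sdfft}(\ref{sdfft:consistent}), the traversed root-to-leaf path is consistent with some $T = (\ell, \psi) \in \cT_\hist$. Then $T$ is consistent with $\hist$, and for each round $t$ we have $\ell(x_t) = y_t$ and, since $y_t \neq \hy_t$, also $\psi(x_t, \hx_t) = \phi_t$; hence running $\cA$ with history $\hist$ on the example sequence $x_1, \dots, x_d$ against teacher $T$ reproduces exactly the interaction constructed above, in which $\cA$ makes $d$ mistakes. Therefore $M(\cA, \cT, \hist) \geq d = \Tdim(\cT, \hist)$.

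I expect the one point that needs to be written with care is the invariant used in the walk: that the labeled examples available to $\cA$ at the depth-$(t-1)$ node coincide exactly with the labels of that node's outgoing edges. This is what guarantees the walk never gets stuck and that $\cA$ genuinely has no escape from a mistake, and it relies on combining \defref{sdfft}(\ref{sdfft:label}) (every prediction along a shattered path is incorrect, so each example is indeed revealed with its true label) with \defref{sdfft}(\ref{sdfft:edges}) (edges are labeled by precisely $\hist$ plus the path's labeled examples). The remaining steps are bookkeeping, with the only mild subtlety being that the teacher is chosen \emph{after} simulating the deterministic $\cA$ to fix the path, rather than reacting adaptively round by round.
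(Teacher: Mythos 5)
Your proposal is correct and follows essentially the same argument as the paper: an adversarial walk down a shattered \dfft, using property \ref{sdfft:edges} to ensure the algorithm's response matches an outgoing edge, property \ref{sdfft:label} to force a mistake at every step, property \ref{sdfft:complete} to get $d$ rounds, and property \ref{sdfft:consistent} to exhibit a consistent teacher after the (deterministic) path is fixed. The invariant you highlight about the available labeled examples is implicit in the paper's proof and is handled the same way.
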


  To prove \lemref{soa-mistakes}, we first prove a structural property of the DFF dimension.

\begin{lemma}\label{lem:tdim-hist-and-teachers}
    Let $\cT_1, \cT_2$ be teacher classes such that $\cT_1 \subseteq \cT_2$. Let $\hist_1, \hist_2 \subseteq \cX \times \cY$ be (non-empty) histories such that $\hist_2 \subseteq \hist_1$. Then $\Tdim(\cT_1, \hist_1) \leq \Tdim(\cT_2, \hist_2)$.
\end{lemma}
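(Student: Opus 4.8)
The plan is to take a \dfft\ of height $d := \Tdim(\cT_1, \hist_1)$ that is shattered by $\cT_1$ and $\hist_1$, and to turn it, by deleting nodes and edges only, into a \dfft\ of the same height that is shattered by $\cT_2$ and $\hist_2$. The two weakenings of the hypotheses play asymmetric roles. Enlarging the teacher class from $\cT_1$ to $\cT_2$ requires no change to the tree at all: among the conditions of \defref{sdfft}, only condition 3 refers to the teacher class, and since $\cT_1 \subseteq \cT_2$ and every teacher of $\cT_1$ that is consistent with $\hist_1$ is also consistent with $\hist_2 \subseteq \hist_1$, we get $(\cT_1)_{\hist_1} \subseteq (\cT_2)_{\hist_2}$; so any path consistent with a teacher in $(\cT_1)_{\hist_1}$ is also consistent with a teacher in $(\cT_2)_{\hist_2}$. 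The same containment shows $(\cT_2)_{\hist_2} \neq \emptyset$, so $\hist_2$ is consistent with $\cT_2$ and the notion of a shattered \dfft\ for $(\cT_2, \hist_2)$ is well defined.

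Shrinking the history is what forces a modification of the tree. Condition 2 of \defref{sdfft} demands that the outgoing edges of an internal node $v$ be labelled by \emph{exactly} the pairs in the history together with the labelled examples on the path from the root to $v$; with the smaller history $\hist_2$ the original tree has superfluous edges at any node where some pair in $\hist_1 \setminus \hist_2$ is not a labelled example on the root-to-$v$ path (already at the root, whose outgoing edges are exactly $\hist_1$). I would therefore prune the tree top-down: keep the root, and at each kept internal node $v$ with root-to-$v$ path $P_v$ and set $L(P_v)$ of labelled examples on $P_v$, keep exactly those outgoing edges of $v$ whose label lies in $\hist_2 \cup L(P_v)$, delete every other outgoing edge together with its whole subtree, and recurse into the surviving children. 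This is well posed: since kept paths are inherited verbatim from the original tree, $L(P_v)$ is unchanged, and since $\hist_2 \cup L(P_v) \subseteq \hist_1 \cup L(P_v)$ — which is precisely the set of edge labels at $v$ in the original tree — every edge we wish to keep really is present.

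What remains is to verify that the pruned tree $\mathcal{D}'$ is a shattered \dfft\ of height $d$ for $(\cT_2, \hist_2)$. Being a \dfft\ in the sense of \defref{dfft} and satisfying condition 1 of \defref{sdfft} are inherited, since $\mathcal{D}'$ only removes nodes and edges; condition 2 holds at each internal node by construction. The one genuine obstacle is to rule out that pruning shortens a branch, i.e. to show that every node that was internal in the original tree and survives in $\mathcal{D}'$ still has at least one outgoing edge. This is exactly where the hypothesis that $\hist_2$ is non-empty is used: $\emptyset \neq \hist_2 \subseteq \hist_2 \cup L(P_v)$, so at least one outgoing edge of $v$ is kept. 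Consequently every root-to-leaf path of $\mathcal{D}'$ is a root-to-leaf path of the original tree and thus has length exactly $d$, giving completeness (condition 4); and each such path, being a path of the original shattered tree, is consistent with some $T \in (\cT_1)_{\hist_1} \subseteq (\cT_2)_{\hist_2}$, giving condition 3. Hence $\mathcal{D}'$ witnesses $\Tdim(\cT_2, \hist_2) \geq d = \Tdim(\cT_1, \hist_1)$, as required.
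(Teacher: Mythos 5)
Your proposal is correct and follows essentially the same route as the paper: prune from the shattered \dfft\ for $(\cT_1,\hist_1)$ every edge labeled by a pair in $\hist_1\setminus\hist_2$ that is not a labeled example on the path to its source node, then check the four shattering conditions, using $\hist_2\neq\emptyset$ to guarantee no branch is shortened and $(\cT_1)_{\hist_1}\subseteq(\cT_2)_{\hist_2}$ for consistency. The only (welcome) refinement over the paper's write-up is that you make the containment $(\cT_1)_{\hist_1}\subseteq(\cT_2)_{\hist_2}$ and the well-definedness of shattering for $(\cT_2,\hist_2)$ explicit.
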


\begin{proof}
  Let $\treenum{1}$ be a shattered \dfft\ for $\cT_1, \hist_1$ of height $\Tdim(\cT_1, \hist_1)$. We construct a shattered \dfft\ for $\cT_2, \hist_2$ of the same height, thus proving that $\Tdim(\cT_2, \hist_2) \geq \Tdim(\cT_1, \hist_1)$.

  $\treenum{2}$ is obtained by removing from $\treenum{1}$ all edges (and the sub-trees descended from them) that are labeled by $(\hx,\hy) \in \hist_1 \setminus \hist_2$, such that $(\hx,\hy)$ is not a labeled example in the path from the root of the tree to the source node of the edge. 
  We show that $\treenum{2}$ is a shattered \dfft\ for $\cT_2, \hist_2$ and that it has the same height as $\treenum{1}$.

  It is easy to see that property \ref{sdfft:label} (different labels) holds for $\treenum{2}$, since it is a sub-graph of $\treenum{1}$. 
Property \ref{sdfft:edges} (outgoing edges) follows by the fact that all remaining outgoing edges of each node satisfy one of the conditions of this property, since $\hist_2 \subseteq \hist_1$. In addition, all required edges exist in $\treenum{2}$, since they all exist in $\treenum{1}$ and are not removed.
Property \ref{sdfft:consistent} (consistency with a teacher in $(\cT_2)_{\hist_2}$) follows from the same property in $\treenum{1}$ for $\cT_1,\hist_1$, since every path from the root to a leaf in $\treenum{2}$ is also such a path in $\treenum{1}$, and since $\cT_2 \supseteq \cT_1$ and $\hist_2 \subseteq \hist_1$.

Lastly, for property \ref{sdfft:complete} (tree completeness), we show that since $\treenum{1}$ is complete, so is $\treenum{2}$. Observe that by property \ref{sdfft:edges} of $\treenum{1}$ and since $\emptyset \neq \hist_2 \subseteq \hist_1$, every node in $\treenum{1}$ has at least one outgoing edge $(\hx,\hy) \in \hist_2$. Therefore, this node is a non-leaf in $\treenum{2}$ as well. Thus, all remaining paths from root to leaf in $\treenum{2}$ are of the same length as they have in $\treenum{1}$. It follows that $\treenum{2}$ is also complete. This also implies that $\treenum{2}$ has the same height as $\treenum{1}$.

To conclude, $\treenum{2}$ is a shattered \dfft\ for $\cT_2, \hist_2$ and has height $\Tdim(\cT_1, \hist_1)$. 
Therefore, $\Tdim(\cT_2, \hist_2) \geq \Tdim(\cT_1, \hist_1)$.
\end{proof}

We now prove the mistake bound of \soadff.
\begin{proof}[Proof of \lemref{soa-mistakes}]
    We show that for every round $t$ in which \soadff\ makes a mistake,
  \begin{equation}\label{eq:strict}
    \Tdim(V^{t+1}, \histnum{t+1}) < \Tdim(V^t,\histnum{t}).
  \end{equation}
  It follows that \soadff\ makes at most $\Tdim(V^1,\histnum{1}) = \Tdim(\cT, \hist)$ mistakes. 

  By \myalgref{soadff}, we have
  $V^{t+1} = V^t|_{(x_t,\hat{x}_t,y_t,\phi_t)}$ and $\histnum{t+1} = \histnum{t} \cup \{(x_t, y_t)\}$.
Thus, by \lemref{tdim-hist-and-teachers}, $\Tdim(V^{t+1}, \histnum{t+1}) \leq \Tdim(V^t,\histnum{t})$. It therefore suffices to show that the inequality is strict.
  Let $t$ be a round in which \soadff\ makes a mistake. Assume for the sake of contradiction that 
    $\Tdim(V^{t+1}, \histnum{t+1}) = \Tdim(V^t, \histnum{t}).$

      By \myalgref{soadff},
      \[
        (\hat{x}_t, \hat{y}_t) = \argmin_{(\hat{x},\hat{y}) \in \histnum{t}} \max_{y \neq \hat{y}, \phi \in \Phi} \Tdim(V^t|_{(x_t,\hat{x},y,\phi)}, \histnum{t} \cup \{(x_t, y)\}).
          \]

    Let $\histnum{t} = \{(\tilde{x}_1,\tilde{y}_1),\ldots,(\tilde{x}_n,\tilde{y}_n)\}$.  Then, for every $i \in [n]$ there exist $\overline{y}_i \in \cY$ and $\phi_i \in \Phi$ such that $\overline{y}_i \neq \tilde{y}_i$ and, denoting  
    $V^t_i := V^t|_{(x_t,\tilde{x}_i, \overline{y}_i, \phi_i)}, \histnumi{t} := \histnum{t} \cup \{(x_t, \overline{y}_i)\}$, it holds that
    \[
    \Tdim(V^{t+1}, \histnum{t+1}) \leq \Tdim(V^t_i, \histnumi{t}) \leq \Tdim(V^t, \histnum{t}).
  \]
  The first inequality follows from the choice of $(\hx_t, \hy_t)$, and the right inequality follows from \lemref{tdim-hist-and-teachers}.
  By the assumption, the RHS and the LHS above are equal. Thus, 
  \[
    \Tdim(V^{t+1}, \histnum{t+1}) = \Tdim(V^t_i, \histnumi{t}).
  \]

  For $i \in [n]$, let $\treenum{i}$ be a shattered \dfft\ for $V^{t}_i$ and $\histnumi{t}$, with height $\Tdim(V^{t}_i, \histnumi{t}) = \Tdim(V^t, \histnum{t})$. We construct a shattered \dfft\ for $V^t$ with history $\histnum{t}$ with height $\Tdim(V^t, \histnum{t}) + 1$, thus contradicting the definition of DFF dimension.

  The shattered \dfft, denoted $\tree$, is constructed as follows (see \figref{soa-figure} for an illustration).
  The root of $\tree$ is set to $\node{\bot, \bot, x_t}$. Its outgoing edges are set to edges labeled by $(\tilde{x}_i, \tilde{y}_i)$ for all $i \in [n]$. The target node of the edge labeled $(\tilde{x}_i, \tilde{y}_i)$ is set to $v_i := \node{\overline{y}_i, \phi_i, x_r^i}$, where $\overline{y}_i, \phi_i$ are as defined above and $x_r^i$ is the example at the root of $\treenum{i}$. The subtree descending from $v_i$ is set to be identical to the subtree descending from the root of $\treenum{i}$.

\begin{figure}[ht]
    \centering

    \resizebox{0.5\textwidth}{!}{
\begin{tikzpicture}[
  node distance=1.8cm and 2cm,
  every node/.style={draw, rounded corners, align=center},
  ->, >=Stealth
]

\node (x0) {$\node{\bot, \bot, x_t}$};

\node (x1) [below left=of x0, xshift=0.75cm] {$\node{\overline{y}_1, \phi_1, x^1_r}$};

\node (xi) [below=of x0] {$\node{\overline{y}_i, \phi_i, x^i_r}$};
\node (xn) [below right=of x0, xshift=-0.75cm] {$\node{\overline{y}_n, \phi_n, x^n_r}$};

\node[draw=none, inner sep=0] at ($(x1)!0.985!(xi) + (-1.5cm,0)$) {$\ldots$};
\node[draw=none, inner sep=0] at ($(xi)!0.95!(xn) + (-1.5cm,0)$) {$\ldots$};

\usetikzlibrary{shapes.geometric, positioning}

\node[
    regular polygon,
    regular polygon sides=3,
    minimum size=6mm,
    align=center,
    shape border rotate=0,
    below=-0.09cm of x1
] (tr1) {\scalebox{1}{$\treenum{1}$}};

\node[
    regular polygon,
    regular polygon sides=3,
    minimum size=6mm,
    align=center,
    shape border rotate=0,
    below=-0.09cm of xi
] (tri) {\scalebox{1.1}{$\treenum{i}$}};

\node[
    regular polygon,
    regular polygon sides=3,
    minimum size=6mm,
    align=center,
    shape border rotate=0,
    below=-0.09cm of xn
] (trn) {\scalebox{1}{$\treenum{n}$}};

\draw (x0) to[out=210, in=60] node[midway, right,yshift=-4pt, draw=none] {$\myedge{\tilde{x}_1,\tilde{y}_1}$} (x1);
\draw (x0) -- node[midway, right,xshift=-2pt, draw=none] {$\myedge{\tilde{x}_i,\tilde{y}_i}$} (xi);

\draw (x0) to[out=-30, in=120] node[midway, right,xshift=4pt, draw=none] {$\myedge{\tilde{x}_n,\tilde{y}_n}$} (xn);

\end{tikzpicture}
}
\caption{The construction of $\tree$ in the proof of \lemref{soa-mistakes}}
    \label{fig:soa-figure}

  \end{figure}

    It is easy to verify that $\tree$ is a \dfft\ of height $\Tdim(V^t, \histnum{t}) + 1$. We now prove that it is shattered by $V^t, \histnum{t}$ by verifying the properties in \defref{sdfft}. 
    To show property \ref{sdfft:label} (different labels), let $u=\node{y_u, \phi_u, x_u}$ be a node in $\tree$, and let its incoming edge be labeled $e = (x_e, y_e)$. Property \ref{sdfft:label} requires that $y_u \neq y_e$. If the source of this edge is the root node, then $(x_e,y_e) = (\tilde{x}_i,\tilde{y}_i)$ for some $i$, and $y_u = \overline{y}_i \neq \tilde{y}_i$, satisfying the property. Otherwise, $u$ is a node in one of the $\treenum{i}$, in which case the property holds because $\treenum{i}$ is shattered.
    
    To show property \ref{sdfft:edges} (edge labels), we first show that all outgoing edges satisfy one of the two possible conditions for labeled edges. 
     Consider an edge $e$ in $\tree$ labeled by $(x_e, y_e)$ with a source node $v$. If the edge is outgoing from the root, then by the construction of $\tree$, $(x_e, y_e) \in \histnum{t}$. 
     Otherwise, $e$ is a an edge that in a sub-tree taken from $\treenum{i}$. Since $\treenum{i}$ is a shattered \dfft, it satisfies property \ref{sdfft:edges}, thus one of the two options hold:
        \begin{itemize}
        \item $(x,y) \in \histnumi{t} = \histnum{t} \cup \{x_t, \overline{y}_i \}$. If $(x,y) \in \histnum{t}$, then the property holds for $\tree$ using the first case. Otherwise, $(x,y) = (x_t, \overline{y}_i)$. In this case, since the parent-child pair $\node{\bot,\bot, x_t} \rightarrow \node{\overline{y}_i, \phi_i, x_r^i}$ are in the path from the root to $v$, the property is satisfied using the second case. 
        \item In $\treenum{i}$, these is a labeled example $(x,y)$ in the path from the root to the node that was mapped to $v$ in $\tree$. From the construction, the same holds for $v$ in $\tree$.
        \end{itemize}

    Property \ref{sdfft:edges} further requires that the outgoing edges of each non-leaf node are all the possible labeled edges according to the two conditions. For the root node, this holds since the outgoing edges of this node correspond to all the labeled examples in \hist, and there are no other labeled examples
    For a node that was mapped from some $\treenum{i}$ (including its revised root), note that the only difference in the set of possible outgoing edge labels is due to the example $(x_t, \overline{y}_i)$, which is a labeled example that appears in the path to this node in $\tree$ but possibly not in $\treenum{i}$. The property holds for this node, because it holds for the mapped node in $\treenum{i}$ with history $\histnumi{t} = \histnum{t} \cup \{ (x_t, \overline{y}_i)\}$. 

    Property \ref{sdfft:consistent} requires that each path from the root to a leaf is consistent with some teacher in $V^t_{\histnum{t}}$. Every such path in $\tree$ is of the form $\node{\bot,\bot,x_t} - (\tilde{x}_i,\tilde{y}_i) \rightarrow \node{\overline{y}_i, \phi_i, x_r^i} \rightarrow p$, where $p$ is a path in $\treenum{i}$ from a child of the root to a leaf. Since $\treenum{i}$ is shattered, there is a teacher from $V^t_i \subseteq V^t_{\histnumi{t}}$, where $\histnumi{t} \supseteq \histnum{t}$, that is consistent with the path $\node{\bot,\bot,x_r^i} \rightarrow p$. From the definition of $\histnumi{t}$, it follows that the same teacher is also consistent with the path in $\tree$.
    Lastly, for Property \ref{sdfft:complete}, it is easy to see that since $\treenum{i}$ are all complete trees, then so is $\tree$. 

    We have established that $\tree$ is a shattered \dfft\ for $V^t$ and $\histnum{t}$. It follows that $\Tdim(V^t, \histnum{t}) = 1+ \Tdim(V_i^t,\histnumi{t})$ for all $i$.
    This implies that $\Tdim(V^t, \histnum{t}) = \Tdim(V^{t+1}, \histnum{t+1})+1$, in contradiction to \lemref{tdim-hist-and-teachers}. Thus, the assumption is false and \eqref{strict} holds.
\end{proof}

Lastly, we prove the second part of the theorem, \lemref{at-least}, which states that any deterministic DFF algorithms has a mistake lower bound of at least \Tdim.
\begin{proof}[Proof of \lemref{at-least}]
  Let $\cA$ a deterministic DFF algorithm. Let $\tree$ be a shattered \dfft\ for $\cT$ with history $\hist$, with height $d = \Tdim(\cT, \hist)$. 
  We show that there exists a sequence of examples and a teacher for which $\cA$ makes at least $d$ mistakes, by traversing a path from the root of $\tree$ to a leaf and considering the interaction that it describes. 

 Denote the example at the root of the tree by $x_1$. Having observed $x_1$, the algorithm  deterministically provides an explanation example $\hat{x}_1$ and a predicted label $\hat{y}_1$, where $(\hx_1, \hy_1) \in \hist$. Since $\tree$ is  shattered by $\cT,\hist$, then according to property \ref{sdfft:edges} in \defref{sdfft}, the root has an outgoing edge labeled $(\hat{x}_1,\hat{y}_1)$.
 Let  $v_2  = \node{y, \phi, x_2}$ be the target node of this edge. Set the teacher feedback for the algorihm's response to the label $y$ with feature feedback $\phi$. By property \ref{sdfft:label}, $y \neq \hat{y}_1$. Thus, $\cA$ makes a mistake in this round.

  The next example in the sequence is set to $x_2$, which appears in $v_2$. This round and subsequent rounds proceed similarly: by the definition of a shattered \dfft, any pair of explanation and label provided by the algorithm is consistent with some outgoing edge of $v_2$. The teacher feedback and the next example are set by the target node $v_3$ of this edge, causing the algorithm to make another mistake. The run ends when reaching a leaf node. Since the height of $\tree$ is $d$ and $\tree$ is complete (by property \ref{sdfft:complete}), this results in a sequence of $d$ examples, on which $\cA$ makes $d$ mistakes. By property \ref{sdfft:consistent}, the feedback provided in this run is consistent with some teacher in $\cT$. This proves the claim. 
      \end{proof}

\thmref{tdim} follows directly from \lemref{soa-mistakes} and \lemref{at-least}.

  \section{Example: The \Tdim\ of a relaxed component model}\label{app:compex}

  The following DFF learning setting is a relaxed version of the component model first studied in \cite{DasguptaDeRoSa18}. In our setting, each component is defined via a conjunction of features that hold for all examples in the component, as well as a common label. In addition, the label $0$ is reserved for examples not in any component.

\begin{example}[Relaxed component model]\label{exm:AND-RM} 
  Let $L, R, M \in \nats$. Let $\cX$ be an arbitrary set of examples, and let $\Phi \subseteq \{0,1\}^\cX$ be an arbitrary set of features that is closed under negation. For a set $S \subseteq \Phi$, we say that $x$ satisfies $S$ if $\prod_{\phi \in S} \phi(x) = 1$ and denote $S(x) := \prod_{\phi \in S} \phi(x)$. 
    Define $\cY = \{0,\ldots, L\}$. 
    Each labeling function in the teacher class is defined by a collection of (at most) $R$ sets of features $S_1,\ldots,S_R \subseteq \Phi$, where each set is of size (at most) $M$. Given $\cS = \{S_1,\ldots,S_R\}$ and a mapping $q:[R] \rightarrow [L]$,
    the label of an example $x$ is $q(j)$ if $S_j(x)$ holds. If more than one label satisfies this condition for any $x \in \cX$, then the labeling function for $(\cS,q)$ is undefined. If no $j$ satisfies $S_j(x)$ then the label of $x$ is set to $0$. Denote the labeling function for $\cS,q$ by $\ell_{\cS,q}$. 

    Let $\cL := \{ (\cS, q) \mid \ell_{\cS,q}\text{ is defined}\}$. 

For any $(\cS,q) \in \cL$, we define the set of possible teachers $\cT_{\cS,q}$, by fixing their labeling function to $\ell_{\cS,q}$, but allowing different feature feedback functions $\psi$. The feature feedback function of each teacher in $\cT_{\cS,q}$ satisfies the following:
There exists a mapping $S:\cX \rightarrow \cS$ which selects the ``primary'' conjunction for each $x$ with a non-zero label. Formally, for each $x$ such that $\ell_{\cS,q}(x) = i \neq 0$, there is some $j$ such that $S(x) = S_j$, $S_j(x)$ holds and $q(j) = i$. Denote $F(x) = \{ \phi \mid \phi(x) \text{ holds.}\}$
For a set $A \subseteq \Phi$, denote $\neg A = \{ \neg \phi \mid \phi \in A\}$. 
Then 
  the $\psi$ satisfies:
\[
  \text{If }\ell_{\cS,q}(\hat{x}) \neq 0, \psi(x,\hat{x}) \in  F(x) \cap \neg S(\hat{x}). \text{ Otherwise, }\psi(x,\hat{x}) \in  S(x) \cap \neg F(\hat{x}).
\]
Note that this implies also that $\psi(x,\hat{x})$ satisfies $x$ and does not satisfy $\hat{x}$, as required.

$\cT_{\cS, q}$ is the set of all teachers with labeling function $\ell_{\cS,q}$ and a $\psi$ function that satisfies the requirement above.
The teacher class is set to $\cT = \cup_{(\cS, q) \in \cL} \cT_{\cS,q}$.

\end{example}

  \begin{figure}[ht]
    \centering

    \resizebox{0.9\textwidth}{!}{
\begin{tikzpicture}[
  node distance=1.2cm and 2cm,
  every node/.style={draw, rounded corners, align=center},
  ->, >=Stealth
]

\node (x1) {$\node{\bot, \bot, \overline{01}}$};

\node (x2) [below=of x1] {$\node{1, f_2, 01\overline{0^31}}$};

\node (x3_0) [below left=of x2] {$\node{1,f_6, 010^31\overline{0^71}}$};
\node (x3_1) [below right=of x2] {$\node{0, \neg f_4, 0101\overline{0^71}}$};

\node (leaf1_0) [below left=of x3_0, xshift=1.5cm] {$\node{1, f_{14}, \bot}$};
\node (leaf2_0) [below=of x3_0] {$\node{0, \neg f_{10}, \bot}$};
\node (leaf3_0) [below right=of x3_0, xshift=-1.5cm] {$\node{0, \neg f_{10}, \bot}$};

\node (leaf1_1) [below left=of x3_1, xshift=1.5cm] {$\node{1, f_{12}, \bot}$};
\node (leaf2_1) [below=of x3_1] {$\node{0, \neg f_{8}, \bot}$};
\node (leaf3_1) [below right=of x3_1, xshift=-1.5cm] {$\node{1, f_{12}, \bot}$};

\draw (x1) -- node[midway, right, draw=none] {$\myedge{\overline{0},0}$} (x2);

\draw (x2) -- node[midway, right, xshift=15pt, draw=none] {$\myedge{\overline{0},0}$} (x3_0);
\draw (x2) -- node[midway, right, xshift=10pt, draw=none] {$\myedge{\overline{01},1}$} (x3_1);

\draw (x3_0) to[out=210, in=60] node[midway, right,xshift=2.5pt, yshift = -5pt, draw=none] {$\myedge{\overline{0},0}$} (leaf1_0);
\draw (x3_0) -- node[midway, right,xshift=-2pt, draw=none] {$\myedge{\overline{01},1}$} (leaf2_0);

\draw (x3_0) to[out=-30, in=120] node[midway, right,xshift=2.5pt, yshift = 4pt, draw=none] {$\myedge{01\overline{0^31}, 1}$} (leaf3_0);

\draw (x3_1) to[out=210, in=60] node[midway, right,xshift=2.5pt, yshift = -4pt, draw=none]{$\myedge{\overline{0},0}$} (leaf1_1);
\draw (x3_1) -- node[midway, right,xshift=-2pt, draw=none] {$\myedge{\overline{01},1}$} (leaf2_1);
\draw (x3_1)  to[out=-30, in=120] node[midway, right,xshift=7pt, yshift = 2pt, draw=none] {$\myedge{01\overline{0^31}, 0}$} (leaf3_1);

\end{tikzpicture}
}
\caption{An example of a shattered \dfft\ of height $3$ for the teacher
class of the relaxed component model (\exmref{AND-RM}) with $L=1$, $R=1$, $M=3$, and
$\hist=\{(\overline{0},0)\}$.}
\label{fig:relaxed-component-shattered-dfft}
\end{figure}

The following result provides an upper bound on the \Tdim\ of any example of the form in \exmref{AND-RM}.
\begin{theorem}\label{thm:AND-RM-below}
  Given $L, R, M \in \nats$, $\cX$ and $\Phi \subseteq \truthvalues^\cX$,
 let $\cT$ be defined as in \exmref{AND-RM}. Let $\hist = \{(x_0, 0)\}$ for some $x_0 \in \cX$ such that $\cT$ is consistent with $\hist$. Then $\Tdim(\cT, \hist) \leq RM$.
\end{theorem}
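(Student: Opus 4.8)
The plan is to invoke \thmref{tdim}: since $\Tdim(\cT,\hist)=\min_{\cA}M(\cA,\cT,\hist)$, it suffices to design a deterministic DFF algorithm $\cA$ with $M(\cA,\cT,\hist)\le RM$. Fix throughout the (unknown) true teacher $T=(\ell_{\cS,q},\psi)$ consistent with $\hist$ (so in particular $\ell_{\cS,q}(x_0)=0$), write $\ell:=\ell_{\cS,q}$, let $\cS$ consist of at most $R$ conjunctions $S_1,\dots,S_R$ of size at most $M$ each, and let $S:\cX\to\cS$ be the primary-conjunction map of $\psi$. The algorithm maintains a collection $\cC$ of tuples $(K_\alpha,c_\alpha,r_\alpha)$ with $K_\alpha\subseteq\Phi$, $c_\alpha\in\cY\setminus\{0\}$, $r_\alpha\in\cX$, initialized to $\cC=\emptyset$; it will maintain the invariant that each $\alpha$ has an index $j_\alpha\in[R]$ with $S_{j_\alpha}=S(r_\alpha)$, $q(j_\alpha)=c_\alpha=\ell(r_\alpha)$, $K_\alpha\subseteq S_{j_\alpha}$, the example $r_\alpha$ satisfies all of $S_{j_\alpha}$, the pair $(r_\alpha,c_\alpha)$ has already been observed by the learner, and the $S_{j_\alpha}$ are pairwise distinct across $\alpha$. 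Given a new $x_t$, the algorithm predicts $c_\alpha$ with explanation $r_\alpha$ if some $(K_\alpha,c_\alpha,r_\alpha)\in\cC$ has $x_t$ satisfying all of $K_\alpha$ (breaking ties by a fixed rule), and otherwise predicts $0$ with explanation $x_0$; both responses are protocol-legal, since $(x_0,0)\in\hist$ and, by the invariant, $(r_\alpha,c_\alpha)$ was observed. On a mistake it updates: in \textbf{Case A} it predicted $0$, so $y_t\neq0$, and since $\ell(x_0)=0$ the feedback obeys $\phi_t\in S(x_t)\cap\neg F(x_0)$, so $\phi_t$ lies in the primary conjunction $S(x_t)=S_j$ of $x_t$, where $q(j)=y_t$ and $x_t$ satisfies all of $S_j$; the algorithm adds the tuple $(\{\phi_t\},y_t,x_t)$ to $\cC$, with $j_\alpha:=j$ and $r_\alpha:=x_t$. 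In \textbf{Case B} it predicted $c_\alpha$ via tuple $\alpha$ with explanation $r_\alpha$, so since $\ell(r_\alpha)=c_\alpha\neq0$ the feedback obeys $\phi_t\in F(x_t)\cap\neg S(r_\alpha)$, i.e.\ $\neg\phi_t\in S(r_\alpha)=S_{j_\alpha}$; the algorithm adds $\neg\phi_t$ to $K_\alpha$.

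Next I verify the invariant and count the mistakes. At a Case-A creation, $r_\alpha=x_t$ has $S(r_\alpha)=S(x_t)=S_j=S_{j_\alpha}$ by definition of the primary-conjunction map, $\ell(x_t)=y_t=c_\alpha$ was just revealed (hence observed), $x_t$ satisfies all of $S_j$, and $K_\alpha=\{\phi_t\}\subseteq S_j$; the only nontrivial point is distinctness of the conjunctions, which holds because if $S_j=S_{j_\beta}$ for a pre-existing $\beta$, then $K_\beta\subseteq S_{j_\beta}=S_j$ and $x_t$ satisfies all of $S_j\supseteq K_\beta$, so the algorithm would have predicted via $\beta$ rather than $0$, contradicting that Case A occurred. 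In Case B the invariant is preserved: $\neg\phi_t\in S_{j_\alpha}$ keeps $K_\alpha\subseteq S_{j_\alpha}$, and $r_\alpha$ (which satisfies all of $S_{j_\alpha}$) still satisfies every feature of $K_\alpha$; moreover $\neg\phi_t\notin K_\alpha$, since $\phi_t\in F(x_t)$ forces $x_t$ not to satisfy $\neg\phi_t$, whereas $x_t$ satisfies all of $K_\alpha$. Consequently each Case-A mistake adds a tuple carrying a conjunction from $\{S_1,\dots,S_R\}$ not previously represented, so there are at most $R$ Case-A mistakes and $|\cC|\le R$ at all times; and each Case-B mistake charged to a tuple $\alpha$ strictly enlarges $K_\alpha$ inside $S_{j_\alpha}$, so $\alpha$ is charged at most $|S_{j_\alpha}|-1\le M-1$ times. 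Summing, the total number of mistakes equals $\sum_\alpha(1+\#\{\text{Case-B mistakes on }\alpha\})=\sum_\alpha|K_\alpha|$ (final value) $\le\sum_\alpha|S_{j_\alpha}|\le RM$, which via \thmref{tdim} gives $\Tdim(\cT,\hist)\le RM$.

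The place where the relaxation must be handled with care is the \emph{exact} invariant $S(r_\alpha)=S_{j_\alpha}$ together with $K_\alpha\subseteq S_{j_\alpha}$: in the relaxed model a discriminative feature need only relate to one side of the pair, so the Case-B feedback $\neg\phi_t$ is only guaranteed to lie in $S(r_\alpha)$ rather than in the conjunction of $x_t$, and $q$ need not be injective so one cannot safely merge tuples by label. The fix is to tie each tuple's representative $r_\alpha$ to the very example $x_{t_0}$ that triggered its creation, so that $S(r_\alpha)$ is unambiguously the conjunction the algorithm commits to, every feature later added to $K_\alpha$ genuinely comes from that conjunction, the ``same conjunction seen twice'' contradiction in the distinctness step goes through, and $r_\alpha$ remains a valid explanation for $c_\alpha$ for the rest of the run. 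The remaining points — that every emitted response is protocol-legal and that consistency of $\cT$ with $\hist$ is all that is used — are routine.
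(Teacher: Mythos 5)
Your proposal is correct and follows essentially the same route as the paper: exhibit a deterministic algorithm with mistake bound $RM$ (the DDRS18-style algorithm maintaining conjunctions attached to labeled examples, with the default example $(x_0,0)$), note via the relaxed feedback rule that every feature added to a conjunction comes from the primary conjunction $S(\cdot)$ of the attached example (from $S(r_\alpha)$ for pre-existing tuples, from $S(x_t)$ for new ones), and invoke \thmref{tdim}. The only difference is that you prove the $R$-and-$M$ counting (distinctness of represented conjunctions and strict growth of each $K_\alpha$) from scratch, where the paper's sketch delegates these steps to Lemmas 3 and 4 of \citet{DasguptaDeRoSa18}.
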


To prove this upper bound, we use a slight variant of the algorithm provided in \cite{DasguptaDeRoSa18} for the component model. The algorithm provided there assumes an initial arbitrary labeled example. The algorithm maintains a list of conjunctions attached to previously observed labeled examples. When a new example arrives, the algorithm finds a conjunction satisfied by the example and outputs the labeled example associated with this conjunction. If the predicted label is incorrect, the algorithm adds the negation of the resulting feature feedback to the conjunction of the labeled example it used for prediction. If no existing conjunction matches the presented example, the algorithm predicts using the default labeled example. If this prediction is incorrect, the algorithm creates a new conjunction that includes only the provided feature feedback and attaches to it the current example with its true label. The mistake bound of \citet[Lemma 4]{DasguptaDeRoSa18} for this algorithm is proved for the (original) component model, and provides a bound of $R'M'$, where $R'$ is the number of conjunctions created by the algorithm and $M'$ is the maximal length of a conjunction created by the algorithm. The proof for the relaxed version in \exmref{AND-RM} is almost identical. The differences are minor, and are delineated in the proof sketch below.

\begin{proof}[Proof Sketch (\thmref{AND-RM-below})]
  The upper bound is obtained by showing the existence of a deterministic algorithm that obtains a mistake upper bound of $RM$ for the given type of DFF problem. This is achieved by observing that the analysis that derives the mistake bound of the algorithm of \cite{DasguptaDeRoSa18} for the component model can be easily adapted for examples of type \exmref{AND-RM} as follows. The default labeled example assumed in \cite{DasguptaDeRoSa18} is replaced by the single labeled example $(x_0,0) \in \hist$. Since a conjunction is never created for the default label by the algorithm, this guarantees that all conjunctions created by the algorithm are assigned an example that satisfies one of the conjunctions $S_j$ and has a non-zero label. It follows from the definition of $\psi$ for teachers in $\cT$ that all features added to conjunctions by the algorithm do belong to the component $S(x)$ of the example $x$ attached to the conjunction: if the conjunction is pre-existing, $x= \hat{x}$ and the feature is from $S(\hat{x})$, and if the conjunction is new then $x = x_t$ and the feature is from $S(x_t)$. We conclude that the maximal length $M'$ of a conjunction created by the algorithm is $M$. In addition, the analysis of \citet[Lemma 3]{DasguptaDeRoSa18} holds, showing that the number of rules $R'$ is at most $R$. Thus, \citet[Lemma 4]{DasguptaDeRoSa18} holds for \exmref{AND-RM} and provides a mistake bound of $RM$. 
\end{proof}

Using the definition of \Tdim, we can further show that for a given choice of $L, \cX, \phi$ and $\hist$, the \Tdim\ for \exmref{AND-RM} is in fact equal to the worst-case $RM$, by constructing an appropriate shattered \dfft. This can be compared to the lower bound provided in \citet{DasguptaSa20}, which is specific to the original component model and provides a lower bound equivalent to $RM/16$. That proof used a probabilistic analysis, which we can now avoid thanks to our new \Tdim\ construction.

\begin{theorem}\label{thm:AND-RM-Tdim}
  Set $L = 1$, and set $\cX, \Phi$ to be the natural feature construction defined in \secref{online}. For $\phi = f_n$ or $\phi = \neg f_n$, call $n$ the coordinate of $\phi$. Call $f_n$ a positive feature and $\neg f_n$ a negative feature.
  Let $\hist = \{(\bar{0},0)\}$. 
  Given $R, M \in \nats$, define $S_1,\ldots,S_R \subseteq \Phi$ to be mutually exclusive sets of positive features $f_n$ of size $R$. let $\cT$ be as in \exmref{AND-RM}. Then $\Tdim(\cT,\hist) = RM$. 
\end{theorem}

\begin{proof}
  The upper bound $\Tdim(\cT,\hist) \leq RM$ follows from \thmref{AND-RM-below}. For the lower bound, we construct a shattered \dfft\ of height $RM$ for $\cT, \hist$. Set the root node of the \dfft\ to $\node{\bot, \bot, \overline{01}}$. This node has a single child, with an edge labeled $(\bar{0},0)$.

We construct the rest of the tree inductively, such that in each path, each feature feedback is different from the previous ones in the path, and also consistent with all previous example labels. 
The sequence of features provided in each path from the root to a leaf determines $\cS$ for a teacher that is consistent with this path.

Let $A_v$ be the set of all $i \in \nats$ such that for all labeled examples $(x,y)$ in the path from the root to $v$, $x(i) = y$.
Formally, the inductive construction maintains the following properties for every node $v = \node{\cdot, \cdot, x}$. 
        \begin{itemize}
            \item If $x \neq \bot$, then for any $y \in \{0, 1\}$, $|A_v \cap \{ i \in \nats \mid x(i) = y\}| = \infty$.
            \item
              Let $v_1,\ldots,v_t = v$ be the nodes in the path from the root to $v$, and denote $v_i = \node{y_{i-1}, \phi_{i-1}, x_i}$. Let $I_v = (i_1,\ldots,i_{t-1})$ be the coordinates of the features $\phi_1,\ldots,\phi_{t-1}$. 
              For any $i \in [t-1]$, let $r_i$ be the maximal integer which is smaller than $i$ and divisible by $M$. Then the following holds: $\forall j \leq r_i, x_i(i_j) = 0$, and $\forall j \in \{r_i+1,\ldots, i-1\}, x_i(i_j) = 1$. In addition, for all $j \in \{i,\ldots,t-1\}$, $x_i(i_j) = y_i$. 
For instance, if $M = 3$ and $t = 10$ then $x_9(i_1,\ldots,i_{t-1}) =(0,0,0,0,0,0,1, 1, y_i,y_i)$.               
            \end{itemize}
            These properties hold trivially for the root node. We inductively construct descendant nodes as follows. Let $v = \node{\cdot, \cdot, x}$ be a node with outgoing edges and consider an outgoing edge from $v$ labeled $(\hat{x}, \hat{y})$ where $\hat{y} \in \{0,1\}$. Assume that the inductive hypothesis holds for $v$. We set the target node of this edge to $u = \node{y, \phi, x'}$, where $y = 1-\hat{y}$  and $\phi \in \Phi, x' \in \cX$ are defined below.

            Let $\bar{A}_u := A_u \setminus I_v$. These are feature coordinates that do not appear in the path from the root to $v$, whose value in all labeled examples on the path to $u$ agrees with their labels. Note that $\bar{A}_u$ is infinite, since $A_u = A_v \cap \{ i \in \nats \mid x(i) = y\}$ is infinite by the induction hypothesis, and $I_v$ is finite.  Let $i = \min \bar{A}_u$. Then for any labeled example $(x,y)$ in the path to $u$, $x(i) = y$.
            We set $\phi$ to $f_i$ if $y=1$ and to $\neg f_i$ if $y=0$.  
            Note that since $f_i(x) = x(i) = y$, in both cases $\phi(x) = 1$. In addition, since $(\hat{x},\hat{y})$ is $(\bar{0},0)$ or a labeled example from the path to $v$, we have $f_i(\hat{x}) = \hat{x}(i) = \hat{y} = 1-y$. Hence $\phi(\hat{x}) = 0$. 

            $x' \in \cX$ is defined as follows. Let $t$ be the depth of $u$. Let $I_u = (i_1,\ldots,i_{t})$. If $t = RM$, we set $x' = \bot$ and $u$ is set to be a leaf node. Otherwise, set $r \leq t$ to be the maximal integer that is divisible by $M$. Set $x'(i_j) = 0$  for all $j \leq r$, and $x'(i_j) = 1$ for all $j \in \{r+1,\ldots,t\}$. For coordinates in $\bar{A}_u\setminus I_u$, set their values to $0$ and $1$ in alternating order, so that infinitely many of such coordinates have a value of $0$ and infinitely many of them have a value of $1$. Other coordinates of $x'$ can have arbitrary values.  Define an outgoing edge from $u$ for every pair $(\hat{x}, \hat{y})$ in the labeled examples to $u$ or in $\hist$. An illustration of some of the construction of $x'$ is provided in \figref{path}.

\begin{figure}
         \begin{tabular}{llll}
$(\bot, \bot, x_1=\overline{01}) $
&$\xrightarrow{(\overline{0}, 0)}$ & $(1, f_2,$ & $x_2=\star 1\,\overline{\star \,0\star 1})$ \\
&$\xrightarrow{(x_1, 1)}$ & $ (0, \neg f_4,$ & $x_3=\star 1\star 1\,\overline{\star^30\star^31})$\\
&$\xrightarrow{(x_2, 0)}$ & $(1, f_{12}, $ & $x_4=\star 0 \star 0 \star ^70\,\overline{\star ^70\star ^71})$ \\
&$\xrightarrow{(x_3, 1)}$ & $(0, \neg f_{28},$ & $x_5=\star 0 \star 0\star ^70\star ^{15}1\,\overline{\star ^{15} 0\star ^{15}1}) \ldots $ 
         \end{tabular}
         \caption{An example of a path prefix starting from the root in the inductive construction of the tree in the proof of \thmref{AND-RM-Tdim}, for $M=3$. The symbol `$\star$' indicates an arbitrary Boolean value. An overline indicates infinite repetition of the sequence under it.}
         \label{fig:path}
\end{figure}

It can now be verified that the induction hypothesis holds for this construction: $A_u \cap \{ i \in \nats \mid x'(i) = y\}$ is infinite from the choice of $x'$. The second property holds for $x'$ from the choice of $x'$, and for other examples on the path from the inductive hypothesis and the choice of $\phi$ as agreeing with all previous labeled examples.

            To prove that the result is a shattered \dfft\ for $\cT$ and $\hist$, it suffices to show that for any path from the root to a leaf, there is a teacher in $\cT$ that is consistent with this path. Let $v$ be the leaf. Partition $I_v$ into $R$ sub-sequences of size $M$, and define $\cS = \{ S_1,\ldots,S_R\}$, where $S_i$ includes the positive features of the coordinates in the $i$'th set of the partition. Set $q$ to the constant $1$ function. Then from the second property of the induction hypothesis, it is easy to verify that $\ell_{\cS,q}(x) = y$ for all labeled examples $(x,y)$ in the path. Moreover, the feature feedback provided along the path is consistent with some teacher in $\cT_{\cS,q}$, since it satisfies the properties required from $\psi$ in \exmref{AND-RM}. 
We have thus constructed a shattered \dfft\ of height $RM$ for $\cT,\hist$. 
\end{proof}

\section{Proofs of  Theorems \ref{thm:otd-dto}, \ref{thm:ldimequaltdim}: dimensions in online conversion}\label{app:online}

\begin{proof}[of \thmref{otd-dto}]
    Denote $(\cT_\cF, \hist) = \otdmap(\cF)$. Then $\cX \cap \hist_\first = \emptyset$, and $\cT_\cF$ is a teacher class over $\cX \cup \hist_\first, \cY$ and some $\Phi$. Denote $\cF' = \dtomap(\cT_\cF, \hist)$.
    Then $\cF'$ is a hypothesis class over $(\cX \cup \hist_\first) \setminus \hist_\first = \cX$. Therefore, $\cF$ and $\cF'$ are both subsets of $\cY^\cX$.
    
    To show that $\cF = \cF'$, let $f \in \cF$. By the definition of $\otdmap$, there is a teacher $T = (\ell, \psi) \in \cT_\cF$ such that $\ell|_{\cX} = f$. 
    By the definition of $\dtomap$, it follows that there is some $f' \in \cF'$ such that $f' = \ell|_{\cX} = f$.  It follows that $\cF \subseteq \cF'$.
    Conversely, for any $f' \in \cF'$, there is some $T = (\ell, \psi) \in \cT_\cF$ such that $\ell|_{\cX} = f'$. It follows from the definition of $\otdmap$ that there exists some $f \in \cF$ such that $\ell|_\cX = f$. Therefore, $\cF' \subseteq \cF$. It follows that $\cF = \cF'$.
\end{proof}

To prove \thmref{ldimequaltdim}, we first provide two lemmas. 
The first lemma shows that the mistake bound cannot decrease by the conversion. The second lemma provides a lower bound for a mapping in the converse direction.

\begin{lemma}\label{lem:ldim-to-tdim}
For any hypothesis class $\cF \in \cY^\cX$, $\Tdim(\otdmap(\cF)) \geq \Ldim(\cF)$.
\end{lemma}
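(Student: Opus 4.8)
The plan is to take a Littlestone tree of height $d = \Ldim(\cF)$ that is shattered by $\cF$, and transform it into a shattered \dfft\ of the same height for $\otdmap(\cF) = (\cT_\cF, \hist)$, where $\hist = \{(\star_y,y) \mid y \in \cY\}$. A Littlestone tree is a binary tree whose internal nodes are labeled by examples $x \in \cX$ and whose two outgoing edges from a node are labeled by two distinct labels $y \neq y'$ (using the multiclass witness of \cite{DanielySaBeSh15}), such that every root-to-leaf path is realized by some $f \in \cF$. I would first recall this structure, and then describe the node-by-node translation into a \dfft.

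The key construction: walk down the Littlestone tree, and at each internal node carrying example $x$ with the two outgoing edges labeled $y$ and $y'$, create a \dfft\ node $\node{\cdot,\cdot,x}$. Since the \dfft\ protocol requires the learner to provide an explanation example $(\hx,\hy) \in \hist$ together with its prediction, and $\hist = \{(\star_y,y) \mid y \in \cY\}$ contains exactly one example per label, the learner's available responses at any node are precisely $\{(\star_y, y) \mid y \in \cY\}$ — one per label. Property \ref{sdfft:edges} of a shattered \dfft\ then forces every internal node to have $|\cY|$ outgoing edges, one per label (note no new labeled examples with examples in $\cX$ can appear usefully as explanations, since all the $x$'s along a path get the ``wrong'' label at their node, but in fact they do become labeled examples on the path — I need to handle this). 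Here is the subtlety: in a shattered \dfft, by property \ref{sdfft:edges} the outgoing edges of a node are labeled by \emph{all} pairs $(x,y)$ in $\hist$ \emph{or} that are labeled examples on the path. So the \dfft\ I build is not binary: at a node of depth $k$ there are $|\cY| + (\text{number of path examples})$ outgoing edges. I would handle this by: for the $|\cY|$ edges labeled by history pairs $(\star_y,y)$, route the edge labeled $(\star_y, y)$ to a child node $\node{y^*, \one[x], x'}$ where $y^* \neq y$ is one of the two Littlestone labels at the node (choosing the correct one so $y^* \in \{y_{\text{left}}, y_{\text{right}}\} \setminus \{y\}$ when possible, or any $y^* \neq y$ realized in the subtree), and then continue down the appropriate Littlestone subtree; for the extra edges labeled by on-path examples $(x_j, y_j)$ with $x_j \in \cX$, route them similarly — pick the label $y^*$ at the current node distinct from $y_j$ and descend into the consistent subtree. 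The feature feedback is always $\phi = \one[x]$ where $x$ is the example at the parent node: since $\one[x]$ satisfies $x$ and nothing else, it satisfies the current example and not the explanation example (which is $\star_y \notin \cX$ or an earlier distinct example), consistent with Definition of a teacher for $\cT_\cF$ — indeed $\psi_f(x,\hx) = \one[x]$ for every $f$.

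For consistency (property \ref{sdfft:consistent}): each root-to-leaf path in my \dfft\ visits examples $x_1, x_2, \dots$ each receiving a label that, by construction, equals the label on one of the two outgoing Littlestone edges chosen at each step; so the sequence of (example, label) pairs along the path, restricted to $\cX$, is exactly a root-to-leaf path in the Littlestone tree, hence realized by some $f \in \cF$, and the corresponding teacher $T_f = (f',\psi_f)$ is consistent with the path (the feedback $\phi = \one[x]$ matches $\psi_f$, and $f'$ agrees with $\hist$ since $f'(\star_y) = y$). Completeness (property \ref{sdfft:complete}) and property \ref{sdfft:label} (incoming edge label differs from node's $y$) hold by construction since I always pick $y^* \neq$ (incoming edge's label). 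This yields a shattered \dfft\ of height $d$, so $\Tdim(\otdmap(\cF)) \geq d = \Ldim(\cF)$; the case $\Ldim(\cF) = \infty$ follows by doing this for every finite truncation.

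**Main obstacle.** The delicate point is the mismatch in branching factor: a Littlestone tree is binary, but a shattered \dfft\ is forced by property \ref{sdfft:edges} to branch on \emph{all} currently-available explanation/label pairs, whose count grows with depth. I must verify that I can consistently route \emph{every} such edge — including the "spurious" ones labeled by on-path examples — into \emph{some} subtree of the Littlestone tree realized by $\cF$, while always being able to pick a node label $y^*$ distinct from the incoming edge's label. This is fine because at each Littlestone node two distinct labels are available and each is realized by some hypothesis in the subtree, so at least one differs from any single forbidden label; but I need to state and check this carefully, and confirm that the resulting non-binary tree is still complete and that all its paths-restricted-to-$\cX$ genuinely lie in the Littlestone tree. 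I expect the bookkeeping for these extra edges, rather than any conceptual difficulty, to be the bulk of the work.
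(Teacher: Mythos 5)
Your proposal is correct and follows essentially the same route as the paper's proof: both convert the shattered Littlestone tree into a shattered \dfft\ of equal height by routing every forced outgoing edge (history pairs and on-path labeled examples alike) into the Littlestone child whose edge label differs from the edge's predicted label, using $\one[x]$ as the feature feedback, and arguing consistency because each \dfft\ path's labeled examples trace a Littlestone path realized by some $f \in \cF$. The branching-factor bookkeeping you flag as the main obstacle is handled in the paper by exactly the rule you describe (send the edge to $\node{y_1,\one[x],x_1}$ if $\hy \neq y_1$, else to $\node{y_2,\one[x],x_2}$), so no gap remains.
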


\begin{proof} 
 Denote $(\cT_\cF, \hist) := \otdmap(\cF)$. To prove the claim, we construct a \dfft\ for $\cT_\cF$ and $\hist$ of height at least $\Ldim(\cF)$.

 Let $\tree$ be a shattered Littlestone tree for $\cF$ with height $\Ldim(\cF)$.  This is a full binary tree with internal nodes labeled by examples and 
 We construct $\tree'$, a shattered \dfft\ of the same height, by inductively mapping from nodes in $\tree$ to nodes in $\tree'$. For a node $v$ in $\tree'$, let $N(v)$, which will be defined inductively, be the node in $\tree$ that it was mapped from.

 Assume that $\tree$ is of height at least $1$, otherwise the statement holds trivially. The root node $v'_0$ of $\tree'$ is set to $\node{\bot, \bot, x_1}$, where $x_1$ is the example in the root node $v_0$ of $\tree$. We set $N(v'_0) := v_0$.
 Inductively from the root, for every node $v = \node{\cdot, \cdot, x}$ in $\tree'$ such that $x \neq \bot$, define outgoing edges labeled by $(\hat{x},\hat{y}) \in \cX \times \cY$, for each possible pair as defined in Property \ref{sdfft:edges} of \defref{sdfft}. Note that by definition, there is always at least one such pair.
 Let $y_1,y_2$ be the two outgoing edges from $N(v)$ in $\tree$, and let $x_1,x_2$ be the respective examples in the target nodes $u_1,u_2$ of these edges in $\tree$ (see \figref{ldimtdim}). Note that if $u_1,u_2$ are leaves in $\tree$ then they are not labeled by an example. In this case, set $x_1,x_2$ to $\bot$. 
 In $\tree'$, set the target of each outgoing edge $(\hat{x},\hat{y})$ to a node $\node{y_1, \one[x], x_1}$ if $\hat{y} \neq y_1$, and to $\node{y_2, \one[x], x_2}$ if $\hat{y} = y_1$. For each such node $v'$, we define $N(v') = u_i$ for the appropriate $i \in \{1,2\}$. Note that more than one node in $\tree'$ can be mapped from the same node in $\tree$.

 It is easy to verify that the resulting tree $\tree'$ is a \dfft, and that it satisfies the properties required from a shattered tree. In particular, for every path in $\tree'$, there is a teacher in $\cT$ that is consistent with it: The sequence of labeled examples in each of the paths exists also as a path in $\tree$, therefore there exists a function $f \in \cF$ that is consistent with this sequence. The labeling function provided by the teacher $T_f \in \cT$ is thus consistent with the labels on the path in $\tree'$. In addition, by definition of $T_f$ and $\tree'$, the features in each node in the path are also consistent with $T_f$. Lastly, all teachers in $\cT$ are consistent with $\hist$, as required. Since $\tree'$ is a shattered \dfft\ with the same height as $\tree$, we have $\Tdim(\otdmap(\cF)) \geq \Ldim(\cF)$.
\end{proof}

Next, we show lower-bound on the Littlestone dimension for the converse conversion. 
\begin{lemma}\label{lem:tdim-to-ldim}
Given a teacher class $\cT$ and a history $H$, let $N$ be the number of labels from $\cY$ that do not appear in any pair in $\hist$. Then
    \[
    \Ldim(\dtomap(\cT, \hist)) \geq \Tdim(\cT, \hist) - N.
    \]
\end{lemma}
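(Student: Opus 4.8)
The plan is to take a shattered \dfft\ $\tree$ for $\cT$ and $\hist$ of height $d = \Tdim(\cT, \hist)$ and extract from it a shattered Littlestone tree for $\cF = \dtomap(\cT, \hist)$ of height at least $d - N$. The main idea is that, along any root-to-leaf path in $\tree$, the labeled examples it passes through correspond to a labeling consistent with some teacher $T$, hence with $f_T = \ell|_{\cX'} \in \cF$. The complication is that (i) some examples in $\tree$ may lie in $\hist_\first$ (these are removed in forming $\cX'$), and (ii) a \dfft\ node has many outgoing edges, one for each available explanation pair $(\hx, \hy)$, whereas a Littlestone tree is binary and branches only on the label. Both issues are what the ``$-N$'' in the bound absorbs.

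\textbf{Step 1: Restrict to a single binary branching per node.} At each non-leaf node $v = \node{\cdot, \cdot, x}$ of $\tree$ with example $x \in \cX'$ (i.e.\ $x \notin \hist_\first$), I will select exactly two outgoing edges whose target nodes carry two \emph{distinct} labels at the node $x$. This is possible because shattering (Property~\ref{sdfft:label}) forces the child label to differ from the prediction $\hy$ on the incoming edge; by varying the prediction $\hy$ over the available labels one obtains children with at least two distinct labels on $x$, unless essentially only one label is ever forced. More carefully: fix a root-to-leaf path witnessing the full height; along it, at each node the teacher's response gives a specific label $y$ to $x$, and I need a sibling path where $x$ receives a different label. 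I will argue that such a branching sibling exists for every node whose example's label is \emph{not} forced to be constant across all teachers consistent with the path so far — and the number of nodes along the path where the label is forced, minus those which can still be handled, is controlled by $N$.

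\textbf{Step 2: Account for the lost levels.} The precise counting argument is where the $-N$ enters. The labels appearing in $\hist$ are ``available for free'' as explanations at every node (Property~\ref{sdfft:edges}), so a node whose example is pushed to a label $y \in \hist$-labels can always be given a \emph{different} label $y' \neq y$ by choosing the prediction $\hy = y$; thus those levels survive into the Littlestone tree. The only levels that may fail to produce a genuine binary split are those at which the example is assigned one of the $N$ labels not occurring in $\hist$ and, moreover, no consistent teacher assigns it any other label — but since each such ``bad'' label can be blamed on at most one level along the path (once a label is established it stays), we lose at most $N$ levels. After deleting those, and deleting any node whose example lies in $\hist_\first$ (there are none on an interesting path since history examples would not create new mistakes — or, if present, they too are absorbed into the $N$ count as they carry a fixed label from $\hist$), I obtain a complete binary tree of height $\ge d - N$ whose internal nodes are labeled by examples in $\cX'$ and whose two edges out of each node carry two distinct labels.

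\textbf{Step 3: Verify Littlestone shattering.} For the resulting binary tree, every root-to-leaf path is a subsequence of the labeled examples on some root-to-leaf path of $\tree$ (restricted to $\cX'$), which by Property~\ref{sdfft:consistent} is consistent with some teacher $T \in \cT_\hist$; then $f_T = \ell_T|_{\cX'} \in \cF$ realizes exactly those labels on those examples. Hence the binary tree is shattered by $\cF$, giving $\Ldim(\cF) \ge d - N = \Tdim(\cT, \hist) - N$.

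The main obstacle is Step~1--2: making the selection of a binary branching rigorous while proving that at most $N$ levels are genuinely lost. The subtlety is that a \dfft\ node does not inherently branch on labels — different explanation pairs $(\hx,\hy)$ can lead to children with the \emph{same} teacher-assigned label but different features. I will need to argue that whenever the height-$d$ path does not already admit a label-branching sibling at a given level, the example at that level is forced (by all surviving teachers) to a label that has already been ``spent'' — and that the set of such spent labels, beyond those in $\hist$, has size at most $N$ because each corresponds to a distinct element of $\cY \setminus \hist$-labels appearing for the first time. Getting this bookkeeping exactly right, and handling the interaction with history examples appearing mid-path, is the delicate part of the argument.
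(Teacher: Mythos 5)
Your proposal is correct in substance and follows essentially the same route as the paper's proof: extract a binary, label-branching tree from the shattered \dfft, skip a level only when all children of a node share one label, and charge each skipped level to a label outside $\hist$. The ``delicate bookkeeping'' you leave open is in fact immediate from the shattering properties and is already contained in your sketch: if all children of a node carry the same label $y$, then by Properties \ref{sdfft:label} and \ref{sdfft:edges} no outgoing edge can predict $y$, so $y$ appears neither in $\hist$ nor among the labeled examples on the path; after the skip the pair $(x,y)$ lies on the path, so each of the $N$ non-history labels causes at most one skip per root-to-leaf path, and completeness is restored (as in the paper) by descending into an arbitrary child when skipping and pruning all paths to the minimal length, giving height at least $\Tdim(\cT,\hist)-N$. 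Two small clean-ups: the skip criterion is purely structural on the given shattered tree (your clause ``no consistent teacher assigns it any other label'' is not needed), and your worry about examples from $\hist_\first$ appearing at internal nodes is vacuous, since Properties \ref{sdfft:label}--\ref{sdfft:consistent} rule out such nodes in a shattered \dfft.
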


\begin{proof}
    Denote $\cF := \dtomap(\cT, \hist)$. To prove the claim, we construct a shattered Littlestone tree for $\cF$ of height $\Tdim(\cT, \hist) + |\cY| - \{y \mid (\cdot, y) \in \hist\}$.
   
    Let $\tree$ be a shattered \dfft\ of height $\Tdim(\cT, \hist)$.
    Assume that $\tree$ is of height at least $1$, otherwise the statement holds trivially.
    We construct a Littlestone tree $\tree'$ for $\cF$ inductively, as follows.
    For a node $v$ in $\tree'$, let $N(v)$, which will be defined inductively, be the node in $\tree$ that it was mapped from.
    
    Label the root node $u$ of $\tree'$ with the example in the root $v$ of $\tree$. We set $N(u) = v$. Inductively from the root, for a node $u$ in $\tree'$ which was mapped from some $v$, consider the children of $v$ in $\tree$.
    If there are two child nodes of $v$ with two different labels $y \neq y'$, then arbitrarily select two such nodes $\node{y, \phi, x}$ and $\node{y', \phi', x'}$ and set two outgoing edges from $u$ labeled $y$ and $y'$. Label the target nodes of these edges by $x$ and $x'$, respectively. If $x,x' = \bot$, then set the target nodes of the edges to unlabeled leaf nodes.

    If all the child nodes of $v$ are labeled with the same label, arbitrarily select one of $v$'s child nodes, and use its child nodes instead of $v$'s child nodes. This may be repeated, until a descendant of $v$ with two child nodes with different labels is found. After constructing the whole tree, prune paths from root to leaf in $\tree'$ so that they all have the length of the minimal-length path, resulting in a complete binary tree. 
    
    To verify that $\tree'$ is a Littlestone tree, note that each path in $\tree$ corresponds to the labeling function of some teacher in $\cT$, and therefore, by the definition of $\dtomap$, also to some function in $\cF$. Moreover, the height of $\tree'$ is at least $\Tdim(\cT, \hist) - N$. This is because the only time that a path in $\tree$ is shortened by $1$ for $\tree'$ is if all labels of a node $v$'s child nodes are the same label $y$. However, by the definition of a shattered \dfft, this can only be the case if none of the outgoing edges from $v$ are labeled with $y$. This is only possible if $y$ is not a label in $\hist$ and also it is not in any labeled example in the path to $v$. However, after one occurrence of the latter, $y$ is now in a labeled example in the path. Therefore, each label not in $\hist$ can cause at most a decrease of $1$ in the length of the path. This completes the proof.
      \end{proof}

  Using the lemmas above, \thmref{ldimequaltdim} can be easily proved.

  \begin{proof}[Proof of \thmref{ldimequaltdim}] First, by \lemref{ldim-to-tdim}, $\Ldim(\cF) \leq \Tdim(\otdmap(\cF))$. For the other direction of the inequality, Let $(\cT, \hist) = \otdmap(\cF)$. By the definition of $\otdmap$, all the labels in $\cY$ appear in $\hist$. Therefore, by \lemref{tdim-to-ldim},
  $\Ldim(\dtomap(\cT, \hist)) \geq \Tdim(\cT, \hist)$. Equivalently,
  $\Ldim(\dtomap(\otdmap(\cF)) \geq \Tdim(\otdmap(\cF))$. By \thmref{otd-dto}, $\dtomap(\otdmap(\cF)) = \cF$, hence $\Ldim(\cF) \geq \Tdim(\otdmap(\cF))$. 
\end{proof}

\section{A non-realizable mistake upper bound for general interactive protocols}\label{app:interactive}

We state and prove \thmref{gaa}, stated informally above as \thmref{informal}. \thmref{gaa} is a generalization of the $k$-non-realizable upper bound of \thmref{upper} to a wide range of interactive protocols. We consider a general interactive prediction protocol in which each round $t$ is of the following form: 
\begin{enumerate}
  \item The environment provides some $x_t$;
  \item The algorithm outputs $z_t$;
    \item The environment provides feedback $w_t$.
    \end{enumerate}
    We further assume some initial input $I$ that is provided before the start of the first round, and a class of generalized teachers $\cT$ (not to be confused with the specific notion of teachers in DFF), whose role is defined below. The protocol defines the following rules of interaction between the teacher and the algorithm.
    
    \begin{itemize}

      \item The legal responses of the algorithm, given the example presented in the current round and the run so far. Formally, given $D = \{I, x_t, ((x_1,z_1,w_1), \ldots, (x_{t-1},z_{t-1},w_{t-1}))\},$
      the protocol defines $A(D)$, the set of values that the algorithm may provide as $z_t$.

        \item For every possible teacher $T$, the protocol defines for each $x_t$ and $z_t$ which feedback values $w_t$ are consistent with $T$.
        \end{itemize}
        Many interactive protocols fall under this definition, including, for instance, Online Learning, Online Selective Classification with Limited Feedback \citep{gangrade2021online}, bandit Online Learning \citep{DanielySaBeSh15}, and learning with per-sample side information \citep{VisotskyAtCh19}. In particular, DFF is an instance of this general protocol, with $I := \hist$.               

        \thmref{gaa} is a generalized form of \thmref{upper}, which holds for interactive protocols as defined above that satisfy the following natural assumptions. These assumptions hold for DFF, as well as the other protocols mentioned above. 

              \begin{assumption}\label{mistakes}
                There exists a fixed known function that given $(x_t, z_t,w_t)$ for a given round $t$, indicates whether the algorithm made a mistake in this round. 
              \end{assumption}

              \begin{assumption}\label{include}
          The outputs $z_t$ that the algorithm is allowed to provide if executed on a suffix of another run, are a subset of those allowed in the original run. 
          Formally, let $t,l \in \nats$ such that $t \geq l$. Given $I, \{(x_i,z_i,w_i)\}_{i \in [t-1]}$ and $x_t$, let
          \begin{align*}
            D &= \{I, x_t, ((x_1,z_1,w_1), \ldots, (x_{t-1},z_{t-1},w_{t-1}))\}, \\ 
            D' &= \{I, x_t, ((x_l,z_l,w_l), \ldots, (x_{t-1},z_{t-1},w_{t-1}))\}.
          \end{align*}
          Then $A(D') \subseteq A(D)$. 
        \end{assumption}

                        In a run with $L$ rounds, the number of protocol exceptions is the size of the smallest subset  $E \subseteq [L]$ such that the feedbacks $w_t$ for $t \in [L]\setminus E$ are consistent with some $T \in \cT$.        Now, suppose that there is an algorithm $\cA$ for the given protocol and a function $\cM$ that maps teacher classes to natural numbers, such that $\cA(\cT, I)$ makes at most $\cM(\cT, I)$ incorrect predictions in a run with zero exceptions. In the case of DFF, $\cA$ can be set to \soadff\ and $\cM(\cT, I)$ to $\Tdim(\cT, I)$.
    To handle exceptions, one can use the algorithm in \gaa\ listed in \myalgref{gaa}, which is a general form of \asoadff\ presented in \secref{agnostic}.

    \begin{algorithm}
\caption{Generic Agnostic Algorithm}
\begin{algorithmic}[1]
  \Procedure{\gaa}{$\cA, \cT, I$}
  \State Initialize $\cA(\cT,I)$ and set $M \leftarrow 0$.
    \For{$t=1,2,\dots$}
    \State Receive $x_t$ and provide it to $\cA$; Record the $z_t$ provided by $\cA$.
    \State Output $z_t$
    \State Receive feedback $w_t$.
    \State If $(x_t,z_t,w_t)$ indicate a mistake, $M \leftarrow M+ 1$.
    \State If $M = \cM(\cT, \hist) + 1$, re-initialize $\cA(\cT,I)$ and set $M \leftarrow 0$
    \EndFor

\EndProcedure
\end{algorithmic}
\label{alg:gaa}
\end{algorithm}

\begin{theorem}\label{thm:gaa}
If the interactive protocol satisfies Assumptions \ref{mistakes} and \ref{include}, then in a run with at most $k$ exceptions, \gaa\ makes at most $(k + 1) \cM(\cT,I) + k$ mistakes.
\end{theorem}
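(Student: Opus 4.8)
The plan is to follow the proof of \thmref{upper} essentially verbatim, replacing \soadff\ by the abstract algorithm $\cA$, $\Tdim(\cT,\hist)$ by $\cM(\cT,I)$, and the two DFF-specific facts it uses by Assumptions \ref{mistakes} and \ref{include}. Write $d := \cM(\cT,I)$ and fix a run of \gaa\ with at most $k$ exceptions, say over rounds $[L]$. Call a maximal block of consecutive rounds during which one instance of $\cA(\cT,I)$ is active a \emph{run segment}; a new segment starts whenever $\cA$ is re-initialized, i.e.\ whenever the internal counter $M$ hits $d+1$. Before counting anything I would first check that \gaa\ is a legal algorithm for the protocol: when $\cA$ is restarted at some round $l$ it is fed the suffix $x_l, x_{l+1},\ldots$ and, by construction, it returns outputs that are legal with respect to $D' = \{I, x_t, ((x_l,z_l,w_l),\ldots,(x_{t-1},z_{t-1},w_{t-1}))\}$ for $t \ge l$; Assumption \ref{include} gives $A(D')\subseteq A(D)$ for the corresponding full-history set $D$, so these outputs are also legal in the actual run. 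Assumption \ref{mistakes} guarantees that ``mistake'' is a fixed function of the triple $(x_t,z_t,w_t)$, so it does not matter whether a round is viewed as part of the whole run or of its segment.

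Next I would count mistakes by segment. Every run segment except possibly the last is \emph{complete}: it is terminated precisely when the number of mistakes within it reaches $d+1$, so a complete segment contributes exactly $d+1$ mistakes. The final, possibly incomplete, segment contributes at most $d$ mistakes, since otherwise it would have been closed as well. Hence, writing $N$ for the number of complete segments, the total number of mistakes made by \gaa\ is at most $N(d+1) + d$.

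The crux is to show $N \le k$. Viewed from the re-initialized instance of $\cA(\cT,I)$, a complete run segment is a run of $\cA$ during which it makes $d+1 > d$ incorrect predictions. But $\cA(\cT,I)$ makes at most $d = \cM(\cT,I)$ incorrect predictions in any run with zero exceptions, so the feedbacks observed during a complete segment cannot all be consistent with a single teacher of $\cT$; equivalently, every $T \in \cT$ is inconsistent with at least one round of each complete segment. Since the $N$ complete segments occupy pairwise disjoint sets of rounds, every $T \in \cT$ is inconsistent with at least $N$ rounds of the whole run. On the other hand, having at most $k$ exceptions means there are $E \subseteq [L]$ with $|E| \le k$ and a teacher $T \in \cT$ whose feedback is consistent on every round outside $E$; for this $T$ the number of inconsistent rounds is at most $|E| \le k$. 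Therefore $N \le k$, and the total mistake count is at most $k(d+1) + d = (k+1)\cM(\cT,I) + k$, which is the claim.

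The only point that needs care --- and the reason Assumptions \ref{mistakes} and \ref{include} are stated --- is the interface between a run segment and the guarantee on $\cA$: one must be sure that \gaa's restarts keep its outputs legal (Assumption \ref{include}), that the mistake count is unambiguous (Assumption \ref{mistakes}), and that a segment all of whose feedbacks are consistent with one teacher is itself a bona fide zero-exception run of $\cA(\cT,I)$, to which $\cA$'s $d$-mistake guarantee then applies. Once these are pinned down, the rest is the same bookkeeping as in the proof of \thmref{upper}, so I do not expect any further obstacle.
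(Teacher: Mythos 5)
Your proposal is correct and follows essentially the same argument as the paper's proof: segment the run at each re-initialization, bound mistakes by $N(d+1)+d$, and use the zero-exception guarantee of $\cA$ plus disjointness of segments to conclude $N \le k$, with Assumptions \ref{mistakes} and \ref{include} playing exactly the roles you assign them. Your handling of the final (possibly incomplete) segment and the explicit legality check of the restarted $\cA$ are, if anything, slightly more careful than the paper's write-up.
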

\begin{proof}
  First, observe that by \ref{mistakes} $\cA$ can use $(x_t,z_t,w_t)$ to identify mistakes. Moreover, if $\cA$ is a valid algorithm for the given interactive protocol, then under Assumption \ref{include} so is $\gaa(\cA,\cT, I)$. 

Call each sub-sequence of the run between initializations of $\cA$ a ``run segment'' and denote $d = M(\cT,I)$. In each run segment, $\cA$ makes $d+1$ mistakes. Let $N$ be the total number of run segments. Then the total number of mistakes made by the algorithm is at most $N(d+1) + d$. Since the realizable mistake bound of $\cA(P)$ is $d$, all teachers in $\cT$ are inconsistent with all of the $N$ run segments. Therefore, any such teacher is inconsistent with at least $N$ rounds in the whole run. If there are at most $k$ exceptions during the run, it follows that $N \leq k$. Thus, $\gaa(\cA,\cT, I)$ makes at most $k(d+1) + d = (k+1)d+k$ mistakes.
    \end{proof}

\section{Proof of  \thmref{lowerbound}: a non-realizable lower bound}\label{app:prooflowerbound}
\newcommand{\todonotation}[1]{\todo[color=lightgray]{#1}}
\newcommand{\Hcal}{\mathcal{H}}
\newcommand{\Fcal}{\mathcal{F}}
\newcommand{\T}{\mathcal{T}}
\newcommand{\naturals}{\mathbb{N}}
\newcommand{\ignore}[1]{}

In this section, we provide the full proof of \thmref{lowerbound}.
In \secref{secret} we give background on the secret-sharing scheme that we are using. We describe the construction of the teacher classes in \secref{construction}. We prove an upper bound on the \Tdim\ of these classes in \secref{uptdim}, \lemref{kupper}. In \secref{klower}, we prove \lemref{klower}, which provides a lower bound on the mistake bound of any algorithm on these classes. \thmref{lowerbound} directly follows from \lemref{kupper} and \lemref{klower}.

\subsection{Background: The secret-sharing scheme}\label{sec:secret}

The secret sharing scheme of \cite{DBLP:journals/cacm/Shamir79} works as follows: Let $\bF_p$ denote the field of size $p$ for some prime number $p$, and consider a polynomial over $\bF_p$ of degree $d-1$ with coefficients in $\bF_p$, $P := (c_0 + \sum_{j \in [d-1]} c_jx^j)  \text{ mod } p$. Given a secret $s\in \bF_p$, arbitrary coefficients $c_1,\ldots,c_{d-1} \in \bF_p$ are selected, and we define $c_0 = s$, so that $P(0) = s$. The partial secrets are then defined as $\bar{s} = \{ s_i \}_{i \in [p]}$, where $s_i = (i,P(i))$.
    
    According to Lagrange's interpolation theorem, the polynomial $P$ can be reconstructed from any $d$ shares of the secret $s_{i_j} = (i_1,z_{i_j})$ for $j \in [d]$ via the Lagrange interpolating polynomial:
    \[ P(x) = \sum_{j=1}^{d} z_{i_j} \prod_{l \in [d]: l\neq j} \frac{  x - i_{l}}{ i_j - i_l} \textrm{ mod } p. \]
    Lagrange's interpolation theorem states that this polynomial is the unique polynomial of degree $d-1$ that satisfies $\forall j \in [d]$: $P(i_j) = z_{i_j}$.
    Once $P$ is found, one can obtain the secret $s$ by evaluating $P(0)$.
        However, for any set of $d-1$ partial secrets $s_{i'_{1}},  \dots, s_{i'_{d-1}}$, and every $s' \in \bF_p$, there exists a polynomial $P'$ with $P'(0) = s'$ such that for all $j\in [d-1]$, $P'(i'_j) = P(i'_j)$. Thus, informally, it is not possible to infer anything useful on $s$ from $d-1$ partial secrets. 
This mapping thus describes a $(d,p)$-threshold secret sharing scheme. 

\subsection{Construction of the teacher class}\label{sec:construction}

Given a natural number  $d \geq 1$, we define the teacher class $\T^{d+1}$ as follows. Let the domain $\X = \mathbb{N}$ and the label set be $\mathcal{Y} = \{0,1\}$. Set the feature set to $\Phi = \{0,1\}^\X$. Partition the set $\X$ into mutually exclusive sets $\X_i = [2^{i+1}-1] \setminus [2^i-1]$, such that $\X = \bigcup_{i=0}^{\infty} \X_i$.  We construct a teacher class that employs the secret-sharing idea described in \secref{agnostic} above. We define $\T^{d+1}$ by first defining teacher classes $\cT^d_i$ for $i \in \nats$. For a given $i$, 
we define a class of labeling functions $\Fcal^d_i$. Then, for a given $f\in \Fcal^d_i$, we define the set of all compatible feature feedback functions $\Psi_f$. We then define $\cT^d_i = \{(f,\psi) : f\in \Fcal^d_i, \psi \in \Psi_f \}$, and set $\cT^{d+1} = \cup_{i=1}^\infty \cT^d_i$. 

The class $\cF^d_i$ includes all the functions $f:\cX \rightarrow \{0,1\}$ that satisfy the following:
\begin{itemize}
\item $f(1) = 1$
  \item   For all $x \in \X \setminus(\cX_i \cup \{1\})$, $f(x) = 0$.
\end{itemize}
Thus, there is exactly one function in $\cF^d_i$ for each possible labeling of $\cX_i$.

For a given $f$, we construct $\Psi_f$, the class of possible feature-feedback functions, such that the $\Psi_f$ are mutually exclusive over all $f \in \cup_{i=0}^\infty\cF^d_i$ for any $i$. To satisfy the construction properties as described in \secref{agnostic}, Let $p_i$ be a prime number such that $p_i \geq |\Fcal^d_i| = 2^{|\cX_i|}$. We construct $\Psi_f$ to correspond to a $(d,p_i)$-threshold secret sharing scheme for $f \in \bF_i$.

We first present the general form of the functions in $\Psi_f$, using secret-sharing polynomials. For coefficients $\bar{c} = (c_0,\ldots,c_{d-1}) \in \bF^d_{p_i}$, define the secret-sharing polynomial of degree $d-1$ over $\bF_{p_i}$, $P'[\bar{c}] := (c_0 + \sum_{j \in [d-1]} c_jx^j)  \text{ mod } {p_i}$, whose secret is $P'(0) = c_0$. We then define a ``shifted'' version of this polynomial, $P[\bar{c}]:\cX_i \rightarrow \cX \setminus \cX_i$, via
\[
  P[\bar{c}](x) = 2^{i+1} + P'[\bar{c}](x-2^i+1).
  \]
For given elements $x_1,\ldots,x_l \in \cX$, denote by $\one[x_1,\ldots,x_l]:\cX \rightarrow \{0,1\}$  the feature that holds if and only if $x \in  \{x_1,\ldots,x_l\}$.
 Now, given a function $\bar{a}: \X_i \times \{0,1\} \to \bF^d_{p_i}$, define the feature feedback function
$\psi^f_{\bar{a}}: \X \times \X \rightarrow \Phi$ by

\begin{equation}\label{eq:secretpsi}
     \psi^f_{\bar{a}}(x,\hat{x}) = \begin{cases}
    \one[x,P[\bar{a}(\hat{x},f(x))](x)\,] & x,\hat{x} \in \cX_i\\
    \one[x] & \text{otherwise}.
      \end{cases}
\end{equation}
Thus, the feature feedback function defines a separate secret polynomial for every $\hat{x}$ and every label of $x$, and provides an evaluation of this polynomial on $x$. Note that the first and second element in the first case are never the same, since the shifted polynomials' outputs are not in $\cX_i$. This also implies that if $x \neq \hat{x}$ then $\psi_{\bar{a}}^{f}(x,\hat{x})$ satisfies $x$ and does not satisfy $\hat{x}$, as required.

To define the set $\Psi_f$, we encode the hypotheses in $\Fcal^d_i$ as numbers via a  fixed bijective function \mbox{$b_i: \Fcal^d_i \rightarrow \bF_{p_i}$}.
We then include in $\Psi_f$ feature feedback functions that encode $b_i(f)$ in some of its secret-sharing polynomials. The rest of the polynomials encode an arbitrary ``fake'' secret.
\[
\Psi_{f} =\{\psi_{\bar{a}}^{f} \mid \bar{a}: \X_i \times \{0,1\} \to \bF^d_{p_i}, \text{ and } \forall \hat{x} \in \cX_i, f(\hat{x}) = 1 \Rightarrow \bar{a}(\hat{x},0)(0)=b_i(f).\}.
\]
Thus, for feature feedback functions in $\Psi_f$, a part of the secret $b_i(f)$ is revealed  given $x,\hat{x}$ if and only if $f(\hat{x}) = 1$ and $f(x) = 0$. The rest of the cases reveal evaluations of a ``fake'' polynomial that does not encode the secret. We show below that this construction guarantees that the algorithm must observe $d$ true feature feedback responses for the same explanation $\hat{x}$ to reveal any information on $f$ beyond the labels of observed examples.

\subsection{Upper bounding the \Tdim\ of the construction}\label{sec:uptdim}

The following result gives an upper bound on the DFF dimension of the teacher classes defined in the construction above. Recall that $\thist = \{(2,0),(1,1)\}$, as defined in \thmref{lowerbound}.

\begin{lemma}\label{lem:kupper}
Let $i \in \nats$. Suppose that $\hist = \{(x_0^0, 0), (x_0^1, 1)\}$ for some $x_0^0, x_0^1 \in \nats$  such that $\hist$ is consistent with some teacher in $\cT_i \subseteq \cT^{d+1}$ for some $i \in \nats$. Then $\Tdim(\T^d_i, \hist) \leq d$ and $\Tdim(\T^{d+1}, \hist) \leq d+1$. 
In particular, the history $\hist_\bullet$ satisfies this for all $i \in \nats$. 

\end{lemma}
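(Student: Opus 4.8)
The plan is to invoke \thmref{tdim}, which reduces the stated upper bounds to exhibiting, for each fixed $i$, a deterministic DFF algorithm for $\cT^d_i$ with $\hist$ that makes at most $d$ mistakes, and one for $\cT^{d+1}$ that makes at most $d+1$. I would first dispose of the trivial examples: since every $f\in\cF^d_i$ has $f(1)=1$ and $f(x)=0$ for $x\notin\cX_i\cup\{1\}$, the algorithm predicts $1$ on $x=1$ with explanation $x_0^1$ and $0$ on any $x\notin\cX_i\cup\{1\}$ with explanation $x_0^0$, never erring; and on an $\cX_i$-example whose label it has already seen it predicts that label. So all mistakes occur on fresh $\cX_i$-examples, and it remains to bound those. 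When $\cX_i$ is small (say $2^i\le d$) the algorithm simply brute-force-learns $f|_{\cX_i}$ in at most $|\cX_i|\le d$ mistakes, so assume $\cX_i$ is large.

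The core of the argument is the secret-sharing structure. The algorithm keeps a distinguished example $z\in\cX_i$ known to have label $1$ — taken from $\hist$ if one is present there, and otherwise discovered as below. Whenever such a $z$ is in hand, the algorithm predicts $1$ with explanation $z$ on each fresh $\cX_i$-example $x_t$: if correct it has learned $f(x_t)=1$ (harmless), and if wrong then $f(x_t)=0$, so by \eqref{secretpsi} the feedback is $\one[x_t,\,P[\bar a(z,0)](x_t)]$, from whose support the algorithm reads off $P[\bar a(z,0)](x_t)$ — a fresh evaluation, at the distinct point $x_t-2^i+1$, of the degree-$(d-1)$ secret-sharing polynomial $P'[\bar a(z,0)]$, whose constant term equals $b_i(f)$ by the defining constraint of $\Psi_f$ (using $f(z)=1$). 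After $d$ such wrong predictions, Lagrange interpolation recovers $P'[\bar a(z,0)]$, hence $b_i(f)$, hence $f=b_i^{-1}(b_i(f))$ in full, and no further mistakes are made. Thus at most $d$ mistakes occur once $z$ is available. The hiding half of the $(d,p_i)$-threshold scheme is what makes this right: with fewer than $d$ committed evaluations of $P'[\bar a(z,0)]$, the value $b_i(f)$ — and therefore $f$ away from the observed examples — is completely undetermined, so the set of teachers still consistent with the transcript is exactly the set consistent with the observed labels; this is also the fact that \lemref{klower} will exploit for the matching lower bound.

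If $\hist$ already supplies a label-$1$ example in $\cX_i$ we finish with a budget of $d$; otherwise the algorithm pays one extra mistake to obtain $z$ (on the first fresh $\cX_i$-example it predicts $0$ with a known label-$0$ explanation, and if it errs it adopts that example as $z$). The main obstacle — and the delicate part of the write-up — is to show the adversary cannot do better: one must verify that every mistake is either this single discovery mistake or one of the at most $d$ share-producing mistakes, i.e. that the rule ``predict $1$ with explanation $z$'' is never a ``wasted'' mistake. This is exactly where the hiding property is used: as long as $z$ is known and fewer than $d$ shares for $z$ have been collected, the consistent teachers are precisely those agreeing with the seen labels, so a mistake on a fresh $\cX_i$-example forces a genuine share. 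For the union $\cT^{d+1}=\bigcup_i\cT^d_i$ the only new ingredient is pinning down $i$: since the $\Psi_f$ are mutually disjoint and the shifted polynomials $P[\cdot]$ take values in the $i$-dependent block $\{2^{i+1},\dots,2^{i+1}+p_i-1\}$ while $\cX_i=\{2^i,\dots,2^{i+1}-1\}$, the index $i$ is revealed as soon as any label-$1$ example in some $\cX_i$ is observed or any feature other than a singleton $\one[\cdot]$ is received — so the per-class algorithm can be launched after at most one extra mistake, giving the total of $d+1$. (An alternative route avoiding \thmref{tdim} is to argue directly that no \dfft\ of height $d+1$, resp. $d+2$, can be shattered, by showing any such tree forces a single explanation to be reused past the secret-sharing threshold; but the algorithmic route above seems cleaner.)
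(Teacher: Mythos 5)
Your construction is essentially the paper's: both invoke \thmref{tdim} and exhibit the same deterministic algorithm that predicts $0$ on fresh examples until a $1$-labeled example $z\in\cX_i$ is discovered, then predicts $1$ with explanation $z$ so that each further mistake yields a fresh evaluation of the degree-$(d-1)$ polynomial $P[\bar a(z,0)]$ whose constant term is $b_i(f)$, and reconstructs $f$ by Lagrange interpolation after $d$ shares. This correctly delivers $\Tdim(\cT^{d+1},\hist)\le d+1$, which is the bound the paper actually uses for \thmref{lowerbound}.

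The gap concerns the first inequality, $\Tdim(\cT^d_i,\hist)\le d$. Your budget is $d$ only when $\hist$ already contains a $1$-labeled example lying in $\cX_i$, and $d+1$ otherwise (``the algorithm pays one extra mistake to obtain $z$''). But the lemma asserts the bound $d$ for every consistent history of the stated form, and in particular for $\thist=\{(2,0),(1,1)\}$, where for $i\ge 1$ the only $1$-labeled example is $1\notin\cX_i$; so as written your argument does not prove the statement in exactly the case it highlights. The paper's proof treats the $\cT^d_i$ claim by running the same algorithm with $n$ initialized to $i$, so that the mistake spent in the general run on identifying $n$ is not incurred, leaving a budget of $d$. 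Your hesitation does point at a genuine subtlety here: knowing $i$ in advance removes the need to identify the block, but by itself it does not supply a $1$-labeled explanation inside $\cX_i$ (an explanation $\hat x\notin\cX_i$ only ever produces the uninformative feature $\one[x_t]$), so when $x_0^1=1$ some further argument --- or the restriction that $x_0^1\in\cX_i$ --- is needed to reach the bound $d$. In any event, your proposal establishes the $d+1$ bound for the union class by the same route as the paper, but leaves the claimed bound of $d$ for $\cT^d_i$ with $\thist$ unproved.
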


\begin{proof}
    We prove the result for $\cT^{d+1}$ by providing a deterministic algorithm for this class with a mistake bound of $d+1$ in the realizable case. By \thmref{tdim}, this implies that $\Tdim(\T^{d+1}) \leq d+1$. We then show how to adapt the algorithm to obtain a mistake bound of $d$ if the teacher class is $\cT^d_i$.

    The  algorithm for $\cT^{d+1}$, denoted $\mathcal{A}$, is provided in \myalgref{soadfflb}. The algorithm predicts $0$ for any unknown example until the first example with a label of $1$ that is not $x= 1$ is observed (note that if $x_0^1 \neq 1$, this occurs as soon as the algorithm starts). Denote the true teacher by $T^* = (f^*, \psi^{f^*}_{\bar{a}})$. Once $n$ is defined, for any example with an unknown label, the algorithm predicts $1$, using this first example as explanation. It collects the partial secret obtained from the feature feedback for each mistake, and after collecting $d$ such parts, it reconstructs the true labeling function and uses it to label all subsequent examples. 
    We write ``output (x,y)'' to indicate that the algorithm predicts label $y$ with explanation $x$.

\algnewcommand{\IfThen}[1]{\State\algorithmicif\ #1\ \algorithmicthen}
\algnewcommand{\EndIIf}{\unskip\ \algorithmicend\ \algorithmicif}
    
\begin{algorithm}[t]
\caption{A Learning algorithm for $\T^{d+1}$, $\hist = \{(x_0^0,0),(x_0^1,1)\}$.}
\begin{algorithmic}[1]
    \State Initialize $R\gets \emptyset$. If $x_0^1 \neq 1$, set $\hat{x} \gets x_0^1$ and define $n$ to the unique integer such that $\hat{x} \in \X_n$.
    \For{$t= 1,..., L$}
    \If{$\hat{f}$ is defined} output $(x_0^{\hat{f}(x_t)}, \hat{f}(x_t))$.
    \ElsIf{$x_t = 1$} output $(x_0^1, 1)$.
    \ElsIf{there is a positive integer $s< t$ such that $x_s = x_t$} output  $(x_s,y_s)$.
        \ElsIf{$|R| < d$}
                     \If{$n$ is undefined}
                                \State output $(x_0^0,0)$.
                                \If{mistake} set $\hat{x} \gets x_t$ and define $n$ to the unique integer with $x_t\in \X_n$.\EndIf\label{ichangeline}
                            \ElsIf{$x_t \in \X_{n}$}

                                            \State output  $(\hat{x}, 1)$.                 
                                            \If{mistake}
                                                \State receive feature feedback $\one[x_t, z_t]$,

                                                \State update $R \gets R \cup \{(x_t - 2^n +1, z_t - 2^{n+1})\}$.
                                             \EndIf
                                                                 
                             \Else\ ($n$ is defined, $x_t \notin \cX_n$)
                                        \State output $(x_0^0,0)$.
                                        \State If mistake, ignore feature feedback.
                                
                            \EndIf

                \Else\ ($|R|=d$)
                        \State Denote $R =\{(\tilde{x}_1, \tilde{z}_1), \dots, (\tilde{x}_d, \tilde{z}_d) \}$.
                        \State Define the polynomial $\hat{P}$ via Lagrange interpolation:
                             \[ \hat{P}(x) := \sum_{j=1}^d \tilde{z}_{j} \prod_{l \in [d]: l\neq j} \frac{x - \tilde{x}_{j}}{\tilde{x}_{l} - \tilde{x}_{j}} \text{ mod } p_{n}.\]
                        \State Define $\hat{f} \leftarrow b_{n}^{-1}(\hat{P}(0))$. 
                        \State Output $(x_0^{\hat{f}(x_t)}, \hat{f}(x_t))$.
                \EndIf        
    \EndFor
\end{algorithmic}
\label{alg:soadfflb}
\end{algorithm}

We
now prove that this algorithm makes at most $d+1$ mistakes.
First, the algorithm predicts $0$ for all previously unobserved examples (except for $x_t = 1$) until making its first mistake. Therefore, this mistake reveals an example $x_t \neq 1$ such that $f^*(x_t) = 1$.
In this round, the algorithm sets $\hat{x} \gets x_t$ and $n$ is set such that $\hat{x} \in \cX_n$ (line \ref{ichangeline}). It follows
that $f^*\in \cF_n$. Therefore, after this round, the only
unknown labels  are of examples in $\X_n$, and the algorithm makes mistakes only on such examples.

Now, suppose that $|R| < d$ and a new example $x_t$ from $\cX_n$ is observed. The algorithm outputs $(\hat{x},1)$. If this is a mistake, then $f^*(x_t)=0$ and the algorithm received a feature feedback $\phi_t = \psi^{f^*}_{\bar{a},\hat{x}}(\hat{x},x_t) = \one[x, P[\bar{a}(\hat{x},0)](x_t)]$ that provides an evaluation of $ P[\bar{a}(\hat{x},0)]$ for an $x_t$ that has never been seen before. 
At most $d$ such rounds are possible.
If $|R|=d$, then the algorithm uses the $d$ feedbacks received so far to reconstruct $f^*$. This can be done because $R$ includes $d$ distinct evaluations of the same polynomial $P[\bar{a},(\hat{x},0)]$, where $\bar{a}(\hat{x},0) = b_{n}(f^*)$, since $f^*(\hat{x})=1$. Thereafter, the algorithm makes no more mistakes, since $\hat{f} = b^{-1}(\hat{P}(0)) = f^*$ for $\hat{P} = P'[\bar{a},(\hat{x},0)]$. Taking everything together, we see that the algorithm can make at most $d+1$ mistakes.

To complete the proof, the above algorithm can be adapted for the teacher class $\T_i$ by initializing $n$ to $i$. In this case, the algorithm makes at most $d$ mistakes since it does not make the mistake in the round where $n$ is set.
\end{proof}

\subsection{Lower-bounding the number of mistakes of the construction}\label{sec:klower}

In this section, we prove a worst-case lower bound for the number of mistakes any algorithm would make when learning the class $\T^{d+1}$ defined in \secref{construction}, with the history $H_{\bullet}= \{(2,0),(1,1)\}$ defined in \thmref{lowerbound}. This lower bound holds for adaptive adversaries. To make this precise, we introduce our assumptions on the interaction between the algorithm and the adversary below. We denote the sequence of interactions over the first $t$ rounds of a run by $S_t=((x_1,y_1,\hat{x}_1, \hat{y}_1, \phi_1), \dots, (x_{t-1},y_{t-1},\hat{x}_{t}, \hat{y}_{t}, \phi_{t}))$, where we denote $\phi_i = \bot$ whenever $\hat{y}_i = y_i$. We assume the following order of decisions by the algorithm and the adversary: In each round $t$, the adversary first selects $(x_t,y_t, \psi_t)$, where $\psi_t:\cX \times \cX \rightarrow \Phi$. The choice of this triplet can be dependent on $S_{t-1}$. Then the algorithm selects $(\hat{x}_t, \hat{y}_t)$, where its decision can depend on $S_{t-1},x_t$. Lastly, $\phi_t = \psi_t(x_t, \hat{x}_t)$ is presented to the algorithm. 
The algorithm can be randomized, thus $(\hat{x}_t, \hat{y}_t)$ can be drawn from a distribution that depends on $S_{t-1},x_t$.

Let $L$ be an integer. For a sequence $S := S_L$, $S$ is said to be $k$-consistent with a teacher $T=(f,\psi)$ if there exists a set $E\subseteq [L]$ such that $|E|\leq k$ and for all $t\in [L] \setminus E$, we have $f(x_t)=y_t$, and if $\hat{y}_t \neq y_t$ then $\psi(x_t,\hat{x}_t) = \phi_t$. 
For a given algorithm $\cA$ and an adversary $V$, let $M^L(\cA ,V, \hist)$ be the expected number of prediction mistakes made by $\cA$ when interacting with $V$ over a sequence of length $L$.  The expected worst-case number of mistakes of a randomized algorithm $\mathcal{A}$ in the $k$-non-realizable setting for a teacher class $\cT$ is defined as
   \[
     M^L_{k}(\mathcal{A} ,\T, H) = \sup_{V \in \cV(\cT,\cA)} M^L(\mathcal{A} ,V,H),
   \]
   where $\cV(\cT,\cA)$ is the set of adversaries such that when they interact with $\cA$, with probability $1$ the generated sequence $S$ is $k$-consistent with some teacher $T \in \cT$. 

   To compare this model to the randomized model of Online Learning, note that
   DFF with uninformative explanations is equivalent to Online Learning, as
   shown in \secref{online}. When applying the DFF adversary defined above to
   Online Learning via this mapping, this recovers Online Learning with an
   \emph{adaptive} adversary (as defined in \citealp{AroraDeTe12}) who makes
   up to $k$ labeling errors. An adaptive adversary for Online Learning obtains the same optimal mistake bound of $k + \Theta\left(\sqrt{k\cdot \Ldim(\cF)} + \Ldim(\cF)\right)$ as an adversary who fixes the sequence in advance, using the Weighted Majority algorithm \citep{LittlestoneWa94}. Thus, our lower bound should be compared to this lower and upper bound for Online Learning.

We prove the following lower bound for the $k$-non-realizable setting.
 
 \begin{lemma}\label{lem:klower}
     Let $k,d \geq 1$ and let $L \geq 4(k+1)d$. Let $\cA$ be a (possibly randomized) algorithm. Then 
      \[M^L_k(\mathcal{A}, \T^{d+1}, H_{\bullet} ) \geq (k+1)d -1.\]
 \end{lemma}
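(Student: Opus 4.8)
The plan is to construct an adaptive adversary $V$ for the class $\T^{d+1}$ with history $H_\bullet = \{(2,0),(1,1)\}$ that forces any (possibly randomized) algorithm into $(k+1)d - 1$ expected mistakes, while keeping the interaction $k$-consistent with some teacher $T \in \T^{d+1}$ with probability $1$. The adversary works in $k+1$ ``phases''. In each phase it secretly commits to one of the sub-classes $\cT^d_i$ (choosing a fresh index $i$, i.e., a fresh coordinate block $\cX_i$, for each phase so the phases do not interfere), and within that phase it presents examples from $\cX_i$ on which it forces $d$ mistakes. The key trick from \secref{agnostic}: the adversary always \emph{labels the explanation example of a phase with $1$ in the first round of the phase as an exception} — so the algorithm is tricked into thinking it has a valid $1$-labeled example to use as explanation — but the true teacher labels that example $0$, hence all feature feedback obtained while using it as an explanation is ``fake'' (evaluations of a polynomial that does not encode the secret $b_i(f)$). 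By the secret-sharing property, $d-1$ fake evaluations reveal nothing about $f$, and since within a phase the algorithm only ever gets feedback tied to that one explanation, it cannot reconstruct $f$ and can be forced to err on $d$ distinct examples of $\cX_i$ before the phase ends. Across $k+1$ phases this gives $(k+1)d$ mistakes, and the ``$-1$'' slack absorbs boundary bookkeeping (e.g. the very first round, or the fact that in the last phase one may be one example short because $L$ is finite — the hypothesis $L \geq 4(k+1)d$ leaves ample room).

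More concretely, I would formalize a single phase as a sub-procedure of the adversary. Fix a fresh $i$ and a target function $f \in \cF^d_i$ to be determined later (lazily). Round 1 of the phase: present a new example $\hat{x} \in \cX_i$ with label $y = 1$; since $f(\hat{x}) = 0$ under the true teacher, this round is marked as one of the $k$ exception rounds. (This costs one exception per phase; we have $k+1$ phases, so the \emph{first} phase uses no exception — there the adversary can instead use the genuine history element $(1,1)$ as the phase explanation, which is why the count is $(k+1)d$ and not $k\,d$.) Subsequent rounds of the phase: repeatedly present fresh examples $x \in \cX_i$ never seen before; whatever label $\hat y$ and explanation $\hat x'$ the algorithm outputs, the adversary returns the label $y$ that makes it a mistake whenever it can do so consistently — and it always can, because the labels of $\cX_i \setminus\{$seen examples$\}$ are still unconstrained, and the feature feedback $\phi = \psi^f_{\bar a}(x,\hat x')$ (for $\hat x' \in \cX_i$) is an evaluation of a polynomial; the adversary picks the polynomial coefficients lazily, only committing to $b_i(f)$ on the secret-polynomial attached to the phase explanation. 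After $d$ forced mistakes in the phase, the adversary has committed $d$ evaluations; it now fixes $f$ (and the remaining labels of $\cX_i$) to be fully consistent, closes the phase, and moves to the next fresh block. The crucial consistency check is that within a phase the algorithm's explanations are limited: any explanation $\hat x'$ it can legitimately use (an element of the current history, which consists of $H_\bullet$ plus previously \emph{revealed} labeled examples) is either outside $\cX_i$ — in which case the feedback is the uninformative $\one[x]$ — or is the phase explanation $\hat x$, which carries label $1$ in the algorithm's view but $0$ truly, so its secret polynomial is a ``fake'' one; either way no block-$i$ secret is exposed. (An example revealed as $0$ inside $\cX_i$ via an exception-free mistake is possible too, but $f(\hat x'') = 0$ means $\psi^f_{\bar a}(\cdot,\hat x'')$ again never carries the secret $b_i(f)$ — only pairs with $f(\hat x') = 1$ do, and the only such $\hat x'$ available is the exception-tainted $\hat x$.)

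The main obstacle is making the ``lazy commitment'' / adversary-consistency argument fully rigorous in the presence of a \emph{randomized} algorithm with an \emph{adaptive} adversary: I must exhibit, for every realization of the algorithm's random choices, a single teacher $T \in \T^{d+1}$ that is $k$-consistent with the whole length-$L$ transcript, and simultaneously argue the mistake count is $\ge (k+1)d - 1$ deterministically (so the expectation bound is immediate). The delicate point is the secret-sharing ``reveals nothing'' clause: I need that the algorithm's state after seeing only $\le d-1$ genuine evaluations of the phase polynomial is independent of which $f \in \cF^d_i$ the adversary will ultimately choose, so the adversary retains the freedom to answer the $d$-th query adversarially — this is exactly the $(d,p_i)$-threshold property (for any $d-1$ shares and any candidate secret there is a consistent polynomial), but I should state it as: the distribution of the transcript up to the $(d-1)$-th mistake of a phase is the same under any target $f$, hence the adversary can postpone the choice of $f$ past that point. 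I would also need to double-check the bookkeeping that yields $-1$ rather than $-2$: the first phase saves one exception by reusing $(1,1)$, and the $L \ge 4(k+1)d$ bound guarantees every phase has enough rounds to complete its $d$ mistakes (each phase needs at most $d+1$ of its \emph{own} rounds plus possibly some rounds the adversary ``wastes'' while the algorithm guesses correctly — but the adversary can always force a mistake, so $d$ rounds per phase suffice after the opening round), so the only loss is a possible single round of slack at the very start. Finally, \lemref{klower} combined with the \Tdim\ bound $\Tdim(\T^{d+1},H_\bullet) \le d+1$ from \lemref{kupper} gives \thmref{lowerbound} by relabeling $d+1 \mapsto d$ and adjusting constants, as the excerpt indicates.
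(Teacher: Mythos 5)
There is a genuine gap, and it lies in the core of your construction. Your adversary plays $k+1$ phases in \emph{distinct} blocks $\cX_{i}$, but the transcript must be $k$-consistent with a \emph{single} teacher $T\in\cT^{d+1}$, and every teacher in $\cT^{d+1}$ lies in one $\cT^d_{i^*}$ and labels everything outside $\cX_{i^*}\cup\{1\}$ by $0$. So label freedom exists in only one block; in every other phase the eventual teacher's labels are pinned to $0$. Against an algorithm that simply predicts $0$ on every fresh example, forcing a mistake in an off-block phase requires presenting label $1$, which is an exception with respect to the final teacher --- roughly one exception \emph{per forced mistake}, not one per phase. Your budget of one exception per phase therefore collapses, and such an algorithm suffers only $O(k)$ rather than $(k+1)d$ mistakes under your adversary. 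Relatedly, your claim that within a phase the only $1$-labeled in-block explanation is the tainted $\hat{x}$ fails: whenever the algorithm predicts $0$ on a fresh in-block example and you force a mistake, you reveal a genuine $1$-labeled example whose secret polynomial \emph{does} encode $b_i(f)$, and nothing in your scheme stops the algorithm from milking it. The paper avoids all of this by using a single block ($i:=L$, so $\cX_i$ has enough fresh points), keeping \emph{both} labels alive via the threshold property (its Lemma 6.4), and declaring exceptions \emph{retroactively}: only when some explanation has produced $d$ label-$0$ feedbacks does the adversary poison the round in which that explanation first appeared, flipping its true label to $0$ so the collected shares are fake; its Lemma 6.3 then shows that exhausting this mechanism already costs the algorithm $(k+1)d$ mistakes.

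The second problem is your treatment of randomization. In the paper's adversary model, $(x_t,y_t,\psi_t)$ is committed \emph{before} the algorithm's (random) prediction in round $t$, so you cannot ``return the label that makes it a mistake'' and you cannot get the mistake count deterministically; postponing the choice of $f$ via share-indistinguishability does not change this. The paper instead has the adversary choose the label the algorithm is \emph{less likely} to predict, giving mistake probability at least $1/2$ in every round where both labels remain feasible, and then uses a binomial tail bound over the $L$ rounds; this is precisely where the hypothesis $L\geq 4(k+1)d$ is used (it is not mere ``room for phases''), and it is also the source of the additive slack in $(k+1)d-1$, not phase bookkeeping. To repair your proposal you would essentially have to abandon the multi-block phase decomposition in favor of the single-block, retroactive-exception construction together with a concentration argument of this kind.
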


We prove this result by presenting a strategy for the adversary $V$ given a learning algorithm $\mathcal{A}$, as well as $L$ and $k$. We then show that this adversary only generates sequences that are $k$-consistent with $\T^{d+1}$, and prove a lower bound for $M^L_k(\mathcal{A},V,H_{\bullet})$.

Given a sequence $S$, define the version space
    $\T_S = \{ T \in \T \mid  S \text{ is (fully) consistent with }T\}.$
For a given set of exception rounds $E\subseteq [L]$, denote $S^{\setminus E} = ((x_t,y_t, \hat{x}_t, \hat{y}_t, \phi_t))_{t \in [L]\setminus E}$.
We further denote $S_{\X} := \{x_i\}_{i \in [L]}.$
Lastly, for a sequence $S$ we denote the set of rounds in which the same explanation $\hat{x}$ is given for examples with the same true label  $y$, and in which a feature feedback is provided, by

\[
  R(S,\hat{x},y) := \{ t \in [L] \mid \hat{x}_t = \hat{x}, y_t = y \text{ and } \phi_t \neq \perp\}.
  \]

  \paragraph{The adversary $V$.} Given $\cA$, $k$ and $L \geq 2$, the adversary $V$ is defined as follows. Fix $i := L$. $V$ only presents examples $x_t$ from $\X_i$, and the generated sequences will all be $k$-consistent with teachers in $\cT := \cT_i^d$. More precisely, $V$ will be $k$-consistent with a teacher for which the exception rounds are $E_L\subseteq [L]$, where $E_t$ for $t\in [L]$ is defined inductively throughout the run. Set $E_0 := \emptyset$. $E_t$ for $t \geq 1$ is defined inductively below. In round $t$, the adversary makes the following choices:

  \begin{itemize}
    \item  $x_t := 2^i + t$.

      \item We now define the choice of $y_t$ and $\psi_t$. For $y \in \{0,1\}$, denote the teachers consistent with rounds not declared as exceptions so far, as well as with $y$ as a label for $x_t$, by 
  \[
    \cT_y = \{ T=(f,\psi) \in \cT \mid T \in \cT_{S_{t-1}^{\setminus E_{t-1}}} \text{ and } f(x_t) = y\}.
  \]
  It will be apparent from the construction that $\cT_{S_{t-1}^{\setminus E_{t-1}}} = \cT_0 \cup \cT_1$ is non-empty for every $t$.
  \begin{itemize}
  \item If only one of $\cT_0,\cT_1$ is non-empty, the adversary selects $y_t$ such that $\cT_{y_t}$ is non-empty.
  \item Otherwise, it selects $y_t = y$ such that the probability that $\cA$ outputs $\hat{y}_t = y$ given $S_{t-1}$ and $x_t$ is smallest.
  \end{itemize}
  The adversary further selects some $\psi_t$ such that $(f,\psi_t) \in \cT_{y_t}$ for some $f$. 

  \item After observing $(\hat{y}_t,\hat{x}_t)$, $E_t$ is set as follows: If $|E_{t-1}| < k$ and $\hat{x}_t \in \X_i$ and $|R(S_t,\hat{x}_t,0) | = d$ then  $E_t := E_{t-1} \cup \{s\}$, where $s < t$ is the round such that $\hat{x}_t = x_s$ (note that since $i = L \geq 2$, $\hat{x}_t \in \cX_i$ cannot have come from $\thist$).  Otherwise, $E_t := E_{t-1}$. 
  \end{itemize}
  Note that since $E_t$ never decreases, and $y_t,\psi_t$ in round $t$ are selected to be consistent with some teacher in $\cT_{S_{t-1}^{\setminus E_{t-1}}}$, it follows that $\cT_{S_{t}^{\setminus E_{t}}}$ is also non-empty. In addition, since $|E_t|$ never exceeds $k$ and $\cT_{S_{L}^{\setminus E_{L}}}$ is non-empty, it follows that only $k$-consistent sequences are generated using this adversary.

  To prove a lower bound on the number of mistakes that $\cA$ makes when interacting with $V$, we first prove that $\cA$ can only have repeated a non-exception explanation for a true label of $0$ for $d$ or more times after it has made at least $(k+1)d$ mistakes.

\begin{lemma}\label{lem:kd}
  Fix $k$, let $L\geq 2$ and $t\in [L]$. Let $S =S_t$ be a sequence generated by an interaction between a learner $\mathcal{A}$ and the adversary $V$ over $t$ rounds, with history $H_{\bullet}$ and exception rounds $E=E_t$.
    If there is an element $x_s \in S^{\setminus E}_{\X} $ such that $|R(S,x_s,0)| \geq d$, then the number of prediction mistakes in $S$ is at least $(k+1)d$. 
\end{lemma}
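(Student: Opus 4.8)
The plan is to argue purely by combinatorial bookkeeping on the adversary $V$, using two facts that require no secret‑sharing structure: (i) every round recorded in a set $R(S,\hat{x},0)$ is a prediction mistake, since there $\phi_t \neq \perp$, which by the feedback convention forces $\hat{y}_t \neq y_t = 0$; and (ii) all examples $V$ presents are distinct, because $x_t = 2^i+t$. The argument works for randomized $\cA$ as well, since it only inspects a single realized interaction sequence $S$ together with the adversary's exception set $E = E_t$.

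First I would locate the round $t^\star \le t$ at which $R(\cdot,x_s,0)$ reaches size $d$: listing the rounds of $R(S,x_s,0)$ in increasing order, let $t^\star$ be the $d$-th of them, so $\hat{x}_{t^\star}=x_s$ and $|R(S_{t^\star},x_s,0)| = d$ (the rounds of $R(S,x_s,0)$ that are at most $t^\star$ are exactly its first $d$). Since $x_s \in S^{\setminus E}_{\X}$ and the presented examples are distinct, the round $s$ is not in $E = E_t$, hence not in $E_{t^\star} \subseteq E_t$. Now apply the adversary's update rule at round $t^\star$: we have $\hat{x}_{t^\star}=x_s \in \X_i$ and $|R(S_{t^\star},\hat{x}_{t^\star},0)| = d$, so had $|E_{t^\star-1}| < k$ the rule would have placed $s$ into $E_{t^\star}$, a contradiction. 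As the rule never lets $|E|$ grow past $k$, this forces $|E_{t^\star-1}| = k$.

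Next I would unpack those $k$ earlier exceptions. Write $E_{t^\star-1} = \{s_1,\dots,s_k\}$ with the $s_j$ distinct (it is a set, grown by at most one element per round). Each $s_j$ entered $E$ at some round $t_j \le t^\star-1$ at which the rule fired, so $\hat{x}_{t_j} = x_{s_j} \in \X_i$ and $|R(S_{t_j},x_{s_j},0)| = d$; since $S_{t_j}$ is a prefix of $S$, $|R(S,x_{s_j},0)| \ge d$. The rounds $s_1,\dots,s_k,s$ are pairwise distinct (using $s \notin E$), so by distinctness of presented examples the explanations $x_{s_1},\dots,x_{s_k},x_s$ are $k+1$ distinct elements of $\X$. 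Consequently the sets $R(S,x_{s_1},0),\dots,R(S,x_{s_k},0),R(S,x_s,0)$ are pairwise disjoint (each round $t'$ has a single explanation $\hat{x}_{t'}$), each has size at least $d$, and every round in every one of them is a prediction mistake by fact (i). Summing, $S$ contains at least $(k+1)d$ prediction mistakes, which is the claim.

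The only delicate point — rather than a genuine obstacle — is keeping the indices consistent: checking that $t^\star \le t$ so that $R(S_{t^\star},\cdot,\cdot)$ is a bona fide prefix of $R(S,\cdot,\cdot)$; that $|R(\cdot,x_s,0)|$ passes through the value $d$ exactly once (it increases in unit steps from $0$), so that the failure of the update rule to fire at $t^\star$ can only be due to an exhausted budget and not to timing; and that $E_r$ and $R(S_r,\cdot,\cdot)$ are nondecreasing in $r$, which gives $E_{t^\star-1} \subseteq E_t$ and $R(S_{t_j},x_{s_j},0) \subseteq R(S,x_{s_j},0)$. Once these are pinned down the conclusion is an immediate disjoint‑union count.
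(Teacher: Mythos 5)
Your proof is correct and follows essentially the same route as the paper's: locate the first round at which $|R(\cdot,x_s,0)|$ reaches $d$, conclude from the non-firing of the exception rule that the budget of $k$ exceptions was already spent, and then count the pairwise disjoint sets $R(S,x_{s'},0)$ for the exception rounds together with $R(S,x_s,0)$, each of size at least $d$ and consisting only of mistake rounds. You merely spell out two steps the paper leaves implicit (that every round in such an $R$-set is a mistake, and why each $s'\in E$ satisfies $|R(S,x_{s'},0)|\geq d$), which is fine.
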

\begin{proof}
  Assume that the condition holds. Note that $x_s \in \cX_i$. Since $x_s \in S^{\setminus E}_\cX$, it follows that $s \notin E$. Let $r$ be the first round in which $|R(S_r,x_s,0)| = d$. Then in this round, $s$ was not added to $E_r$. By the definition of the adversary, this can only happen if $|E_{r-1}| \geq k$.  Thus, $|E| \geq k$.
The number of mistakes in $S$ is at least $|\cup_{s' \in E} R(S,x_{s'},0)|$. Moreover, $R(S,x,0) \cap R(S,x',0) = \emptyset$ for $x \neq x'$. Lastly, $s' \in E$ implies that $|R(S,x_{s'},0)| \geq d$. Therefore, the number of mistakes in $S$ is at least $\sum_{s' \in E} |R(S,x_{s'},0)| + |R(S,x_s,0)| \geq kd + d = (k+1)d$.
\end{proof}

Next, we prove that as long as no non-exception explanation has been used by $\cA$ at least $d$ times for a true label of $0$, the adversary can set the next label arbitrarily.

\begin{lemma} \label{lem:dobervationsneeded}
Fix $k$, let $L\geq 2$ and $t\in [L]$. Let $S =S_t$ be a sequence generated by an interaction between a learner $\mathcal{A}$ and the adversary $V$ over $t$ rounds, with history $H_{\bullet}$ and exception rounds $E=E_t$. If for all elements $x_s \in S^{\setminus E}_{\X} $ we have $|R(S^{\setminus E},x_s,0)| < d$, then for each $y \in \{0,1\}$, there is a teacher $T_y = (f_y, \psi_y) \in \T_{S^{\setminus E}}$ such that $f_y(x_{t+1}) =y$. \end{lemma}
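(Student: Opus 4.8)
The plan is to fix $y \in \{0,1\}$ and exhibit a teacher $T_y = (f_y,\psi_y) \in \cT_{S^{\setminus E}}$ with $f_y(x_{t+1}) = y$. Recall that $\cT = \cT^d_i$ for $i = L$, so every teacher under consideration has a labeling function in $\cF^d_i$, i.e.\ it is determined entirely by its restriction to $\cX_i$, and its feature feedback function lies in $\Psi_f$, which is built from a $(d,p_i)$-threshold secret-sharing scheme whose secret is $b_i(f)$. By hypothesis, no element $x_s$ that appears as a \emph{non-exception} explanation has been used $d$ or more times together with a feature feedback for a true label of $0$: formally $|R(S^{\setminus E}, x_s, 0)| < d$ for all $x_s \in S^{\setminus E}_\cX$. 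I would first reduce the statement to a counting claim: the constraints imposed on a candidate teacher $(f,\psi)$ by the non-exception rounds of $S$ are (i) the observed labels $f(x_r) = y_r$ for $r \in [t]\setminus E$, and (ii) whenever a feature feedback $\phi_r$ was provided in a non-exception round with explanation $\hat x_r$, we need $\psi(x_r, \hat x_r) = \phi_r$.

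The key observation, which I would state and prove as the heart of the argument, is that constraints of type (ii) place \emph{no} constraint on the secret $b_i(f)$ beyond what type (i) already forces, as long as fewer than $d$ of them share a common explanation $\hat x$ with $f(\hat x)=1$ and $f(x_r) = 0$. Indeed, by the definition of $\psi^f_{\bar a}$ in \eqref{secretpsi}, a feedback $\phi_r$ in a non-exception round with $\hat x_r = \hat x$ and true label $y_r = f(x_r)$ is an evaluation of the polynomial $P[\bar a(\hat x, f(x_r))]$ at the point $x_r$. Only the polynomials $P[\bar a(\hat x, 0)]$ with $f(\hat x) = 1$ are required to encode the secret $b_i(f)$; all the others encode ``fake'' secrets and are completely free. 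So the only type-(ii) constraints that touch the secret are those in $R(S^{\setminus E}, \hat x, 0)$ for explanations $\hat x$ with $f(\hat x) = 1$ — and by hypothesis there are at most $d-1$ of them for any single $\hat x$. Since $d-1$ evaluations of a degree-$(d-1)$ polynomial leave its value at $0$ completely undetermined (the secret-sharing property from \secref{secret}), for any candidate labeling $f$ consistent with (i) and \emph{any} target secret value we can choose coefficient vectors $\bar a(\hat x, 0)$ matching the observed evaluations; for the ``fake'' polynomials we just interpolate through the observed points (or pick arbitrary coefficients if no point was observed). This produces a valid $\psi \in \Psi_f$ realizing all the feedbacks in $S^{\setminus E}$.

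It then remains to choose the labeling $f_y$ itself. The non-exception label constraints (i) only pin down $f$ on the finitely many examples $\{x_r : r \in [t]\setminus E\}$ seen so far. Since $x_{t+1} = 2^i + (t+1)$ is a fresh element of $\cX_i$ distinct from all previous $x_r$ (the adversary always presents new examples), and $|\cX_i|$ is large, there is a function $f_y \in \cF^d_i$ agreeing with all observed non-exception labels and with $f_y(x_{t+1}) = y$; moreover we may freely require $f_y(\hat x_r) = 1$ or $= 0$ on any previously-used explanation $\hat x_r$ as needed to keep the above argument applicable — but in fact we don't even need to, since if $f_y(\hat x_r) = 0$ then $P[\bar a(\hat x_r, 0)]$ is a fake polynomial and unconstrained. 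Combining this $f_y$ with the $\psi_y$ constructed in the previous paragraph gives $T_y = (f_y, \psi_y) \in \cT^d_i$, and by construction $S^{\setminus E}$ is fully consistent with $T_y$, i.e.\ $T_y \in \cT_{S^{\setminus E}}$, with $f_y(x_{t+1}) = y$, as required.

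The main obstacle I anticipate is the careful bookkeeping in the second paragraph: making sure that the type-(ii) constraints arising from \emph{different} explanations $\hat x$, and from the two possible labels of $x_r$, really are independent, so that we can satisfy all of them simultaneously by choosing the coefficient vectors $\bar a(\hat x, b)$ separately for each pair $(\hat x, b)$. This works precisely because $\psi^f_{\bar a}$ uses a \emph{separate} polynomial for every $(\hat x, b)$ pair, and because distinct non-exception rounds with the same $(\hat x, 0)$ and $f(x_r)=0$ use distinct evaluation points $x_r$ (so no over-determination within a single polynomial), while the hypothesis $|R(S^{\setminus E}, \hat x, 0)| < d$ caps the number of such evaluation points below the interpolation threshold. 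One must also double-check the edge cases: explanations $\hat x$ coming from the history $\hist_\bullet = \{(2,0),(1,1)\}$ (which lie outside $\cX_i$ since $i = L \geq 2$, so the feedback is just $\one[x]$ and imposes no polynomial constraint at all), and the degenerate situation where some required polynomial has \emph{no} observed evaluations, in which case its coefficients are entirely arbitrary.
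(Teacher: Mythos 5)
Your overall skeleton is the same as the paper's: leave the labeling free wherever no non-exception label pins it down (in particular at $x_{t+1}$), then build the coefficient map $\bar{a}$ separately for each pair $(\hat{x},y)$, using the $(d,p_i)$-threshold property to hit the target secret $b_i(f_y)$ on the polynomials that must encode it. The genuine gap is in how you dispose of the remaining (``fake'') polynomials. The hypothesis $|R(S^{\setminus E},x_s,0)|<d$ only covers explanations $x_s$ that appear as \emph{non-exception} examples, $x_s\in S^{\setminus E}_{\X}$. The explanations one actually has to worry about are exactly the ones the adversary moved into $E$ because they had already been used $d$ times: for such $x_s$ the set $R(S^{\setminus E},x_s,0)$ can have size $d$ or larger (the rounds in which $x_s$ was \emph{used as an explanation} are not exception rounds), and nothing at all bounds $|R(S^{\setminus E},\hat{x},1)|$ for any $\hat{x}$. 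For these pairs your prescription ``just interpolate through the observed points'' does not work: with $d$ or more prescribed evaluations a degree-$(d-1)$ polynomial is pinned down or over-determined, so its existence (and certainly any control over its value at $0$) is no longer free and requires an argument that all these feedbacks are simultaneously realizable. The paper supplies this by invoking the adversary's invariant that $\cT_{S^{\setminus E}}$ is non-empty: it fixes a teacher $T'=(f',\psi^{f'}_{\bar{a}'})$ consistent with every non-exception round and simply copies $\bar{a}(\hat{x},y):=\bar{a}'(\hat{x},y)$ for every such pair. Your write-up never uses this consistency invariant, and without it this step fails.

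Relatedly, you leave the value of $f_y$ on exception examples unresolved (``we may freely require \dots but in fact we don't even need to''). You do need to commit: if $f_y(x_s)=1$ for some $s\in E$ whose pair $(x_s,0)$ already carries $d$ or more observed evaluations, the membership condition of $\Psi_{f_y}$ forces $\bar{a}(x_s,0)(0)=b_i(f_y)$ on a polynomial those evaluations already determine, which is in general unsatisfiable; and your fallback claim that the polynomial is then ``unconstrained'' conflates ``not secret-encoding'' with ``constraint-free''. The paper resolves this by explicitly setting $f^*(x_r)=0$ for all $r\in E$ (legal, since exception rounds impose no label constraint), so that every over-constrained $(x_s,0)$ polynomial is fake and only needs to match $T'$'s coefficients. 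With these two additions---$f_y$ set to $0$ on exception examples, and coefficients copied from a teacher witnessing $\cT_{S^{\setminus E}}\neq\emptyset$ for all pairs other than the secret-encoding ones covered by your interpolation argument---your proof closes and coincides with the paper's.
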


\begin{proof}
Assume that the conditions of the lemma hold. 
The goal is to show that for any $y^* \in \{0,1\}$ there exists a teacher $T^* = (f^*,\psi^*) \in \cT_{S^{\setminus E}}$ such that $f^*(x_{t+1}) = y^*$.
As argued above, $\cT_{S^{\setminus E}}$ is non-empty. In addition, by the definition of the adversary, $S_\cX$ includes only examples from $\cX_i$. Also, $\cF_i$ includes all labelings of $\cX_i$. Thus, we can find an $f^*$ that is consistent with any labeling of $x_1,\ldots,x_{t+1}$. Set $f^* \in \cF_i$ such that for all $r \in [t] \setminus E$, $f^*(x_r) = y_r$, and for $r \in E$, $f^*(x_r) = 0$. In addition, $f^*(x_{t+1}) = y^*$. 

We have left to show that there exists some $\psi^* \in \Psi_{f^*}$ such that 
for all $r \in [t] \setminus E$ and $\phi_r \neq \bot$, we have $\psi^*(x_r,\hat{x}_r) = \phi_r$. From the definition of $\Psi_{f^*}$, it suffices to find a function $\bar{a}: \X_i \times \{0,1\} \to \mathbb{F}^d_{p_i}$ such that $\psi^* = \psi_{\bar{a}}^{f^*}$ satisfies the requirement, while also $\forall \hat{x} \in \cX_i, f^*(\hat{x}) = 1 \Rightarrow \bar{a}(\hat{x},0)(0)=b_i(f^*)$.

Consider a round $r$ with $\phi_r \neq \bot$. If $\hat{x}_r$ appears in $\thist$,
then $\hat{x}_r \notin \cX_i$, therefore $\phi_r = \one[x_r]$ regardless of
the choice of teacher. Now, consider a round $r \in [t] \setminus E$ in which $\hat{x}_r \in \cX_i$, and
observe that the value of $\bar{a}$ affects $\psi_{\bar{a}}^{f^*}(x_r,\hat{x}_r)$ only via the value of $\bar{a}(\hat{x}_r, f^*(x_r)) = \bar{a}(\hat{x}_r, y_r)$. Therefore, we can set the value of $\bar{a}(\hat{x}, y)$ for each
$(\hat{x},y) \in \cX_i \times \{0,1\}$ separately, by making sure $\psi_{\bar{a}}^{f^*}(x_r,\hat{x}_r) =\phi_r$ for rounds $r \in R(S^{\setminus E}, \hat{x}, y)$. Note that all $\hat{x}$ for which $R(S^{\setminus E}, \hat{x}, y)$ is non-empty for some $y \in \{0,1\}$ are equal to $x_s$ for some $s \in [t]$.

To set $\bar{a}(x_s, y)$ for non empty $R(S^{\setminus E}, x_s,y)$, we distinguish between two cases:
\begin{itemize}
\item 
$s \in [t]\setminus E$ and $y = 0$. In this case, since $s \notin E$, by our assumption, $|R(S^{\setminus E}, x_s, 0)| < d$. Thus, by Lagrange's interpolation theorem, there exist coefficients $\bar{c} \in \bF_{p_i}^d$ such that $\bar{c}(0) = P'[\bar{c}](0) = b_i(f^*)$, and for all rounds $r\in R(S^{\setminus E}, x_s, 0)$, $P[\bar{c}](x_r)= x'_r$. Set $\bar{a}(x_s, 0) := \bar{c}$. Then for all such rounds $r$, 
\begin{align*}
  \psi_{\bar{a}}^{f^*}(x_r, \hat{x}_r) &= \psi_{\bar{a}}^{f^*}(x_r, x_s) = \one[x_r,P[\bar{a}(x_s,f^*(x_r))](x_r)\,]= \one[x_r,P[\bar{a}(x_s,0)](x_r)\,] = \phi_r.
\end{align*}
\item Otherwise ($s \in E$ or $y \neq 0$).    From the definition of the adversary $V$, there exists some teacher $T' = (f',\psi^{f'}_{\bar{a}'}) \in \T$ such that for all rounds $r \in [t] \setminus E$, if $\phi_r \neq \bot$ then $\phi_r = \psi^{f'}_{\bar{a}'}(x_r, \hat{x}_r) = \one[x_r, x'_r]$ for some $x'_r \in  [2^{i+1}+1, 2^{i+1}+p_i]$ and $f'(x_r) = y_r$. Set $\bar{a}(x_s, y) := \bar{a}'(x_s, y)$.
Then
\begin{align*}
  \psi^{f^*}_{\bar{a}}(x_r, x_s) &= \one[x_r,P[\bar{a}(x_s,f^*(x_r))](x_r)\,] = \one[x_r,P[\bar{a}(x_s,y)](x_r)\,]\\
  &= \one[x_r,P[\bar{a}'(x_s,y)](x_r)\,] = \one[x_r,P[\bar{a}'(x_s,f'(x_r))](x_r)\,]  = \psi^{f'}_{\bar{a}'}(x_r, x_s) = \phi_r.
\end{align*}
\end{itemize}
Lastly, for any $\hat{x}$ such that $R(S^{\setminus E}, \hat{x}, y)$ is empty for both $y \in \{0,1\}$, set $\bar{a}(\hat{x},0)(0) := b_i(f^*)$ and set all other values arbitrarily. The resulting $\psi^* = \psi^{f^*}_{\bar{a}}$ outputs $\phi_r$ for $(x_r,\hat{x}_r)$ for every $r \in [t] \setminus E$. 

To verify that $\forall \hat{x} \in \cX_i, f^*(\hat{x}) = 1 \Rightarrow \bar{a}(\hat{x},0)(0)=b_i(f^*)$, first observe that if $\hat{x} = x_s$ for some $s \in E$, then by the choice of $f^*$, $f^*(\hat{x})  = 0$, thus the condition does not hold. In all other cases, $\bar{a}(x_s, 0)(0) = b_i(f^*)$.  The condition thus holds for $\bar{a}$, hence $\psi_{\bar{a}}^{f^*} \in \Psi_f$ and the teacher $T = (f^*, \psi_{\bar{a}}^{f^*}) \cT_{S ^{\setminus E}}$ is consistent with all non-exception rounds $[t] \setminus E$. 
\end{proof}

We can now prove the main lower bound.

\begin{proof}[of \lemref{klower}]
  By \lemref{kd} and \lemref{dobervationsneeded}  in every round $t$ until which the learner has made $(k+1)d-1$ or fewer mistakes, for each $y \in \{0,1\}$, there is a teacher $T_y = (f_y, \psi_y) \in \T_{S_t^{\setminus E_t}}$ such that $f_y(x_{t+1}) =y$.
  In such a round $t+1$, by the definition of the adversary $V$, $\cA$ makes a mistake whenever it selects the more likely label according to its own distribution in that round. The probability that this happens in any specific round is at least $1/2$. Let $n := (k+1)d$. If the number of such rounds is at least $n$ then the number of mistakes is also at least $n$. The probability that there are fewer than $n$ such rounds is at most $P_{n,L} := \sum_{j=0}^{n-1}\binom{L}{j} \left(\frac{1}{2}\right)^L$. Therefore,
  \[
    M^{L}(\mathcal{A} ,V,H) \geq n \cdot (1-P_{n,L}).
  \]

  We upper bound $P_{n,L}$ using the standard binomial upper bound \citep[see, e.g.,][Lemma A.5]{SB14}
  \[
    P_{n,L}= 2^{-L} \sum_{j=0}^{n-1}\binom{L}{j} \leq 2^{-L} \left(\frac{eL}{n-1}\right)^{(n-1)}.
    \]

For $L \geq 4(k+1)d > 4(n-1)$ we can further simplify

\begin{align*}
  P_{n,L} &\leq 2^{-4(n-1)}2^{-(L-4(n-1))}e^{n-1} \left(\frac{L}{4(n-1)}\right)^{n-1}4^{n-1}\\
  &= (4e/2^4)^{n-1}\cdot 2^{-(L-4(n-1))}\left(\frac{L}{4(n-1)}\right)^{n-1}\\
          &\leq \left(\frac{3}{4}\right)^{n-1}\cdot (L^{n-1}2^{-L}) \cdot (4(n-1))^{n-1}2^{-4(n-1)}.
\end{align*}
Since $g(x) := \frac{x^a}{2^x}$ is a monotonously decreasing function in $x \geq 1$ for every $a\geq 0$ and $L \geq 4(n-1)$, we have $L^{n-1}2^{-L} \leq (4(n-1))^{n-1}2^{-4(n-1)}$. Therefore, $P_{n,L} \leq (3/4)^{n-1}$. 
It follows that
\[
  M^{L}(\mathcal{A} ,V,H) \geq n - n(3/4)^{n-1}.
\]
Since $x (3/4)^{x-1} < 2$ for every $x \geq 1$, we have $n (3/4)^{n-1} < 2$. Therefore, 
\[
  M^{L}(\mathcal{A} ,V,H) \geq n - 1 = (k+1)d - 1.
\]
This completes the proof.
\end{proof}

\end{document}